\theoremstyle{plain}
\newtheorem{theorem}{Theorem}[section]
\newtheorem{proposition}[theorem]{Proposition}
\theoremstyle{definition}
\newtheorem{definition}[theorem]{Definition}
\theoremstyle{remark}
\newtheorem{remark}[theorem]{Remark}
\newcommand{\sv}{\boldsymbol{\sigma}}
\newcommand{\mvec}{\textrm{vec}}
\title{A Probabilistic Basis for Low-Rank Matrix Learning}
\author{Simon Segert\thanks{Authors listed alphabetically.}\\ simonsegert@gmail.com\\ \And
        Nathan Wycoff$^*$\\ nwycoff@umass.edu\\ University of Massachusetts Amherst}
\begin{document}

\maketitle

\begin{abstract}

Low rank inference on matrices is widely conducted by optimizing a cost function augmented with a penalty proportional to the nuclear norm $\Vert \cdot \Vert_*$.
However, despite the assortment of computational methods for such problems, there is a surprising lack of understanding of the underlying probability distributions being referred to.
In this article, we study the distribution with density $f(X)\propto e^{-\lambda\Vert X\Vert_*}$,  finding many of its fundamental attributes to be analytically tractable via differential geometry.
We use these facts to design an improved MCMC algorithm for low rank Bayesian inference as well as to learn the penalty parameter $\lambda$, obviating the need for hyperparameter tuning when this is difficult or impossible. 
Finally, we deploy these to improve the accuracy and efficiency of low rank Bayesian matrix denoising and completion algorithms in numerical experiments.

\end{abstract}


\section{Introduction}

A physician searches for cancerous growths in her patient's lungs via medical imaging;
a psychologist theorizes about common traits underlying answers to a personality test;
a cybersecurity specialist inspects network traffic to detect anomalies. 
Each of these scenarios involves dealing with large matrices which cannot reasonably be modeled as an array of unrelated numbers: rather, each entry is a window into a common latent structure.
In such situations, the \textit{low rank} assumption can be quite fruitful.
The bevy of applications boosted by this assumption and the rich mathematics underlying the associated methods have lead to an explosion of academic research in this area over the past several decades (see, e.g., the review articles \citet{hu2021low,davenport2016overview,nguyen2019low}).

In the mathematics of machine learning, inference under these assumptions is expressed as a cost function $c$  over a matrix $X$ combined with a penalty term operationalizing the belief that the matrix $X$ is of low rank, most straightforwardly the rank function itself:
\begin{align}
    \underset{X\in\mathbb{R}^{m\times n}}{\min}
    c(X) + \lambda \textrm{rank}(X) \,.
\end{align}
However, the rank function is nonconvex and discontinuous, 
yielding a composite function which does not enjoy any continuous or convex structure which might be present in $c$. 
A popular alternative is thus to optimize a surrogate cost where the matrix rank function is replaced by the \textit{nuclear norm} (also known as the Schatten 1-norm, trace norm, or Ky Fan norm \citep{fan1951maximum}), which is given by the sum of the singular values of a matrix $\Vert X \Vert_* = \sum_{i=1}^{\mathrm{min}(n,m)} \sigma_i(X)$:
\begin{align}
    \underset{X\in\mathbb{R}^{m\times n}}{\min}
    c(X) + \lambda \Vert X\Vert_* \,.
\end{align}
As the convex envelope of the rank function \cite{fazel2001rank,fazel2002}, the nuclear norm is an effective surrogate which preserves any convexity in $c$, and as such, has seen considerable attention in terms of impactful applications \cite{chen2004recovering,ji2010robust,bell2010all,candes2015phase,shen2015joint,nguyen2019low}, innovative methods \cite{daubechies2004iterative,beck2009fast,liu2010interior,cai2010singular,mazumder2010spectral}, and impressive theoretical guarantees \cite{candes2008exact,candes2010power,recht2010guaranteed}.
Low rank structure is particularly useful for learning in data-limited scenarios or for discovering relationships in complex data.

There is a well known connection between penalized empirical loss minimization and \textit{Maximum a Posteriori} (MAP) Bayesian inference which is achieved when 
the prior is given by the negated and exponentiated penalty.
For instance, regression with the analog of the nuclear norm penalty for vectors, the $\ell_1$ penalty \cite{taylor1979deconvolution}, is famously equivalent to MAP inference with a Laplace prior \cite{tibshirani1996regression}. 
This connection helps build intuition about the behavior of $\ell_1$ penalized regression, suggests extensions through more sophisticated priors, and enables hierarchical Bayesian learning of penalty hyperparameters.

Despite the prevalence of the nuclear norm penalty in empirical risk minimization, there is little known about the associated probability distribution; namely, that with density proportional to $e^{-\lambda\Vert X \Vert_*}$.
Indeed, we are not aware of any article in the academic literature giving even its normalizing constant. 
In addition to the foundational theoretical import of such a result, it is also essential for Bayesian hierarchical.
For example, running an entire MCMC chain for a grid of $\lambda$ values is computationally prohibitive; this contrasts with the optimization case, which can take advantage of warm starts and fast convergence of quadratic methods.
Knowing the normalizing constant allows for Bayesian estimation of this hyperparameter.
The purpose of this article is to address these and other practical problems by establishing the basic theoretical properties of the distribution.




\section{Background}
\label{sec:background}

We review low rank inference via optimization and simulation, with a focus on previous sightings of this ``nuclear norm distribution" (NND).

While the nuclear norm function is nonsmooth, it has a simple proximal operator (see Appendix \ref{sec:prox} for an introduction to proximal operators), meaning that first order optimization methods can still be straightforwardly deployed on such problems.
Denoising under isotropic Gaussian noise can be performed in closed form using this framework \cite{bertero1998inverse}, and an iteration of this can perform effective and scalable matrix completion via the ISTA algorithm \cite{daubechies2004iterative,beck2009fast}.
\citet{segert2024flat} showed that the nuclear norm retains its analytical tractability in the context of multivariate regression.

Though the nuclear norm has the advantage of being a convex relaxation of the rank function, the log determinant $\log (|X^\top X| + \epsilon)$ is also sometimes used as a rank-reducing penalizer \cite{fazel2002}.
\citet{yang2018fast} studied the probabilistic foundations of this penalty, finding that its density could be written in terms of a Wishart mixture of normals.

Bayesian methods have also been proposed for low rank inference.
Many authors prefer to perform low rank Bayesian inference not directly on $X$, but rather on a hypothesized factorization of $X=UV^\top$, often with $U,V$ containing iid normal entries.
Several authors, particularly in Variational Bayesian inference, have previously noted a close connection between this ``normal product" distribution and the nuclear norm distribution \cite{srebro2004maximum,wipf2016analysis,kim2013variational}, but the exact nature of the relationship appears unknown.
Other distributions used for low rank inference include the singular matrix Gaussian \citep{yuchi2023bayesian}. 
Though they use the nuclear norm estimate as an initialization for their sampler, they do not mention the implied distribution.
See \citet{alquier2013bayesian} and \citet{babacan2012sparse} for a review of priors commonly used for low rank Bayesian inference.
The nuclear norm distribution is also mentioned by \citet{pereyra2016proximal}, who uses it to illustrate a general purpose MCMC algorithm applicable to nonsmooth functions. 



\section{Theoretical Analysis}
\label{sec:theory}
In what follows, we will consider matrices of size $n\times m$ and denote $\mathrm{NND}_{n,m}(\lambda)$ the Nuclear Norm distribution, that is, the distribution over $\mathbb{R}^{n\times m}$ with density $\propto e^{-\lambda \|X\|_*}$. We also assume throughout, with no loss of generality, that $n\leq m$.

\subsection{Basic Properties}
The very first preliminary property is whether $e^{-\lambda \|X\|_*}$ even corresponds to a valid probability density at all. Equivalently, is it true that $\int e^{-\lambda \|X\|_*}dX<\infty$? One easy way to see that this is true is to use the fact that all norms on a finite-dimensional Euclidean space are equivalent; thus for example $\|X\|_*\geq C\|X\|_{L1}$ for some constant $C$ independent of $X$, and therefore 
\begin{align}
    \int e^{-\lambda \|X\|_*}dX\leq \int e^{-\lambda C \|X\|_{L1}} dX  =\prod_{ij} \int e^{-\lambda C |X_{ij}|}dX_{ij}<\infty \,.
\end{align}
We will also determine the exact normalizing constant below, although the argument is more involved. 

We can already deduce several simple but useful properties at this point. 
\begin{proposition}
The Nuclear Norm distribution is symmetric under $O(n)\times O(m)$ where $O(n)$ is the general orthogonal group on $\mathbb{R}^n$. That is, if $X\sim \mathrm{NND}(\lambda)$ and $U$ and $V$ are orthogonal matrices, then $UXV\sim \mathrm{NND}(\lambda)$.
\end{proposition}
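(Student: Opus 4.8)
The plan is to reduce the statement to two independent invariance facts and combine them through the change-of-variables formula. Specifically, I would show (i) that the nuclear norm is unchanged by multiplying on the left and right by orthogonal matrices, and (ii) that the Lebesgue reference measure $dX$ on $\mathbb{R}^{n\times m}$ is preserved by the linear map $T:X\mapsto UXV$. Granting these, the pushforward of the NND density under $T$ equals the original density, which is precisely the assertion that $UXV\sim\mathrm{NND}(\lambda)$.

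First I would establish the orthogonal invariance of $\|\cdot\|_*$. Taking a singular value decomposition $X=P\Sigma Q^\top$ with $P\in O(n)$ and $Q\in O(m)$, the factorization $UXV=(UP)\,\Sigma\,(V^\top Q)^\top$ is again a valid SVD, since $UP$ and $V^\top Q$ remain orthogonal. Hence $UXV$ has exactly the same singular matrix $\Sigma$, and therefore the same singular values, as $X$; summing them gives $\|UXV\|_*=\|X\|_*$. Next I would verify that $T$ has unit Jacobian. Vectorizing via the standard identity $\mvec(UXV)=(V^\top\otimes U)\,\mvec(X)$ shows that, in vectorized coordinates, $T$ acts as multiplication by the Kronecker product $V^\top\otimes U$. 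Because the Kronecker product of orthogonal matrices is orthogonal (one checks $(V^\top\otimes U)(V^\top\otimes U)^\top=(V^\top V)\otimes(UU^\top)=I$), its determinant has absolute value $1$, so $T$ preserves Lebesgue measure.

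Finally I would combine these through the change of variables. Setting $Y=UXV=T(X)$, so that $X=U^\top Y V^\top=T^{-1}(Y)$, the density of $Y$ is $f_Y(Y)=f\bigl(T^{-1}(Y)\bigr)\,\lvert\det DT^{-1}\rvert=f(U^\top Y V^\top)$, using the unit Jacobian from step (ii). Applying the norm invariance from step (i) with the orthogonal matrices $U^\top$ and $V^\top$ gives $f(U^\top Y V^\top)\propto e^{-\lambda\|U^\top Y V^\top\|_*}=e^{-\lambda\|Y\|_*}$, so $Y$ has the NND density and $Y\sim\mathrm{NND}(\lambda)$.

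There is no serious obstacle here; the only step requiring any care is the Jacobian computation, which rests on the routine Kronecker-product fact above. It is worth noting that this argument presupposes the integrability established just before the statement, so that $f$ is a genuine normalized density and the constant of proportionality is the same for $X$ and $Y$.
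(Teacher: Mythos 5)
Your proof is correct and follows essentially the same route as the paper's: orthogonal invariance of $\|\cdot\|_*$, unit Jacobian of $X\mapsto UXV$, and the change-of-variables formula. The only difference is that you fill in the details the paper leaves as "easily seen" (the Kronecker-product computation of the Jacobian and the explicit SVD manipulation), which is fine.
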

\begin{proof}
For fixed orthogonal $U,V$, it is easily seen that the mapping $X\mapsto UXV$ has Jacobian determinant $\pm 1$ 
Moreover, it is a basic fact about SVD that $\|X\|_*=\|UXV\|_*$ for any $X$. Thus, the proposition follows from the standard change-of-variables formula. 
\end{proof}

The following proposition is also simple to prove, but provides a useful characteristic ``signature" of the distribution. 

\begin{proposition}
\label{prop:sum_gamma}
If $X\sim \mathrm{NND}(\lambda)$, then its nuclear norm $\|X\|_*$ is Gamma distributed with shape parameter $nm$ and rate parameter $\lambda$.
\end{proposition}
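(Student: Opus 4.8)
The plan is to exploit the single structural feature that makes this work: the nuclear norm is a genuine norm on the $nm$-dimensional vector space $\mathbb{R}^{n\times m}$, hence positively homogeneous of degree one, and the density depends on $X$ only through $\|X\|_*$. The specific geometry of the nuclear-norm ball turns out to be irrelevant; all that matters is the dimension $d=nm$ together with the homogeneity. Indeed, the same argument shows that for \emph{any} norm on $\mathbb{R}^d$, the distribution with density $\propto e^{-\lambda\|X\|}$ makes $\|X\|$ a $\mathrm{Gamma}(d,\lambda)$ random variable, so the content is really a radial-coordinates computation.

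First I would introduce the sublevel sets $B(r)=\{X:\|X\|_*\le r\}$ and their Lebesgue volumes $G(r)=\mathrm{vol}(B(r))$. The crucial observation is that the linear map $X\mapsto rX$ carries $B(1)$ bijectively onto $B(r)$ and has Jacobian determinant $r^{d}$, so that $G(r)=V\,r^{d}$ with $V=\mathrm{vol}(B(1))$. Consequently the pushforward of Lebesgue measure under $X\mapsto\|X\|_*$ has density $G'(r)=dV\,r^{d-1}$ on $(0,\infty)$.

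Next I would compute the law of $T=\|X\|_*$ directly. Because the integrand $e^{-\lambda\|X\|_*}$ is a function of $\|X\|_*$ alone, integrating it against Lebesgue measure reduces to an integral against this radial pushforward measure:
\[
\Pr(T\le t)=\frac{1}{Z}\int_{B(t)}e^{-\lambda\|X\|_*}\,dX=\frac{1}{Z}\int_0^t e^{-\lambda r}\,G'(r)\,dr=\frac{dV}{Z}\int_0^t e^{-\lambda r}r^{d-1}\,dr .
\]
Letting $t\to\infty$ identifies the normalizing constant $Z=dV\,\Gamma(d)/\lambda^{d}$, and differentiating in $t$ yields the density $p_T(t)=\frac{\lambda^{d}}{\Gamma(d)}\,t^{d-1}e^{-\lambda t}$, which is exactly the $\mathrm{Gamma}(d,\lambda)$ density with $d=nm$.

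The one point that requires care — the main obstacle — is justifying the radial reduction rigorously, namely that $\int_{B(t)}h(\|X\|_*)\,dX=\int_0^t h(r)\,dG(r)$ for the relevant $h$. This amounts to the statement that $G$ is the cumulative distribution function of the pushforward of Lebesgue measure under $\|\cdot\|_*$ restricted to a sublevel set, which follows from the definition of the pushforward together with the scaling identity $G(r)=Vr^{d}$; one may equivalently phrase it through the layer-cake formula or a coarea/polar-coordinates change of variables adapted to the norm. I would also remark that this computation simultaneously pins down $Z=dV\Gamma(d)/\lambda^{d}$ up to the purely geometric constant $V=\mathrm{vol}(B(1))$, which connects to the more involved determination of the normalizing constant promised later in the paper.
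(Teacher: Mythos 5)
Your proof is correct, but it takes a genuinely different route from the paper's. The paper establishes Proposition 3.2 by a moment-generating-function argument: homogeneity gives $C(\lambda)=\lambda^{-nm}C(1)$ for the normalizing constant, whence for $t<\lambda$ one has $\mathbb{E}e^{t\|X\|_*}=C(\lambda-t)/C(\lambda)=(1-t/\lambda)^{-nm}$, which is recognized as the Gamma$(nm,\lambda)$ MGF. You instead compute the law of $\|X\|_*$ directly: scaling gives $\mathrm{vol}(B(r))=Vr^{nm}$, so the pushforward of Lebesgue measure under $\|\cdot\|_*$ has density $nmV r^{nm-1}$, and integrating $e^{-\lambda r}$ against it produces the Gamma CDF and density in closed form. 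Both arguments hinge on exactly the same fact --- degree-one homogeneity of a norm on a space of dimension $nm$ --- and both apply verbatim to any norm on $\mathbb{R}^d$, a generality you rightly emphasize. What the paper's route buys is brevity: three lines, no geometric measure theory, though it identifies the law only indirectly through its MGF. What your route buys is an explicit density computation plus, as a by-product, the normalizing constant $Z=nmV\,\Gamma(nm)/\lambda^{nm}$ in terms of the volume $V$ of the unit nuclear-norm ball; this is equivalent to the paper's Proposition 3.3, which the paper proves separately via the coarea formula and expresses through the surface measure $\mathrm{Vol}_{nm-1}(S^*(1))$ instead (the two forms are reconciled by $V=\mathrm{Vol}_{nm-1}(S^*(1))/(nm\sqrt{\min(n,m)})$, the $\sqrt{\min(n,m)}$ arising as the Frobenius norm of the gradient of $\|\cdot\|_*$). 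Your justification of the radial reduction via the abstract change of variables for pushforward measures is sound; the only point left tacit is that $V<\infty$, which follows from equivalence of norms (the unit nuclear-norm ball is bounded), the same fact the paper invokes to establish integrability in the first place.
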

\begin{proof}
Let $C(\lambda):=\int e^{-\lambda \|X\|_*}dX$ denote the normalizing constant (which we know is finite by the above discussion). By homogeneity of the nuclear norm, it is easy to see that $C(\lambda)=\lambda^{-nm}C(1)$ for any $\lambda>0$. 

Now if $t<\lambda$, then the moment generating function of $\|X\|_*$ is given by 
\begin{eqnarray}
m(t)&:=&\mathbb{E}e^{t\|X\|_*}
 =  C(\lambda)^{-1}\int e^{t\|X\|_*}e^{-\lambda \|X\|_*dX} 
 =  C(\lambda)^{-1}C(-t+\lambda)\\
& = & \lambda^{nm}(-t+\lambda)^{-nm} =  (1-{\frac t {\lambda}})^{-nm} \,,
\end{eqnarray}
which is the MGF of the indicated Gamma distribution. 
\end{proof}

\subsection{Normalizing Constant}
We now state the result for the exact normalizing constant. The proof involves using the Coarea Formula (see Appendix \ref{sec:coarea} for a review) to decompose the integral over the slices $\{X: \|X\|_*=t\}$, and is given in Appendix \ref{sec:normconstproof} .
\begin{proposition}
\label{prop:density}
The normalizing constant $C(\lambda):=\int e^{-\lambda \|X\|_*}dX$ is given by 

\begin{equation}
C(\lambda)={\frac {(nm-1)!}{\sqrt{\mathrm{min}(n,m)}}}\mathrm{Vol}_{mn-1}(S^*(1))\lambda^{-nm}  \,,
\end{equation}
\end{proposition}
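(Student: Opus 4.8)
The plan is to apply the Coarea Formula to the function $g(X) = \|X\|_*$, viewing $\mathbb{R}^{n\times m}$ as $\mathbb{R}^{nm}$. Recall that for a Lipschitz $g$ and integrable $h$, the Coarea Formula reads
\[
\int_{\mathbb{R}^{nm}} h(X)\,|\nabla g(X)|\,dX = \int_0^\infty \left( \int_{g^{-1}(t)} h \, d\mathcal{H}^{nm-1}\right) dt,
\]
where $\mathcal{H}^{nm-1}$ denotes $(nm-1)$-dimensional Hausdorff (surface) measure. Being a norm, $\|\cdot\|_*$ is Lipschitz, so the formula applies even though $g$ fails to be differentiable on the rank-deficient matrices.

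The decisive step is to show that the Frobenius norm of the gradient is almost everywhere the constant $\sqrt{\min(n,m)}$. At any full-rank $X$ with reduced SVD $X = U\Sigma V^\top$, the nuclear norm is differentiable with $\nabla \|X\|_* = UV^\top$. Writing $U = [u_1,\dots,u_n]$ and $V=[v_1,\dots,v_n]$, we have $UV^\top = \sum_{i=1}^{n} u_i v_i^\top$, and the rank-one matrices $u_i v_i^\top$ are orthonormal in the Frobenius inner product since $\langle u_i v_i^\top, u_j v_j^\top\rangle = (u_i^\top u_j)(v_i^\top v_j) = \delta_{ij}$. Hence $|\nabla g|^2 = n = \min(n,m)$. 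Since the rank-deficient matrices form a proper algebraic variety of codimension $m-n+1\geq 1$, they are Lebesgue-null, so this identity holds for a.e. $X$.

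Because $|\nabla g|$ equals the constant $\sqrt{\min(n,m)}$ almost everywhere, I can pull it out of the Coarea identity. Taking $h(X) = e^{-\lambda\|X\|_*}$, which equals $e^{-\lambda t}$ on the level set $g^{-1}(t) = S^*(t)$, yields
\[
C(\lambda) = \frac{1}{\sqrt{\min(n,m)}}\int_0^\infty e^{-\lambda t}\,\mathrm{Vol}_{nm-1}(S^*(t))\,dt.
\]
By the degree-one homogeneity of the nuclear norm, $S^*(t) = t\,S^*(1)$, so the surface measure scales as $\mathrm{Vol}_{nm-1}(S^*(t)) = t^{nm-1}\mathrm{Vol}_{nm-1}(S^*(1))$. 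Substituting and recognizing the remaining integral as a Gamma function, $\int_0^\infty e^{-\lambda t} t^{nm-1}\,dt = (nm-1)!\,\lambda^{-nm}$, reproduces the claimed constant exactly.

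The main obstacle is the gradient computation and its measure-theoretic justification: one must confirm that $\nabla\|X\|_* = UV^\top$ on the full-rank locus by differentiating the sum of singular values, verify the Frobenius orthonormality that collapses the answer to the clean value $\sqrt{\min(n,m)}$, and check that the non-differentiable rank-deficient set is negligible so that the Coarea Formula for the Lipschitz function $g$ may be invoked with a constant gradient norm. Everything after that is routine homogeneity scaling together with a standard Gamma integral.
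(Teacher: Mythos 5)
Your proposal is correct and follows essentially the same route as the paper's proof: apply the Coarea Formula to $X\mapsto\|X\|_*$, use the gradient $\nabla\|X\|_*=UV^\top$ to get the constant Jacobian factor $\sqrt{\min(n,m)}$ (your Frobenius-orthonormality argument for $\|UV^\top\|_F^2=n$ is the same computation as the paper's $\mathrm{tr}(V^\top V)=n$), then conclude by homogeneity of the level sets and the Gamma integral. The only cosmetic difference is that you invoke the Lipschitz/Hausdorff-measure version of the coarea formula and explicitly dismiss the rank-deficient null set, whereas the paper works with the smooth coarea formula and derives $\nabla\|X\|_*=UV^\top$ in full via matrix differentials of the SVD.
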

where $S^*(1):=\{X: \|X\|_*=1\}$, and $\mathrm{Vol}_{nm-1}$ is the $nm-1$-dimensional surface measure. 

\subsection{Exact Stochastic Representation}
If $X$ is distributed according to the Nuclear norm Distribution, we seek an exact factorization for the distribution of the singular vectors and singular values of $X$. While this can be derived using standard results in random matrix theory, we will nonetheless find it useful to record the result for our setting: 

\begin{proposition}
\label{thm:svd}
If $X\sim \mathrm{NND}_{nm}(\lambda)$, 
and has singular value decomposition $X=U\mathrm{diag}(S)V^T$, then $U$, $V$ and $S$ are independent and have the following distributions: 
\begin{eqnarray*}
& U\sim \mathrm{Unif}(O(n)) \,; \hspace{5em} V\sim \mathrm{Unif}(S(m,n)) \,,\\
& P(S)\propto  \mathbbm{1}_{[s_i\geq 0 \, \forall i]}e^{-\lambda \sum_i s_i}\prod_{i\leq n}s_i^{m-n}\prod_{i<j\leq n}|s_i^2-s_j^2| \,,
\end{eqnarray*}
where $S(n,m)$ is the Stiefel manifold of all $m\times n$ orthonormal frames.  
Conversely, if $U$, $V$, and $S$ are independently distributed according to the above distributions, then the product $U\mathrm{diag}(S)V^T$ is distributed according to $\mathrm{NND}_{mn}$
\end{proposition}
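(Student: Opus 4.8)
The plan is to obtain the joint law of $(U,S,V)$ by a single change of variables from the Euclidean coordinates of $X$ to its SVD coordinates, exploiting two facts. First, the NND density $C(\lambda)^{-1}e^{-\lambda\|X\|_*}$ depends on $X$ only through $\|X\|_*=\sum_i s_i$, so it is a function of $S$ alone and is blind to $U$ and $V$. Second, the classical SVD Jacobian factorizes the Lebesgue volume element $\mathrm{d}X$ into a part depending only on the singular values times invariant parts on $O(n)$ and on the Stiefel manifold. Granting the Jacobian, the joint density immediately splits as a product of a function of $S$, the Haar density on $O(n)$, and the invariant density on the Stiefel manifold, which is exactly the asserted independence together with the claimed marginals; the constants are irrelevant because only proportionality and the word ``uniform'' are asserted.

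The technical heart is therefore the derivation of the Jacobian. Writing $X=U\,\mathrm{diag}(S)\,V^\top$ and completing $V$ to a full orthogonal matrix $\tilde V=[\,V\ \ V_\perp\,]\in O(m)$, I would differentiate the defining relation and form $U^\top(\mathrm{d}X)\tilde V = A\Sigma + \mathrm{d}\Sigma - \Sigma C$, where $\Sigma=[\mathrm{diag}(S)\ \ 0]$, $A=U^\top\mathrm{d}U$ is antisymmetric $n\times n$, and $C=\tilde V^\top\mathrm{d}\tilde V$ is antisymmetric $m\times m$. Decomposing $C$ into its upper-left $n\times n$ antisymmetric block $B$, its $n\times(m-n)$ off-block $K$, and its lower-right block, one reads off the entries: the diagonal returns $\mathrm{d}s_i$; each off-diagonal pair $(i,j)$ with $i,j\le n$ couples $A_{ij}$ and $B_{ij}$ through a $2\times2$ system with determinant of magnitude $|s_i^2-s_j^2|$; each entry of $K$ contributes a factor $s_i$, so the $n(m-n)$ such entries together give $\prod_{i\le n}s_i^{m-n}$; and the lower-right block drops out entirely, reflecting the gauge freedom in $V_\perp$ that confines $V$ to the Stiefel manifold. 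Taking the wedge of the resulting one-forms and collecting these determinants yields $\prod_{i<j\le n}|s_i^2-s_j^2|\prod_{i\le n}s_i^{m-n}$ times the invariant volume forms on $O(n)$ and on the Stiefel manifold. (Alternatively this is a standard random-matrix-theory formula that can simply be cited.)

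Two bookkeeping points remain. The SVD is unique only up to simultaneous permutations and sign flips of the columns of $U$ and $V$ (and of the $s_i$), and is otherwise unique off a measure-zero set; since that discrete group has fixed finite order and all three proposed measures are invariant under it, the pushforward is well defined and the per-orbit constant is absorbed into normalization. The set of rank-deficient matrices and of matrices with repeated singular values has Lebesgue measure zero and may be discarded. With these caveats, reading off the factorized density proves the forward direction. For the converse, the same map $(U,S,V)\mapsto U\,\mathrm{diag}(S)\,V^\top$ is, off the exceptional set, a local diffeomorphism onto its image, so pushing the stated product measure forward and applying the Jacobian in the opposite direction recovers a density proportional to $e^{-\lambda\sum_i s_i}=e^{-\lambda\|X\|_*}$, i.e.\ the NND. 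I expect the derivation of the Jacobian to be the only genuine obstacle; everything else is careful bookkeeping of invariant measures.
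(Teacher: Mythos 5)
Your proof is correct, but it follows a genuinely different route from the paper's own argument. You carry out the classical direct computation of the SVD Jacobian via matrix differential forms: block-decomposing $U^\top(\mathrm{d}X)\tilde V$, extracting $|s_i^2-s_j^2|$ from the $2\times 2$ couplings of $A_{ij}$ with $B_{ij}$, extracting $\prod_{i\le n} s_i^{m-n}$ from the off-block $K$, and discarding the gauge block corresponding to $V_\perp$. This is essentially the proof in the reference the paper cites (Anderson et al., Prop.~4.1.3), which the authors explicitly describe as technical and involved, and which they wrote their appendix to avoid. The paper's appendix instead sidesteps the wedge-product computation entirely: since the Jacobian of $\mathrm{Prod}(U,s,V)=U\,\mathrm{diag}(s)\,V^\top$ is a fixed function of the coordinates, it can be backed out as a ratio of two known densities by applying the change-of-variables formula to a reference random matrix, namely an $n\times m$ Gaussian matrix $W$: its entrywise density is $\propto e^{-\|W\|_F^2/2}$, its singular values have density $\propto \Delta(s^2)\prod_i e^{-s_i^2/2}s_i^{m-n}$ (the Wishart eigenvalue density followed by a square-root change of variables), and its singular vectors are uniform by rotational symmetry; hence $\bigl|\det\bigl(d\,\mathrm{Prod}/d(U,s,V)\bigr)\bigr| \propto p_{s_W}(s)\big/ p_W\bigl(\mathrm{Prod}(U,s,V)\bigr)$, which is exactly the Jacobian you compute by hand. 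What each approach buys: yours is self-contained and makes the geometric origin of each factor transparent, at the price of the exterior-algebra machinery and invariant-form bookkeeping on $O(n)$ and the Stiefel manifold; the paper's is considerably shorter and uses only multivariate calculus plus the classical Wishart spectral density (and the paper addresses the apparent circularity by noting that density can be derived without any such Jacobian). Your remaining bookkeeping --- the finite symmetry group of the SVD, discarding the measure-zero set of repeated singular values, and reversing the map for the converse --- matches the paper's own remarks following the proposition.
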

\begin{proof}  
See for example \cite{anderson}, Prop. 4.1.3. We also give an alternative argument in Appendix \ref{sec:productproof}, which is considerably simpler than the proof in the reference.
\end{proof}

\begin{wrapfigure}{r}{0.4\textwidth}
    \vspace{-2em}
    \centering
    \includegraphics[width=0.99\linewidth]{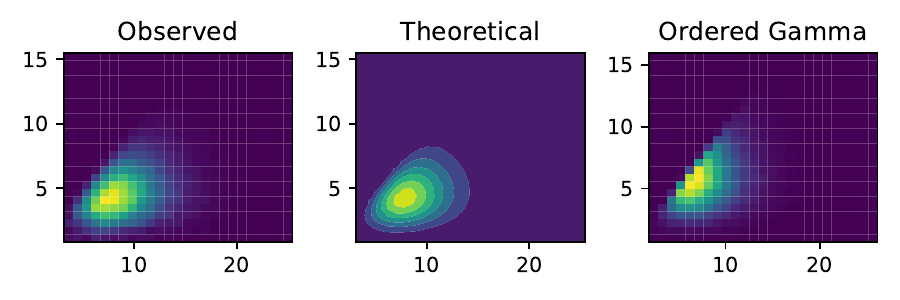}
    \caption{
    \textit{Left:} Histogram of singular values of $\mathrm{NND}$ variates of size $7\times 2$ and $\lambda=1$. 
    \textit{Center:} Density of Theorem \ref{thm:svd}.
    \textit{Right:} A histogram of ordered Gamma(7,1) variates.}
    \label{fig:hist_sv}
\end{wrapfigure}

A few remarks: first, because the distribution of $S$ is permutation invariant, the statement of the Proposition still holds if we impose an order constraint on the singular values (which allows us to drop the absolute values). Second, some care is required when obtaining the singular value decomposition from $X$ in the above theorem, since this is of course not uniquely determined. Although, with probability 1, there are a total of $2^{\mathrm{min}(n,m)}\mathrm{min}(n,m)!$ possibilities, corresponding to reordering the singular values, as well as sinultaneously multiplying the $ith$ column of $U$ and $V$ by $-1$.  The random variables $U,S,V$ in the above theorem can  be interpreted as uniformly random samples from the finite set of possible SVDs of $X$ (and undefined on the set of measure zero in which $X$ has repeated singular values) .


At this point we can already use our results to clarify some previous claims made about the $\mathrm{NND}$ in the literature. For instance, \citet{pereyra2016proximal} has commented on the $\mathrm{NND}$ as a matrix distribution extending the Laplace distribution ``in which the singular values of [the matrix] are assigned exponential priors".
This makes sense from an intuitive perspective, but already this idea was in trouble starting with Proposition \ref{prop:sum_gamma}:  we cannot achieve a Gamma distribution with shape parameter $mn$ from only $m$ many exponential variates; we would need $mn$ many.
However, this constraint on the sum can be satisfied if the singular values were ordered Gamma variates instead, each with a common rate parameter.
Theorem \ref{thm:svd} rules this out.
The density there has substantive correlation between the singular values beyond their ordering, as illustrated in Figure \ref{fig:hist_sv}.

Finally, although the decomposition in \ref{thm:svd} is exact, the singular value density can be cumbersome to work with. This motivates our search for approximate stochastic representations that are easier to work with in practice. 
\subsection{Approximate Stochastic Representation: Normal Product Distribution}
\label{sec:approxrep}

We propose the following approximate stochastic representation of the $\mathrm{NND}$. If $X_1$ and $X_2$ are $n\times n$ matrices with iid unit normal entries, then we propose to approximate $\mathrm{NND}_{nn}(1)$ with the  distribution of  ${\frac 4 3}X_1X_2$. Note that this approximate distribution is very tractable both  computationally 
(drawing prior samples is trivial and the conditional Gaussian prior facilitates posterior sampling)
and analytically (we survey known results about its spectrum in Appendix \ref{sec:npspec}). 

In the rest of this section, we will justify this approximation by a theoretical analysis. In Section \ref{sec:NNDnpcomp}, we will quantify the quality of this approximation by simulations. 

Our approach is based on the variational characterization of the nuclear norm \citep{rennie2005equivalent}: 
\begin{equation}
\|X\|_*={\frac 1 2}min_{U\in\mathbb{R}^{n\times d},V\in\mathbb{R}^{d\times m}: UV=X}\|U\|_F^2+\|V\|_F^2 \,; \hspace{2em} d=\mathrm{min}(n,m). 
\end{equation}

If we replace the minimum with a soft minimum, i.e. a log-integral-exp, 
then intuitively we expect:

\begin{equation}
e^{-\|X\|_*/\tau} \overset{?}{\approx} \int_{UV=X} e^{-{\frac {1} {2\tau}}\|U\|_F^2-{\frac {1} {2\tau}}\|V\|_F^2}d(UV) \,,
\end{equation}
assuming that $\tau$ is sufficiently small. The integrand is now (up to a constant) the density of two independent matrices with iid normal entries. This motivates the following.
\begin{definition}
The Normal Product Distribution $NP(\sigma^2)$ is defined as the distribution of the random matrix $X_1X_2$ where $X_i$ are independent matrices with iid $Normal(\mu=0,\sigma^2=\sigma^2)$ entries.  
\end{definition}

Thus, the Normal Product distribution arises naturally as a ``softened" version of the Nuclear Norm distribution. But is there a more precise formal relationship between these distributions? The $1\times 1$ case already provides a non-trivial and instructive example, while avoiding the notational heaviness of the general case, so we will consider this case in detail before stating the general result.

Letting $X_1$ and $X_2$ be independent centered (scalar) normal random variables, their product $Z=X_1X_2$ has the following density: 
\begin{equation}
P(Z=z)={\frac 1 {2\pi}}\int_{(x,y)\in S_z}e^{-x^2/2}e^{-y^2/2}{\frac 1 {\sqrt{x^2+y^2}}}dS_z \,.
\end{equation}
This is well-known \cite{gaunt}, and also an easy consequence of the standard Coarea formula (cf. Section \ref{sec:coarea}).  Here $S_z$ is the hyperbola consisting of all $(x,y)$ whose product is $z$, $dS_z$ is the intrisic length element along this hyperbola, and the $\sqrt{x^2+y^2}$ factor arises as the Frobenius norm of the Jacobian matrix of the mapping $(x,y)\mapsto xy$. 

To evaluate the integral, we can parametrize (one half of) the hyperbola by $x(t)=\sqrt{z}/t, y(t)=\sqrt{z}t$ for $t>0$. (By symmetry, it is sufficient to assume that $z>0$, and also to only integrate over one branch of the hyperbola). Then by standard calculus, the length element in this parameterization is given by $dS_z(t)=\sqrt{x'(t)^2+y'(t)^2}dt=\sqrt{z}\sqrt{1+t^{-4}}$, so plugging in, we obtain the following one-dimensional integral: 
\begin{equation}
P(Z=z)={\frac 1 {\pi}}\int_{0}^{\infty} e^{-z(t^{-2}+t^2)/2}\sqrt{{\frac {1+t^{-4}}{t^{-2}+t^2}}}dt \,;
\end{equation}
the extra factor of two is from the symmetric integral over the negative branch of the hyperbola. 

The formulas at this point are exact, but let us now consider the approximate behavior of the density as $z\to\infty$. 
Defining $g(t):=(t^{-2}+t^2)/2$, it is easy to see that $g(t)$ has a unique minimum at $t_0=1$. We can thus use the standard Laplace approximation to write: 
\begin{eqnarray}
P(Z=z)&\sim&{\frac 1 {\pi}}\sqrt{\frac {2\pi}{zg''(t_0)}}\sqrt{{\frac {1+t_0^{-4}}{t_0^{-2}+t_0^2}}}e^{-zg(t_0)} ={\frac 1 {\sqrt{2\pi z}}}e^{-z} \,.
\end{eqnarray}

We can corroborate this result by noting that in this one-dimensional case, the density of $Z$ is exactly known to be ${\frac 1 {\pi}}K_0(|z|)$, with $K_0$ the modified Bessel function of second kind \cite{johnson},
and the above analysis recovers the well-known asymptotic $K_0(z)\sim \sqrt{{\frac {\pi} {2z}}}e^{-z}$ (see e.g. Chapter 5 of \citet{bowman}).

Now how does this compare to the Nuclear Norm distribution? In the $1\times 1$ case, it is clear that the Nuclear Norm distribution coincides with the classical Laplace distribution with density ${\frac 1 2}e^{-|z|}$. Therefore, the Normal Product distribution does indeed capture the dominant asymptotic factor of $e^{-z}$, however it also has lower order correction terms. 

The analysis for the general case is conceptually similar but much more computationally involved. 
We state the result here for reference and defer the proof to Appendix \ref{sec:npdensproof}.

\begin{theorem}
\label{thm:npasymptotic}
If $X$ is a $n\times n$ matrix distributed according to the normal product distribution $NP(1)$, with $n>1$, and $\tau>0$, then the asymptotic behavior of the density $P(X/\tau)$ as $\tau\to 0$ is given by 
\begin{equation}
P(X/\tau)\sim F(X)\tau^{-3n^2/4+n/4}e^{-\|X\|_*/\tau} \,,
\end{equation}
where $F(X)$ is a certain explicit expression of the singular values of $X$, given in Appendix \ref{sec:npdensproof} .
\end{theorem}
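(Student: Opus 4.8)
The plan is to mirror the $1\times 1$ computation: represent the density as an integral over the fibers of the multiplication map $\Phi(X_1,X_2)=X_1X_2$ via the Coarea Formula, and then extract the $\tau\to 0$ behavior by Laplace's method. Writing $\phi$ for the standard Gaussian density on $\mathbb{R}^{n\times n}$, the Coarea Formula gives
\begin{equation}
P(Y)=\int_{\{X_1X_2=Y\}}\frac{\phi(X_1)\,\phi(X_2)}{J\Phi(X_1,X_2)}\,d\mathcal{H}^{n^2}(X_1,X_2),
\end{equation}
where $J\Phi=\sqrt{\det(D\Phi\,D\Phi^{\!\top})}$ is the Coarea factor of $\Phi$ (the analogue of the $\sqrt{x^2+y^2}$ term in the scalar case) and the fiber has dimension $n^2$. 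I would set $Y=X/\tau$ and rescale $(X_1,X_2)=(\tilde X_1,\tilde X_2)/\sqrt{\tau}$, which maps the fiber over $X/\tau$ diffeomorphically onto the fixed fiber $\{\tilde X_1\tilde X_2=X\}$ and turns the Gaussian exponent into $-\tfrac{1}{2\tau}(\|\tilde X_1\|_F^2+\|\tilde X_2\|_F^2)$. Tracking the Jacobian and homogeneity factors of this rescaling (which supply an overall power of $\tau^{-n^2}$) isolates the small parameter entirely in the exponent, leaving a fixed fiber integral to which Laplace's method applies.

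The conceptual heart is this Laplace step. By the variational characterization of \citet{rennie2005equivalent}, the minimum of $\tfrac12(\|\tilde X_1\|_F^2+\|\tilde X_2\|_F^2)$ over the fiber equals $\|X\|_*$, which immediately yields the factor $e^{-\|X\|_*/\tau}$. The key subtlety is that the minimizer is not isolated: writing $X=U\mathrm{diag}(\sv)V^{\!\top}$, the minimizing set is the orbit $\{(U\mathrm{diag}(\sv)^{1/2}Q^{\!\top},\,Q\,\mathrm{diag}(\sv)^{1/2}V^{\!\top}):Q\in O(n)\}$, a submanifold of dimension $\dim O(n)=n(n-1)/2$. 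Laplace's method must therefore be applied in degenerate form: the flat $O(n)$ directions contribute the orbit volume and $\tau$-independent Jacobians, while the remaining $n(n+1)/2$ transverse directions contribute a Gaussian factor of order $\tau^{\,n(n+1)/4}$. Combined with the $\tau^{-n^2}$ above this gives $\tau^{-n^2+n(n+1)/4}=\tau^{-3n^2/4+n/4}$, as claimed.

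A cleaner way to organize the degenerate directions—and the route I would take for the actual bookkeeping—is to integrate out $X_2$ in closed form first. Conditioning on $X_1=Z$ makes $Y=ZX_2$ Gaussian, giving
\begin{equation}
P(Y)\propto \int_{\mathbb{R}^{n\times n}}|\det Z|^{-n}\exp\!\Big(-\tfrac12\,\mathrm{tr}\big(Y^{\!\top}(ZZ^{\!\top})^{-1}Y\big)-\tfrac12\|Z\|_F^2\Big)\,dZ.
\end{equation}
The integrand depends on $Z$ only through $M:=ZZ^{\!\top}$, so it is manifestly constant along $O(n)$ orbits; changing variables $Z\mapsto M$ (with its $O(n)$ fiber) reduces the problem to a genuinely nondegenerate Laplace approximation on the $n(n+1)/2$-dimensional cone of symmetric positive-definite matrices, where the objective $\tilde g(M)=\mathrm{tr}(M^{-1}XX^{\!\top})+\mathrm{tr}(M)$ has the unique interior minimizer $M_\star=(XX^{\!\top})^{1/2}$. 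Diagonalizing in the SVD basis of $X$ then makes both the Hessian of $\tilde g$ and the $Z\mapsto M$ Jacobian factor over the singular values.

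The main obstacle is precisely this final computation: evaluating the transverse Hessian determinant and the change-of-variables Jacobian, and simplifying them into the closed form $F(X)$. In the SVD basis the Hessian quadratic form on symmetric perturbations $E$ works out (up to constants) to $\sum_i E_{ii}^2/\sigma_i+\sum_{i<j}E_{ij}^2(\sigma_i^{-1}+\sigma_j^{-1})$, so its determinant factors as $\prod_i\sigma_i^{-1}\prod_{i<j}(\sigma_i^{-1}+\sigma_j^{-1})$; multiplying by the amplitude $|\det Z|^{-n}=\det(M_\star)^{-n/2}=(\prod_i\sigma_i)^{-n/2}$ and by the Jacobian of $Z\mapsto M$ (a Vandermonde-type factor in the singular values, of the same flavor as the $\prod_{i<j}|s_i^2-s_j^2|$ term in Proposition \ref{thm:svd}), and collecting the $2\pi$ constants, produces the explicit $F(\sv)$. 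Two secondary points need care: justifying the interchange of the Laplace limit with the orbit integral, using integrability of $|\det Z|^{-n}$ to control the region away from the critical set; and restricting to the generic full-rank, distinct-singular-value regime, which is exactly what guarantees $M_\star$ is an interior minimizer with nondegenerate transverse Hessian (and explains the exclusion of the degenerate scalar case $n=1$).
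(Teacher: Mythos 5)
Your overall architecture matches the paper's proof in Appendix \ref{sec:npdensproof}: a Coarea-formula representation of the density as a fiber integral, followed by Laplace's method with a degenerate critical set. In fact your description of the degeneracy is cleaner than the paper's: you identify the minimizing set as the $O(n)$ orbit $\{(U\mathrm{diag}(\sigma)^{1/2}Q^{\top},\,Q\,\mathrm{diag}(\sigma)^{1/2}V^{\top})\}$ of dimension $n(n-1)/2$, with $n(n+1)/2$ transverse directions, whereas the paper reaches the same count by restricting the Hessian to the slice $S_X'=\{(U,V)\in S_X: U^{\top}U,\,VV^{\top}\ \text{diagonal}\}$. Your second route --- integrating out $X_2$ conditionally on $X_1=Z$ and reducing via $M=ZZ^{\top}$ to a nondegenerate Laplace approximation on the positive-definite cone, with objective $\mathrm{tr}(M^{-1}XX^{\top})+\mathrm{tr}(M)$ minimized at $M_\star=(XX^{\top})^{1/2}$ with value $2\|X\|_*$ --- is genuinely different from the paper's bookkeeping (which parameterizes the fiber by $W\mapsto(W,W^{-1}X)$) and is arguably tidier, since it removes the degeneracy by construction; your transverse Hessian $\sum_i E_{ii}^2/\sigma_i+\sum_{i<j}E_{ij}^2(\sigma_i^{-1}+\sigma_j^{-1})$ is correct. (A small slip: the $Z\mapsto M$ Jacobian is $\det(M)^{-1/2}$, not a Vandermonde-type factor; Vandermonde terms would only appear if you further diagonalized $M$, which you do not need to do.)

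There is, however, a genuine gap at the one step the theorem is quantitatively about: the claim that the rescaling $(X_1,X_2)=(\tilde X_1,\tilde X_2)/\sqrt{\tau}$ ``supplies an overall power of $\tau^{-n^2}$.'' It supplies no power at all. The fiber is $n^2$-dimensional, so the dilation by $\tau^{-1/2}$ scales $\mathcal{H}^{n^2}$ by $\tau^{-n^2/2}$; meanwhile $D\Phi$ is homogeneous of degree one in $(X_1,X_2)$, so the coarea factor $J\Phi$ in the denominator also scales by $\tau^{-n^2/2}$, and the two contributions cancel exactly:
\begin{equation}
\int_{\{X_1X_2=X/\tau\}}\frac{\phi(X_1)\,\phi(X_2)}{J\Phi(X_1,X_2)}\,d\mathcal{H}^{n^2}
=(2\pi)^{-n^2}\int_{\{\tilde X_1\tilde X_2=X\}}\frac{e^{-(\|\tilde X_1\|_F^2+\|\tilde X_2\|_F^2)/(2\tau)}}{J\Phi(\tilde X_1,\tilde X_2)}\,d\mathcal{H}^{n^2}\,.
\end{equation}
(You appear to have used the Jacobian $\tau^{-n^2}$ of the dilation on the ambient space $\mathbb{R}^{2n^2}$, but the integral lives on an $n^2$-dimensional fiber.) Consequently the quantity you defined --- the $NP(1)$ density literally evaluated at $X/\tau$ --- behaves like $\tau^{+n(n+1)/4}e^{-\|X\|_*/\tau}$, not $\tau^{-3n^2/4+n/4}e^{-\|X\|_*/\tau}$, and the same holds for your cone route if the powers there are tracked through. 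The $n=1$ case confirms this: $K_0(z/\tau)/\pi\sim\sqrt{\tau/(2\pi z)}\,e^{-z/\tau}$, i.e.\ exponent $+1/2=n(n+1)/4$ rather than $-1/2$. The stated exponent $-3n^2/4+n/4$ belongs to a different quantity: the density at $X$ of the product of two matrices with iid $N(0,\tau)$ entries, i.e.\ the pushforward $\tau^{-n^2}P_{NP(1)}(X/\tau)$. That is what the paper's appendix actually computes --- its prefactor $(2\pi\tau)^{-n^2}$, coming from the variance-$\tau$ Gaussian normalizations, is the true source of $\tau^{-n^2}$ --- and it is the same convention under which the main text writes $P_{\mathrm{NND}}(X/\tau)=C'\tau^{-n^2}e^{-\|X\|_*/\tau}$. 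To complete your proof you must either adopt that scaled-density convention at the outset (so that $\tau^{-n^2}$ enters through the Gaussian normalization, not the fiber rescaling), or accept the exponent $+n(n+1)/4$ for the quantity you actually set up. As written, the agreement with the theorem's exponent rests on a compensating bookkeeping error rather than a derivation.
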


Note there is actually a qualitative difference between the $n=1$ and $n>1$ cases; in the $n>1$ case it turns out that the Hessian matrix is not invertible and so it is necessary to restrict the dimensionality to apply the Laplace expansion. 
Thence the factor of $3/4$ in the exponent of $\tau$. 

Contrasting the above result with the Nuclear Norm density, we have $P_{\mathrm{NND}}(X/\tau)=C'\tau^{-n^2}e^{-\|X\|_*/\tau}$. So loosely speaking, the Normal Product distribution looks like the Nuclear Norm distribution with the dimensionality ``scaled down" by a factor of $3/4$ (for reasonably large $n$, the $n^2$ term will dominate the exponent). This suggests applying a correction factor of $4/3$ to the Normal Product, which can be absorbed into the variance, thus arriving at the original approximation we proposed at the beginning of this section. We will see in Section \ref{sec:NNDnpcomp} that this approximation is  quite close for practical purposes.



As mentioned by \citet{srebro2004maximum,wipf2016analysis}, the variational characterization of the nuclear norm implies that the MAP estimates when using an NPD prior will coincide with the corresponding estimate under an NND prior, for any likelihood.
However, the similarity of the overall distributions has not to our knowledge been formally studied. 
While the $1\times 1$ analysis already shows they are not exactly equal, they are indeed closely related as shown by Theorem \ref{thm:npasymptotic}.

\section{Simulation and Bayesian Inference}
\label{sec:bayes}
\begin{wrapfigure}{r}{0.4\textwidth}
    \vspace{-5em}
    \centering
    \includegraphics[width=0.95\linewidth]{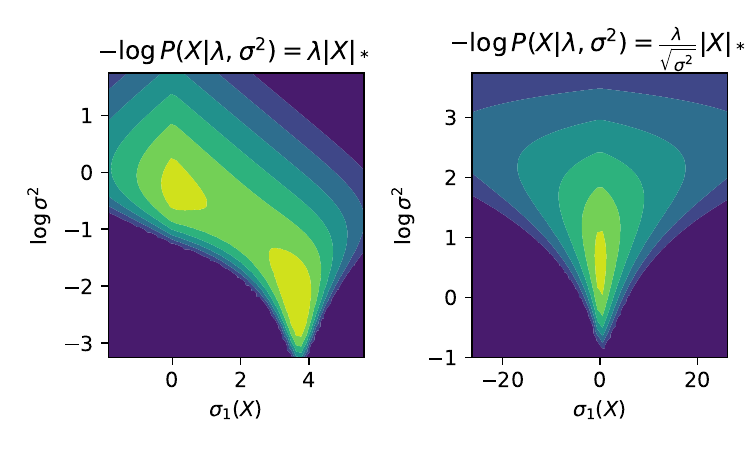}
    \caption{Conditional Prior Ensures Unimodality. x-axis gives first singular value of matrix $X$, y-axis gives estimated error variance, contours indicate posterior density. See Appendix \ref{sec:app_uni}. } \label{fig:uni}
    \vspace{-3em}
\end{wrapfigure}
We now discuss Bayesian computation under the NND prior.

\subsection{Sampling with Proximal Langevin}
\label{sec:prior}

For inference on as high dimensional an object as a matrix, classical random-walk Metropolis-within-Gibbs style algorithms will be prohibitively slow.
However, many modern hybrid Monte Carlo (HMC) methods based on Langevin diffusions \cite{roberts1996exponential} or Hamiltonian dynamics \cite{duane1987hybrid,neal2012mcmc} rely on the smoothness of the posterior, structure unavailable to a posterior based on a Nuclear Norm Distribution prior.
\citet{pereyra2016proximal} proposed a proximal Langevin method to sample from such distributions which will be of use here.
This begins by randomly initializing $X$, and then proposing according to the distribution:
$
    P(X^*|X^t) = 
    N(\textrm{prox}^{-\log P}_{\delta/2}
    (X^t),
    \delta\mathbf{I}) \,.
$
Here, $\textrm{prox}^{-\log P}_{\delta/2}$ is the proximal operator of the distribution from which samples are desired, $\delta$ is a tunable proposal scale parameter, $X^t$ is the current state of the Markov chain, and $X^*$ is the next proposed state.
We set $\delta$ via a Robbins-Monro search \cite{robbins1951stochastic,atchade2005adaptive} so as to set the acceptance rate to 0.574, which is under certain conditions the optimal rate for proximal MCMC \cite{crucinio2023optimal}.
Appendices \ref{sec:app_denoise} and \ref{sec:app_complete} give the specific implementations for the posteriors used in our numerical applications of Section \ref{sec:applications}.

In addition to posterior sampling, sampling from the NND prior may also be achieved within this proximal MCMC framework via the proposal
$
    P(X^*|X^t) = 
    N(\textrm{prox}^{\lambda\Vert\cdot\Vert_*}_{\delta/2}
    (X^t),
    \delta\mathbf{I}) \,,
$
\begin{wrapfigure}{r}{0.4\textwidth}
    \centering
    \includegraphics[width=.49\linewidth]{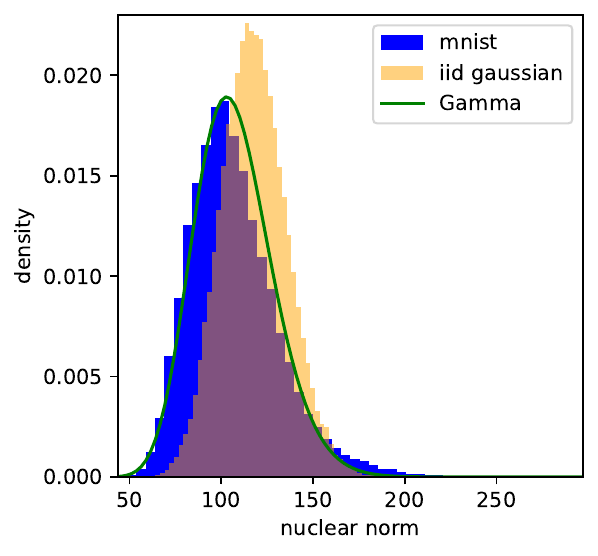}
    \includegraphics[width=0.49\linewidth]{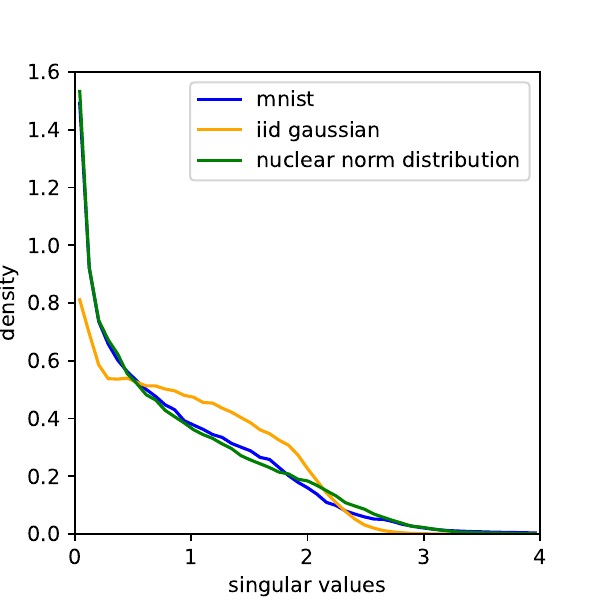}
    \caption{Empirical distributions of matrix nuclear norm (left) and singular values (right) from MNIST dataset (blue) compared to nuclear norm (green) and iid Gaussian (orange).}
    \label{fig:mnist}
    \vspace{-4em}
\end{wrapfigure}
where the proximal operator of $\Vert\cdot\Vert_*$ is used directly.

\subsection{Posterior Under Gaussian Likelihood}
\label{sec:gaus_sampler}

We now narrow our focus to the case with an isotropic Gaussian likelihood, 
that is, $Y|X,E = X + E$, where $E_{i,j}|\gamma^2 \sim N(0,\gamma^2)$,
$X|\gamma^2 \sim \mathrm{NND}(\frac{\lambda}{\sqrt{\gamma^2}})$,
and
$\gamma^2\sim P_{\gamma^2}$.
The scaling of $\frac1{\sqrt{\gamma^2}}$ is necessary on the prior density in order to achieve a unimodal posterior, similar to the Bayesian Lasso case \cite{park2008bayesian}, as illustrated in Figure \ref{fig:uni} and formalized in the following theorem, which is proven in Appendix \ref{sec:app_uni}.

\begin{theorem}
    Let $\lambda>0$ be fixed and  $Y\sim N(X,\gamma^2 \mathbf{I})$, and $P_{\gamma^2}$ be as specified in Appendix \ref{sec:app_uni}.
    Under the conditional prior $P(X|\gamma^2,\lambda) = \mathrm{NND}(\frac{\lambda}{\sqrt{\gamma^2}})$, the posterior $P(X,\gamma^2|Y,\lambda)$ is unimodal.
    Under the unconditional prior $P(X|\lambda) = \mathrm{NND}(\lambda)$, $P(X,\gamma^2|X,\lambda)$ may be multimodal.
\end{theorem}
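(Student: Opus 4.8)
The plan is to write out the joint posterior and then pass to the reparametrization $\tilde X = X/\gamma$, $w = 1/\gamma$ (with $\gamma=\sqrt{\gamma^2}$), under which the negative log posterior becomes jointly convex. By Proposition \ref{prop:density} the conditional prior density is $P(X\mid\gamma^2,\lambda)\propto(\lambda/\gamma)^{nm}e^{-(\lambda/\gamma)\|X\|_*}$, so up to additive constants
\[
-\log P(X,\gamma^2\mid Y,\lambda)=\tfrac{1}{2\gamma^2}\|Y-X\|_F^2+\tfrac{\lambda}{\gamma}\|X\|_*+nm\log\gamma^2-\log P_{\gamma^2}(\gamma^2).
\]
The two data-dependent terms are \emph{not} jointly convex in $(X,\gamma^2)$, nor in $(X,\gamma)$, and since $P_{\gamma^2}$ depends on $\gamma^2$ alone it cannot cure the $X$--$\gamma$ coupling; hence the change of variables is the essential move. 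Under $(\tilde X,w)$ one has the exact simplifications $\tfrac{1}{2\gamma^2}\|Y-X\|_F^2=\tfrac12\|wY-\tilde X\|_F^2$ and $\tfrac{\lambda}{\gamma}\|X\|_*=\lambda\|\tilde X\|_*$, because $\|\gamma\tilde X\|_*=\gamma\|\tilde X\|_*$ for $\gamma>0$.

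First I would record the pushforward density, tracking the Jacobian of $(\tilde X,w)\mapsto(X,\gamma^2)$, which up to sign equals $2w^{-nm-3}$; combined with the $\gamma^{-2nm}=w^{2nm}$ normalization prefactor this leaves an overall power $w^{nm-3}$, so the transformed negative log density is
\[
\tfrac12\|wY-\tilde X\|_F^2+\lambda\|\tilde X\|_*-(nm-3)\log w-\log P_{\gamma^2}(1/w^2)+\mathrm{const}.
\]
Second I would verify joint convexity on the convex domain $\mathbb{R}^{nm}\times(0,\infty)$ term by term: $\tfrac12\|wY-\tilde X\|_F^2$ is convex because $wY-\tilde X$ is affine in $(\tilde X,w)$ precomposed with a convex quadratic; $\lambda\|\tilde X\|_*$ is convex as a norm; $-(nm-3)\log w$ is convex for $nm\ge 3$; and the contribution of $P_{\gamma^2}$ (the inverse-gamma-type law fixed in the appendix) is convex in $w$ and supplies strict curvature. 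Strict convexity in $\tilde X$ from the quadratic and in $w$ from the $\log$ and prior terms then yields a unique minimizer, i.e. log-concavity and hence a single posterior mode in the $(\tilde X,w)$ parametrization used for sampling. The main obstacle is conceptual rather than computational: recognizing that $(X/\gamma,1/\gamma)$ is essentially the only parametrization that simultaneously linearizes the Gaussian quadratic and absorbs the $1/\gamma$ scaling of the penalty, and then checking that the Jacobian- and normalization-induced powers of $w$ land with a sign compatible with convexity. A secondary point I would state carefully is the sense of ``unimodality'': convexity genuinely fails in the raw $(X,\gamma^2)$ coordinates, so one establishes log-concavity in the reparametrized coordinates and, since the map is a smooth bijection, concludes there is a single mode.

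\textbf{Part 2 (possible multimodality under the unconditional prior).} Here the plan is a counterexample. With $P(X\mid\lambda)=\mathrm{NND}(\lambda)$ independent of $\gamma^2$, the penalty $\lambda\|X\|_*$ no longer rescales under $\tilde X=X/\gamma$, so the cancellation of Part 1 breaks and the negative log posterior retains an irreducible $X$--$\gamma^2$ coupling. I would exhibit a small instance (scalar or $2\times2$, with the same $P_{\gamma^2}$) and display two separated local maxima: a ``signal'' mode with $X\approx Y$ and small $\gamma^2$, and a ``noise'' mode with $X\approx 0$ and large $\gamma^2$ absorbing $\|Y\|_F$. Profiling out $X$ for fixed $\gamma^2$ (conditionally log-concave, giving a closed-form soft-thresholded ridge) reduces the verification to a one-dimensional study of the profile in $\gamma^2$, where I would show two local maxima arise for a suitable choice of $Y$ and prior hyperparameters, matching Figure \ref{fig:uni}.
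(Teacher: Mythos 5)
Your Part 1 follows the paper's strategy (the Park--Casella-style reparametrization $\Phi = X/\sqrt{\gamma^2}$, $\rho = 1/\sqrt{\gamma^2}$, followed by term-by-term concavity checks), but it contains a genuine gap: you insert the Jacobian of the change of variables and analyze the \emph{pushforward} density of $(\tilde X, w)$. What that establishes is log-concavity, hence unimodality, of the posterior distribution of the transformed parameters $(X/\gamma, 1/\gamma)$ --- not of the posterior of $(X,\gamma^2)$, which is what the theorem asserts. Unimodality is not invariant under nonlinear reparametrization once Jacobian factors enter: the upper level sets of $p(X,\gamma^2)$ are not preimages of upper level sets of $q(\tilde X,w)=p\bigl(T^{-1}(\tilde X,w)\bigr)\,\lvert J_{T^{-1}}\rvert$, so convexity of the latter's level sets says nothing about connectedness of the former's. (A scalar illustration: if $Z\sim N(0,1)$ then $Z^{1/3}$ has a bimodal density, even though the map is a smooth bijection.) The paper avoids this precisely by \emph{not} including Jacobian terms: it composes the original posterior density with the homeomorphism, shows that this composition is log-concave, and then notes that the homeomorphism carries the convex (hence connected) upper level sets of the composition onto the upper level sets of the original density --- connected upper level sets being the sense of ``unimodal'' here. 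A concrete symptom of the problem in your version is the term $-(nm-3)\log w$: your argument requires $nm\geq 3$, whereas the Jacobian-free computation produces $-2nm\log w$, which is convex for every $nm\geq 1$. The fix is simple --- drop the Jacobian and state the level-set transfer argument --- at which point your Part 1 becomes exactly the paper's proof. (Your claim of a \emph{unique} minimizer is also stronger than what the terms deliver: the quadratic $\tfrac12\Vert wY-\tilde X\Vert_F^2$ is degenerate along the direction $(\tilde X,w)\propto(Y,1)$ and the nuclear norm is not strictly convex; fortunately uniqueness is not needed.)

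Part 2 is fine in spirit and matches the paper: since the claim is only that the posterior ``may be multimodal,'' an example suffices, and the paper's own justification is exactly such a construction (the experiment behind Figure \ref{fig:uni}: a near-zero ``null'' mode with inflated $\gamma^2$ and a ``signal'' mode near the data-generating parameters, under the reference prior $P(\gamma^2)\propto 1/\gamma^2$). Your profiling-out-$X$ plan is a reasonable, slightly more analytic route to the same counterexample, though as written it is a plan rather than a completed verification.
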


We now consider how a Gibbs sampler can be constructed for this posterior based on the decomposition given in Theorem \ref{thm:svd}.
Reparameterizing from $X$ to $U,\textrm{diag}(\sv),V$ leads to the following conditional posteriors:
\begin{algorithmic}
    \STATE $P(\sv|\cdot) \propto 
    e^{-\lambda \sum_i\sigma_i-\frac{\Vert \sv - \mathbf{s}\Vert_2^2}{2\gamma^2}}
    \underset{i\leq n}{\prod} \sigma_i^{m-n}\underset{i<j\leq n}{\prod}|\sigma_i^2-\sigma_j^2|$.
    \STATE $P(U|\cdot) \propto e^{-\frac{1}{2\sigma^2}\textrm{tr}[(Y V\textrm{diag}(\sv))^\top U]}$.
    \STATE $P(V|\cdot) \propto e^{-\frac{1}{2\sigma^2}\textrm{tr}[(\textrm{diag}(\sv)Y^\top V^\top)^\top U]}$ \,,
\end{algorithmic}
where $\mathbf{s} = \textrm{diag}(U^\top Y V)$.
The posterior conditionals for the unitary matrices take the form of Matrix von-Mises-Fisher distributions \cite{jupp1979maximum}.
\citet{hoff2009simulation} proposes an efficient sampler for such distributions which we can exploit as part of a Gibbs sampler iterating through these three conditionals.
We perform the $\sv$ updates via a Metropolis-adjusted Langevin method, which has to deal only with $n$ parameters now rather than $m\times n$ and without any posterior nonsmoothness.

\subsection{Bayesian Estimation of $\lambda$}
\label{sec:est_lam}
\begin{wrapfigure}{r}{0.4\textwidth}
    \vspace{-2em}
    \centering
    \includegraphics[width=0.23\linewidth,trim={3cm 0 3cm 0},clip]{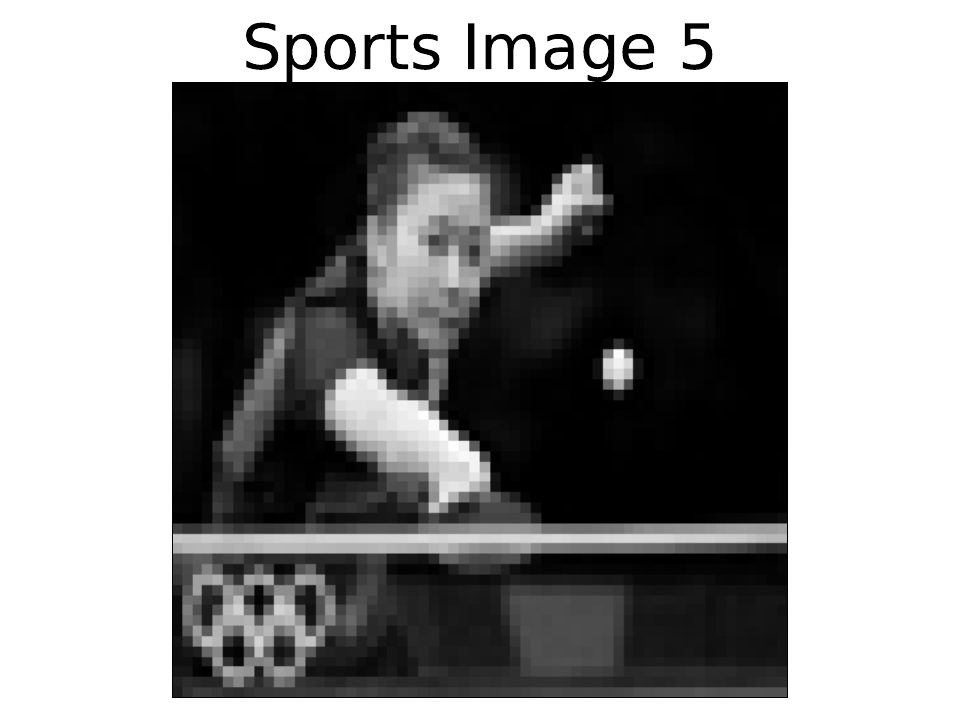}
    \includegraphics[width=0.23\linewidth,trim={3cm 0 3cm 0},clip]{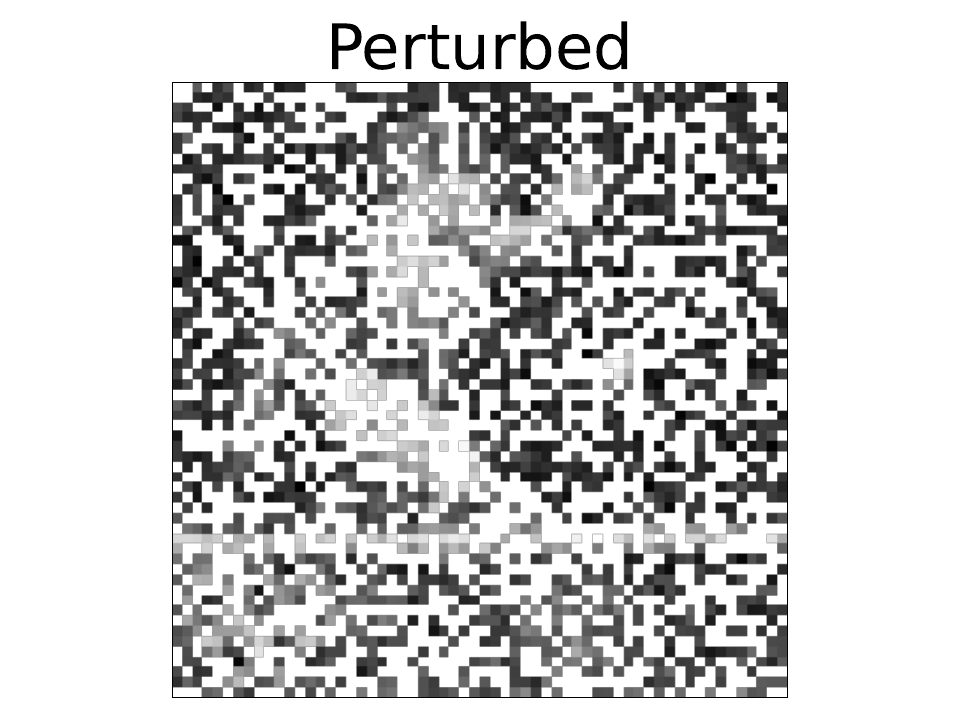}
    \includegraphics[width=0.23\linewidth,trim={3cm 0 3cm 0},clip]{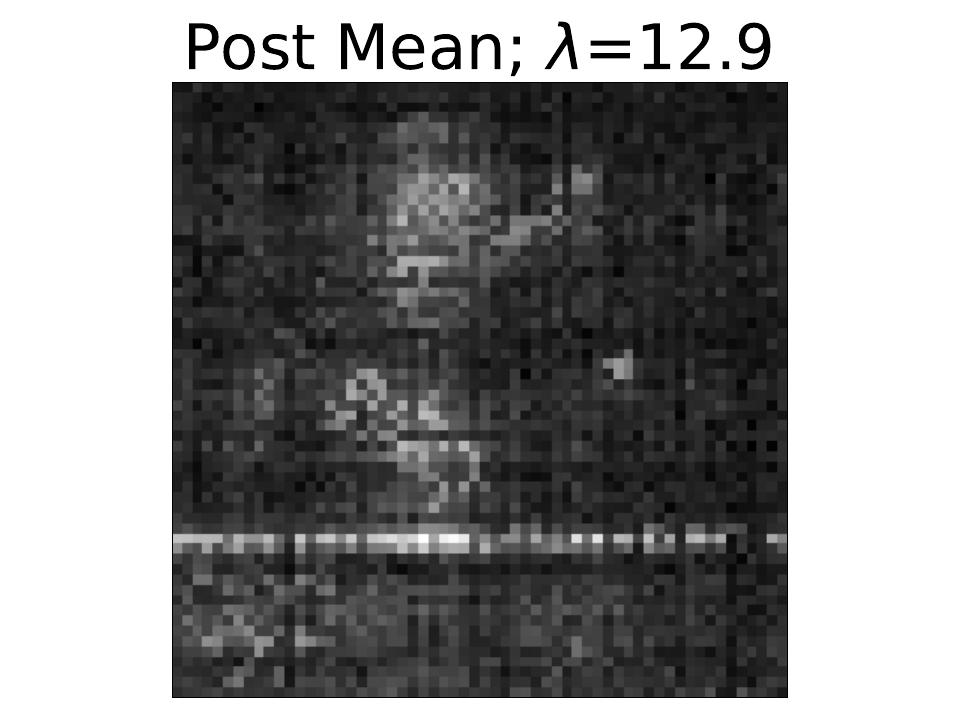}
    \includegraphics[width=0.23\linewidth,trim={3cm 0 3cm 0},clip]{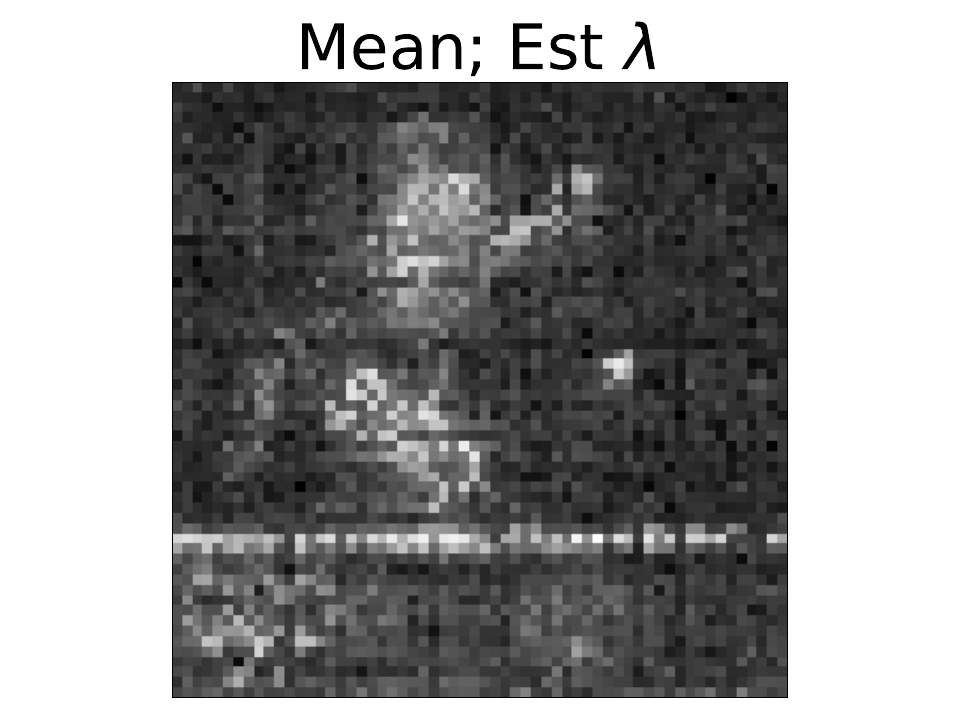}
    \caption{Example matrix completion image from the \texttt{sports} dataset; from left to right, the original image, the image with half of pixels removed and noise added, the posterior mean with a fixed $\lambda$ at the optimal value, and the posterior mean of the adaptive method. See Appendices \ref{sec:app_complete} and \ref{sec:app_images} for more.}
    \label{fig:enter-label}
    \vspace{-1.0em}
\end{wrapfigure}

Ignorance of the NND's normalizing constant does not prevent using it as a prior and simulating from its posterior, which was already done by \citet{pereyra2016proximal}. 
However, it does prevent the analyst from performing Bayesian inference on $\lambda$.
In this section, we exploit the closed form expression for the density of the NND of  Proposition \ref{prop:density} to build an adaptive method which marginalizes over our uncertainty of $\lambda$.

It's clear from the form that a Gamma prior will be conjugate for $\lambda$, which allows for efficient computation.
However, the Gamma prior is not ideal if we want to allow for the possibility of $\lambda$ being at various orders of magnitude \cite{gelman2006prior}, and we use a half Cauchy instead \cite{polson2012half} as in the horseshoe prior \cite{carvalho2009handling}.
If we allow for an augmented variable, this can also be implemented using closed form updates using the representation of a Half Cauchy as a hierarchy of inverse Gamma distributions \cite{makalic2015simple}.
That is, if $z_1|z_2\sim IG(1/2,1/z_2)$ and $z_2\sim IG(1/2, 1)$, then $z_1$ is marginally Half Cauchy distributed (ibid).
For conjugacy purposes, we will want to sample the inverse of $\lambda$. 
This leads to the following iteration for sampling from the posterior conditional for $\lambda$ given $X$:
\begin{algorithmic}
    \STATE $\beta|\alpha \sim IG(1,1+1/\alpha)$.
    \STATE $\alpha|\beta,X\gamma^2 \sim IG(MN+1/2,\frac{\Vert X\Vert_*}{\sqrt{\gamma^2}}+1/\beta)$.
    \STATE $\lambda|\alpha \gets \frac1{\alpha}$.
\end{algorithmic}
Subsequently, $X$ may be simulated conditional on $\lambda$ as discussed either in Section \ref{sec:prior} or Section \ref{sec:gaus_sampler}.
\begin{wrapfigure}{r}{0.4\textwidth}
    \vspace{-0em}
    \centering
    \includegraphics[width=0.23\linewidth,trim={1.15em 0 1.05em 0},clip]{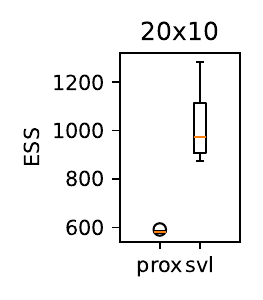}
    \includegraphics[width=0.23\linewidth,trim={1.15em 0 1.05em 0},clip]{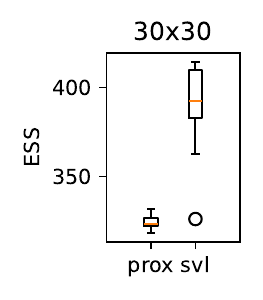}
    \includegraphics[width=0.23\linewidth,trim={1.15em 0 1.05em 0},clip]{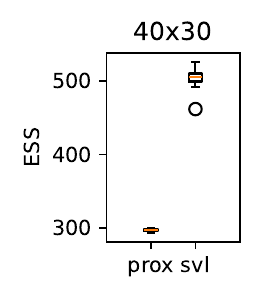}
    \includegraphics[width=0.23\linewidth,trim={1.15em 0 1.05em 0},clip]{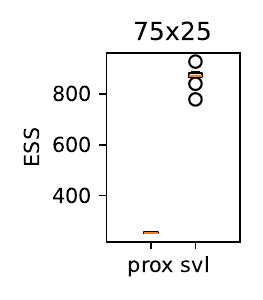}

    \caption{Effective sample size of proximal (\texttt{prox}) versus SVD-Langevin MCMC in (\texttt{svl}); higher is better.}
    \label{fig:ess}
    \vspace{-7em}
\end{wrapfigure}

%

\label{sec:norm_samp}

\section{Numerical Experiments}
\label{sec:applications}
We now conduct numerical experiments illustrating our theory and methodology.

\subsection{Empirical Behavior of   Spatial Frequencies of Images}
Our first simulation is a simple demonstration that the Nuclear Norm distribution can naturally appear ``in the wild."
We generated a set of matrices by taking the 2-dimensional Fourier transform of MNIST images\footnote{Note that we would not expect raw images to be well-fit by the Nuclear Norm distribution, since they tend to have strong correlations between adjacent pixels, whereas the Nuclear Norm Distribution is invariant under any permutation of rows and columns. }.
Since the high frequencies tend to be near zero, we kept only the $5\times 5$ submatrix of the Fourier transform corresponding to the lowest frequencies.
More details are given in Appendix \ref{sec:mnistdetails}. This gives a set of $70,000$ images of size $5\times 5$. We then fit a Nuclear Norm distribution to these matrices. Here there is only one free parameter, namely the scale factor $\lambda$; fitting this amounts to matching the data's average nuclear norm.

In Figure \ref{fig:mnist} we compare the empirical MNIST distribution with the fitted Nuclear Norm distribution. We see a remarkably close fit in terms of both the Nuclear Norm density as well as the density of individual singular values. For comparison, we also fit an alternative random matrix model in which the entries are iid centered Gaussians. As also shown in the figure, the fit of this model to the MNIST frequencies is considerably worse than the fit of the Nuclear Norm distribution.

\subsection{Effective Sample Size for Normal Sampler}

To study the potential for the sampler of Section \ref{sec:norm_samp} to provide improved mixing of the Markov chain under Gaussian likelihoods, we conduct a simulation study.
We generate synthetic rank 1 matrices from the normal product distribution.
Subsequently, we run a Proximal Langevin sampler and a sampler based on the SVD.
For both samplers, we automatically adjust the step size, aiming for an acceptance rate of $0.574$ with $10,000$ sampling iterations and a burn in of $1,000$.
Figure \ref{fig:ess} gives the results.
On the left side of the figure is the Effective Sample Size (ESS), a measure of chain quality which discounts correlations to estimate the size of an independent sample that a correlated sample is equivalent to \citep{geyer2011introduction}.
We see that the sampler based on Theorem \ref{thm:svd} gives consistently better ESS than the proximal Langevin sampler of \citep{pereyra2016proximal}.
The performance gap is highest on the $75\times 25$ sized problem.


\subsection{Matrix Denoising}
\label{sec:denoise}

\begin{figure}
    \centering
    \includegraphics[width=0.19\linewidth,trim={1em 1.5em 1em 1em},clip]{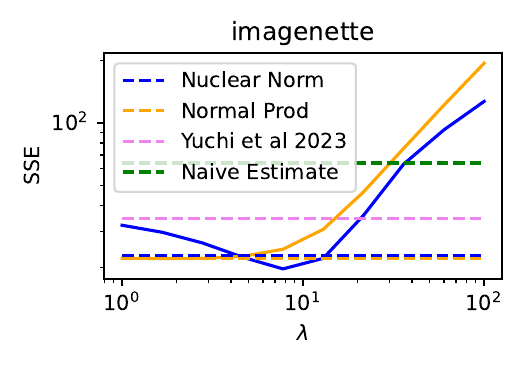}
    \includegraphics[width=0.19\linewidth,trim={1em 1.5em 1em 1em},clip]{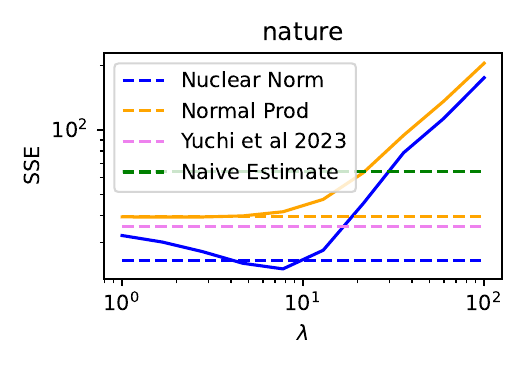}
    \includegraphics[width=0.19\linewidth,trim={1em 1.5em 1em 1em},clip]{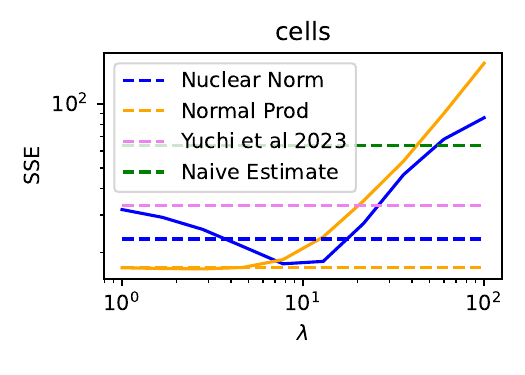}
    \includegraphics[width=0.19\linewidth,trim={1em 1.5em 1em 1em},clip]{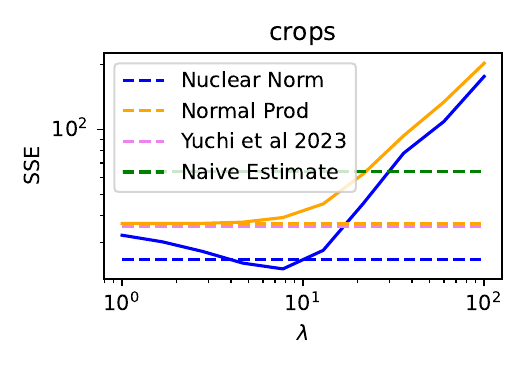}
    \includegraphics[width=0.19\linewidth,trim={1em 1.5em 1em 1em},clip]{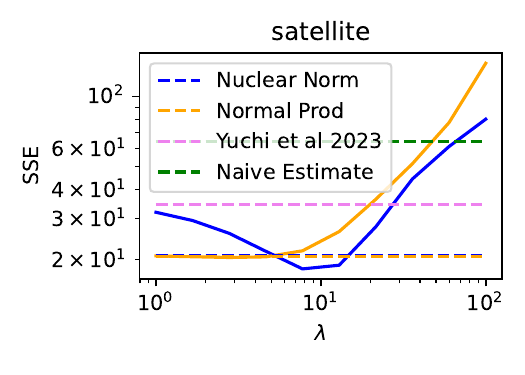}
    
    \includegraphics[width=0.19\linewidth,trim={1em 1.5em 1em 1em},clip]{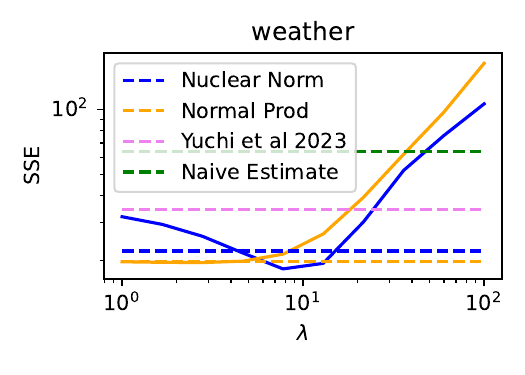}
    \includegraphics[width=0.19\linewidth,trim={1em 1.5em 1em 1em},clip]{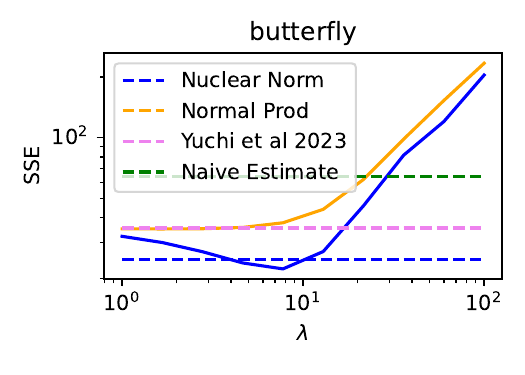}
    \includegraphics[width=0.19\linewidth,trim={1em 1.5em 1em 1em},clip]{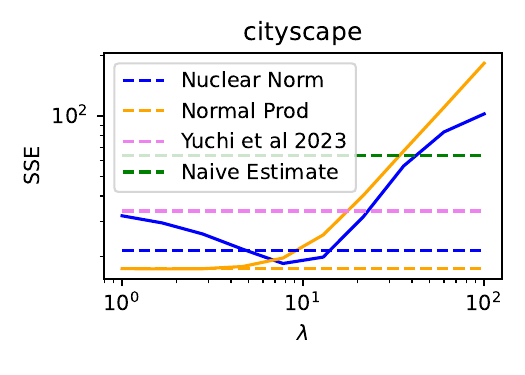}
    \includegraphics[width=0.19\linewidth,trim={1em 1.5em 1em 1em},clip]{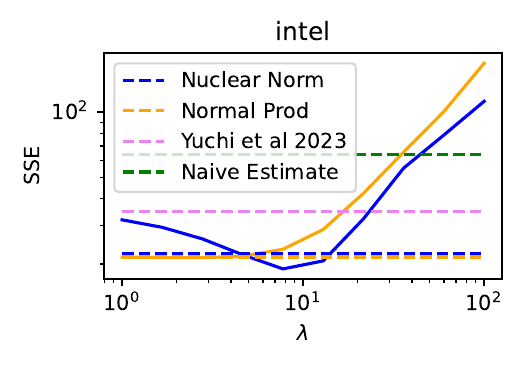}
    \includegraphics[width=0.19\linewidth,trim={1em 1.5em 1em 1em},clip]{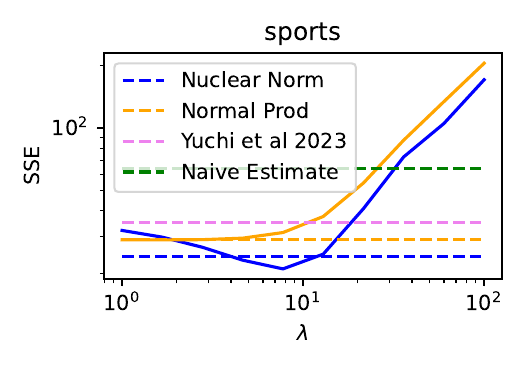}
    \caption{SSE of Bayesian-estimated $\lambda$ (horizontal green line) vs a range of fixed $\lambda$'s (blue line) on Matrix Denoising problems; dotted orange line gives naive estimate; lower is better.}
    
    \label{fig:denoise}
\end{figure}
We now test the Bayesian $\lambda$ estimation of Section \ref{sec:est_lam}/Appendix \ref{sec:app_denoise} on ten datasets of natural images (see Appendix \ref{sec:app_images}).
We ran the sampler with $\lambda$ fixed at 10 values along a logarithmic grid from $0.01$ to $100$, as well as using the adaptive Horseshoe method.
We measure performance in terms of Mean Squared Reconstruction Error.
The results are given in Figure \ref{fig:denoise}.
The solid blue line gives the performance of the method along the grid of $\lambda$ values, while the green dotted horizontal line gives the performance of the adaptive method.
The dotted orange line gives the performance of the Naive method which simply predicts using the observed noisy pixels.
We see that the adaptive method is able to achieve prediction error in line with the optimal fixed $\lambda$ value across all datasets.

\begin{figure*}
    \centering
    \includegraphics[width=0.19\linewidth,trim={1em 1.5em 1em 1em},clip]{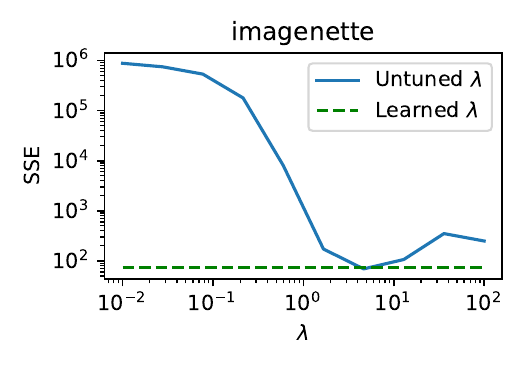}
    \includegraphics[width=0.19\linewidth,trim={1em 1.5em 1em 1em},clip]{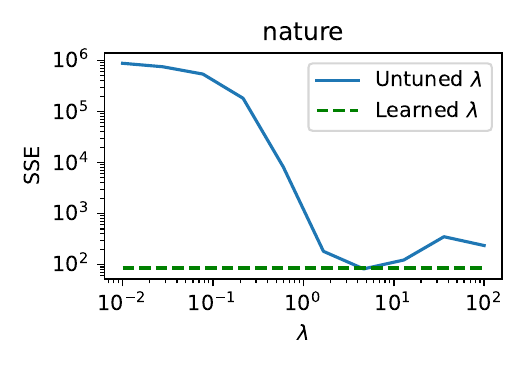}
    \includegraphics[width=0.19\linewidth,trim={1em 1.5em 1em 1em},clip]{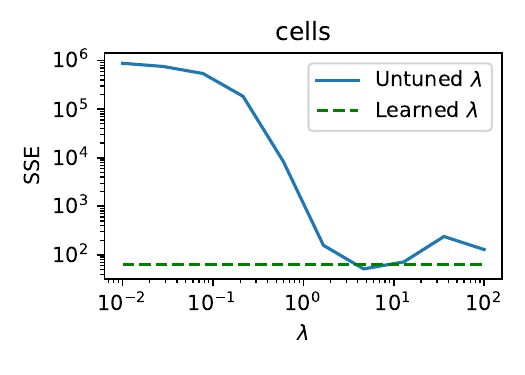}
    \includegraphics[width=0.19\linewidth,trim={1em 1.5em 1em 1em},clip]{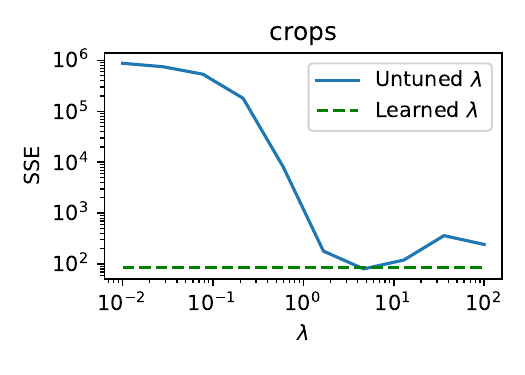}
    \includegraphics[width=0.19\linewidth,trim={1em 1.5em 1em 1em},clip]{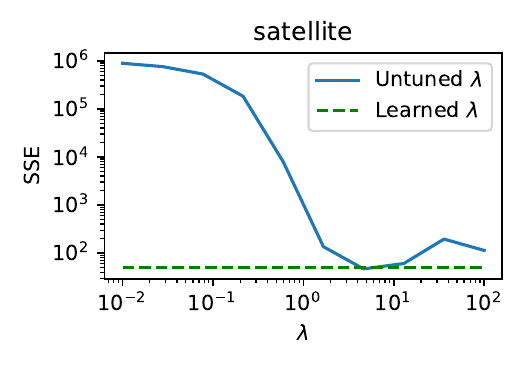}
    
    \includegraphics[width=0.19\linewidth,trim={1em 1.5em 1em 1em},clip]{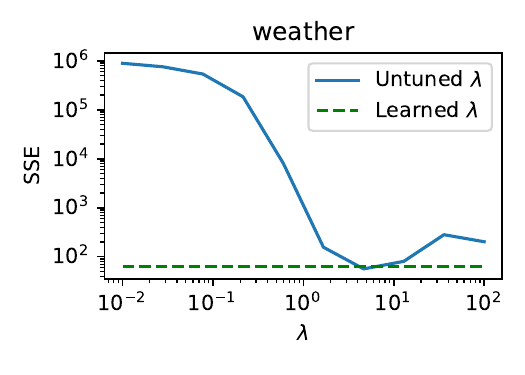}
    \includegraphics[width=0.19\linewidth,trim={1em 1.5em 1em 1em},clip]{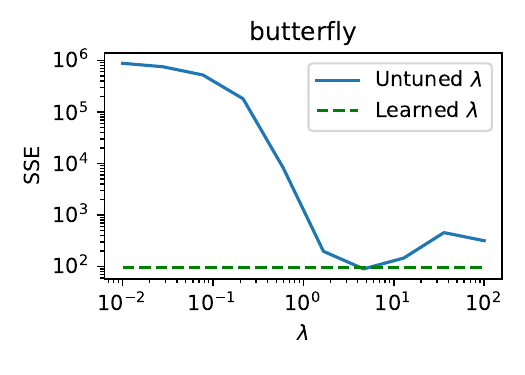}
    \includegraphics[width=0.19\linewidth,trim={1em 1.5em 1em 1em},clip]{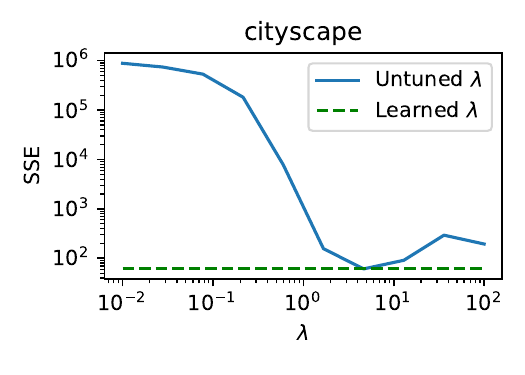}
    \includegraphics[width=0.19\linewidth,trim={1em 1.5em 1em 1em},clip]{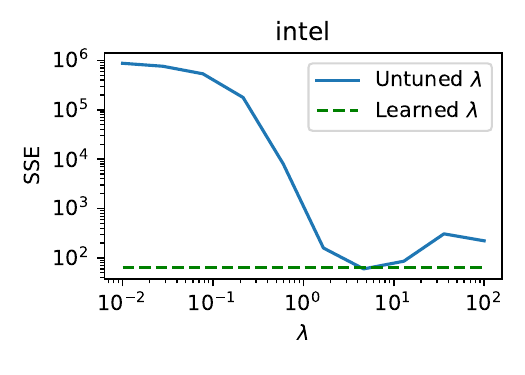}
    \includegraphics[width=0.19\linewidth,trim={1em 1.5em 1em 1em},clip]{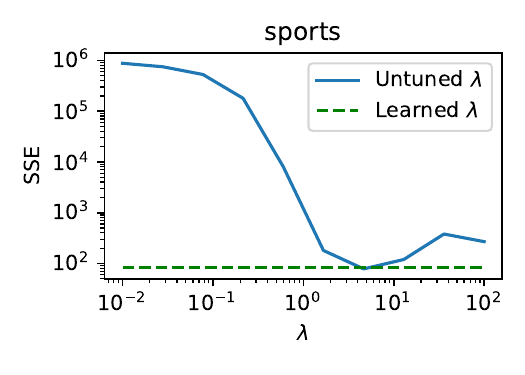}
    
    \caption{SSE of Bayesian-estimated $\lambda$ (horizontal green line) vs a range of fixed $\lambda$'s (blue line) on Matrix Completion problems; lower is better.}
    \label{fig:complete}
\end{figure*}

\subsection{Noisy Matrix Completion}
\label{sec:complete}

We use the same image datasets for the matrix completion benchmark.
We randomly and independently delete pixels (meaning that their value is not observable to the sampler) each with probability 0.5, and again add iid Gaussian noise with a standard deviation of $0.1$ to the observed entries.
We extend the low-rank denoising algorithm of \citet{pereyra2016proximal} to accommodate this matrix completion problem (see Appendix \ref{sec:app_complete}), and again compare the $\lambda$-estimation procedure to fixed a grid of fixed $\lambda$.
The mean square completion error is given in Figure \ref{fig:complete}.
Again the adaptive method consistently performs as well as the optimal value from the grid.

\subsection{Comparison with Normal Product Spectrum}
\label{sec:NNDnpcomp}

\begin{wrapfigure}{r}{0.4\textwidth}
    \vspace{-3em}
    \centering
    \includegraphics[width=.98\linewidth]{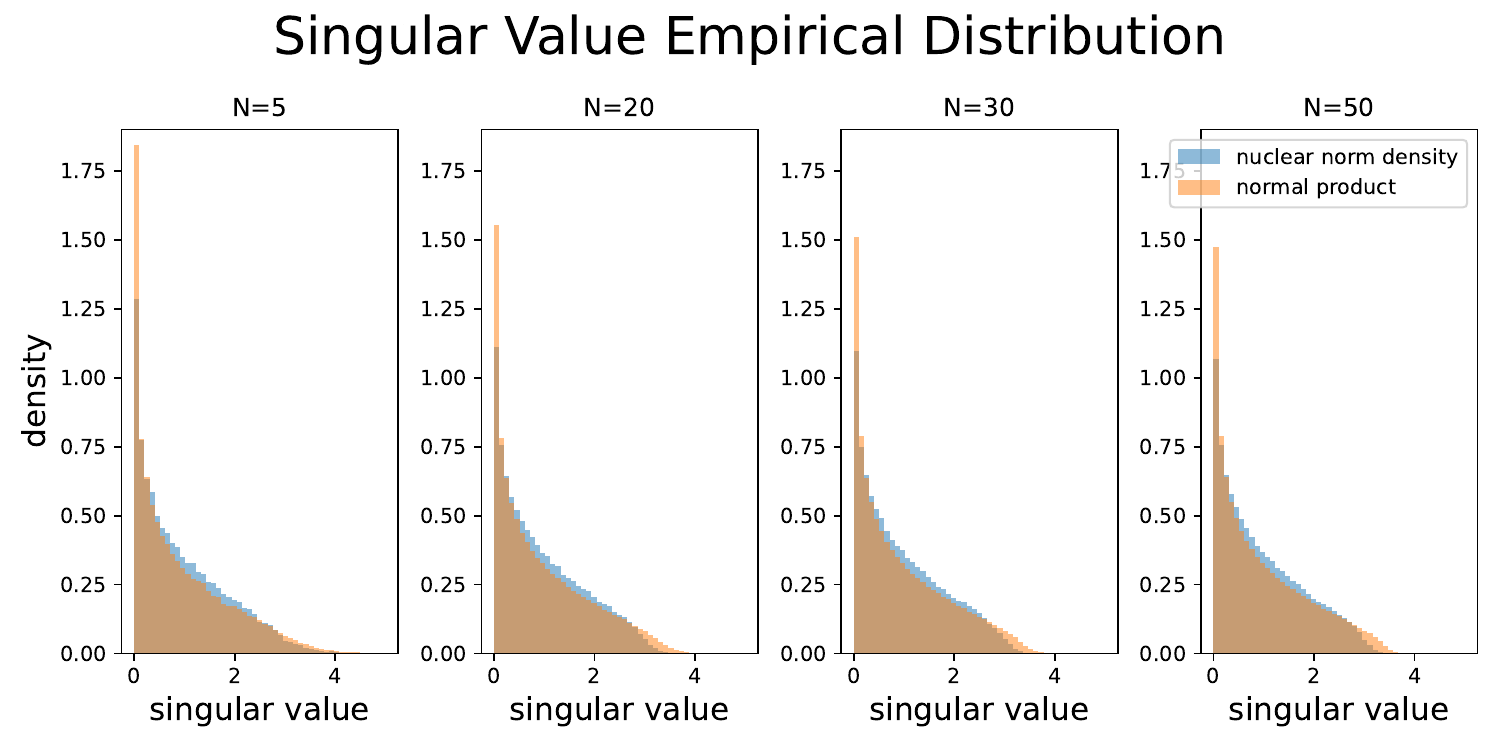}
    \caption{Empirical singular value density, for $\mathrm{NND}(1)$ and $NP(\sigma^2=4/3)$ samples.}
    \label{fig:singvalcomp}
    \vspace{-2em}
\end{wrapfigure}

Motivated by the results in Section \ref{sec:approxrep}, we here explore the quality of the approximation $\mathrm{NND}(\lambda=1)\approx NP(\sigma^2=4/3)$.  Note that we also provide more details on known analytic results for the spectrum of $NP$ in Appendix \ref{sec:npspec}.

We generated samples from the Nuclear Norm distribution using the proximal Langevin method of Section \ref{sec:prior}. For both the Nuclear Norm samples, and the Normal product samples, we rescaled by $1/n$ (the size of the matrix) in order to ensure a consistent limit (for these simulations, we only consider square matrices). For each matrix size, and each of the two distributions we generated 20000 samples. 

In Figure \ref{fig:singvalcomp} we show the empirical distribution of singular values of the two distributions. The Normal Product places somewhat more mass on the smallest singular values, although the distributions otherwise track each other quite closely. Further comparisons are shown in Appendix \ref{sec:furthernpNNDcomparison}. Overall, the Normal Product distribution provides a reasonably good first approximation to the Nuclear Norm Distribution, but there are some notable ``second order" statistical deviations that do not appear to abate with increasing dimension.

\section{Discussion}
\label{sec:discussion}

The nuclear norm distribution has somehow escaped an analysis of its basic properties, which we established in this article.
This included its normalizing constant, the distribution of the nuclear norm itself, the distribution of its singular values, and its close relationship to the (considerably simpler and more tractable) normal product distribution.
Subsequently, we demonstrated how this knowledge could be deployed to improve some Bayesian low-rank inference algorithms.
As a means of conducting matrix denoising and completion, the results of Section \ref{sec:denoise} revealed that Bayesian estimation of $\lambda$ was able to consistently achieve error on par with the optimal penalty term from a grid.
In many real world applications, it would not be possible to perform this grid search due to a lack of ground-truth data, making the adaptive method especially crucial.

By specifying the density of the nuclear norm distribution, we have unlocked the ability to deploy Bayesian hierarchical modeling on the penalty parameter $\lambda$.
We exploited this ability by automatically inferring $\lambda$, but there are many more possibilities for complex data. For example, matrices observed over time and space, such as that from satellite data, could be modeled as coming from a spatiotemporal random process. 
Additionally, there are related distributions beyond the scope of this article which demand future study.
For instance, practitioners in low rank optimization have also developed ``weighted nuclear norm" \cite{mohan2012iterative} methods which use a modification of the nuclear norm with a different penalty weight for each singular value. 

\bibliography{example_paper}
\bibliographystyle{iclr2026_conference}

\newpage
\appendix
\onecolumn

\section{Proximal Operators for Nonsmooth Optimization and Simulation}\label{sec:prox}

Proximal operators are objects from convex analysis \citep{rockafellar1996} which are useful to analyze composite functions \citet{parikh2014proximal} in machine learning \cite{polson2015proximal} which are given as the sum of two functions:
\begin{equation}
    \underset{x\in\mathcal{X}}{\min} \,
    c(x) = f(x) + g(x) \,,
\end{equation}
where $f$ is a complicated differentiable function and $g$ is a simple but nonsmooth function.
In particular, we will be interested in the proximal operator associated with $g$, which maps domain of $g$, in this case $\mathcal{X}$, to itself.
Given some norm on $\mathcal{X}$, the action of the proximal operator on some vector $x$ is defined as:
\begin{equation}
    \textrm{prox}^g(x) = 
    \underset{u\in\mathcal{X}}{\textrm{argmin}}\,
    \frac{\Vert u-x \Vert^2}{2} + g(u) \,.
\end{equation}
At an intuitive level, given an input $x$, a proximal operator finds us some other point in the domain which is not too far away from $x$ but does a better job at minimizing $g$.
In this article, we will be using standard Euclidean norms divided by some positive constant $s$.
The proximal operator will then be denoted as:
\begin{equation}
    \textrm{prox}^g_s(x) = 
    \underset{u\in\mathcal{X}}{\textrm{argmin}}\,
    \frac{\Vert u-x \Vert^2}{2s} + g(u) \,,
\end{equation}
where $\Vert \cdot \Vert$ is either the vector 2 norm or matrix $F$ norm for the purposes of this article.

Proximal operators are most interesting in practice when there is a straightforward way to compute their action on an arbitrary vector, either in closed form or via some efficient iteration.
Perhaps the most well known proximal operator is that of the $\ell_1$ norm, which is given by:
\begin{equation}
    \textrm{prox}^{\lambda\Vert\cdot\Vert_1}_s(x) = 
    \underset{u\in\mathbb{R}^M}{\textrm{argmin}} \,
    \frac{\Vert u-x \Vert_2^2}{2s} + \lambda \Vert u\Vert_1 \,.
    =
    \textrm{sgn}(x)(x-s\lambda)^+ \,,
\end{equation}
where $\textrm{sgn}$ is the sign function and $(\cdot)^+$ is the positive part function, which returns its argument if it is positive and zero otherwise, each applied elementwise on the vector $x$.
This is called the \textit{Soft Thresholding Operator}, or STO.
Similarly, the proximal operator associated with the nuclear norm has an action computable in closed form.
Namely, it is given by applying the STO to the singular values of the matrix. 
If written in the language of matrix functions \cite{higham2008functions}, which extend the definition of scalar functions to define matrix-to-matrix functions by way of applying them individually to singular values of a matrix, this fact may be written using almost exactly the same symbols:
\begin{equation}
    \textrm{prox}^{\lambda\Vert\cdot\Vert_*}_s(X) = 
    \underset{U\in\mathbb{R}^{M\times N}}{\textrm{argmin}} \,
    \frac{\Vert U-X \Vert_F^2}{2s} + \lambda \Vert U\Vert_* 
    =
    \textrm{sgn}(X)(X-s\lambda)^+ \,.
\end{equation}

We return now to our analysis of the composite function $c$. Given some point $x$, a proximal operator can be used to define a quantity called a \textit{gradient-like step}, which can often be used in iterative algorithms on nonsmooth functions in places where gradients would be on smooth ones. 
It is defined as:
\begin{equation}
    G^s(x) = \frac{\textrm{prox}^g_s (x - s \nabla f(x))-x}{s}
\end{equation}

For example, plugging this into the gradient descent formula $x^{t+1}\gets x^t - s G^s(x)$ yields the iteration $x^{t+1} \gets \textrm{prox}^g_s(x-s\nabla f(x))$, which is called proximal gradient descent.

In a seminal article, \citet{pereyra2016proximal} showed how the proximal operator of the negated log density could be used to accelerate sampling from nonsmoooth posterior distributions.
In particular, they recommend to use as a proposal density for a Metropolis-Hastings algorithm the following distribution:
\begin{equation}
    P(X^* | X^t) = N(\textrm{prox}^{-\log P_{X|Y}}_{\delta/2}( X^t), \delta \mathbf{I})
\end{equation}
where $\delta$ determines the scale of the diffusion and $-\log P_{X|Y}$ is the log posterior density.

\section{Coarea Formula and Pushforward of Measure}
\label{sec:coarea}

We will make extensive use of the Smooth Coarea Formula, which does not appear to be well known in the machine learning community in proportion to its usefulness. General references for the material in this section are \cite{chavel} and \cite{segert2019}.

In brief, this formula is a simultaneous generalization of the standard Change of Variables Formula and Fubini's Theorem. Whereas the former tells us how to modify an integral under an invertible, non-linear coordinate mapping, and the latter tells us how to modify it under a non-invertible, linear mapping, the Coarea Formula handles the general case of a smooth \textit{non-invertible, non-linear},  mapping. 

Suppose we have a smooth mapping $F: \mathbb{R}^m\to\mathbb{R}^n$, with $m\geq n$. Denote its Jacobian matrix at the point $x\in\mathbb{R}^m$ by $J_x$; this is a matrix of size $m\times n$.
We require that $F$ is a \textit{submersion}, which means that the Jacobian matrix at each point is full rank. Equivalently, $J_x^TJ_x$ is invertible for each $x$. 

Then the following holds for any integrable $f$: 
\begin{equation}
\int_{x\in\mathbb{R}^m}f(x)\sqrt{\mathrm{det}(J_x^TJ_x)}dx=\int_{y\in \mathbb{R}^n}\left(\int_{z\in F^{-1}y}f(z)V_{F^{-1}y}(dz)\right)dy \,.
\end{equation}
Here $F^{-1}$ gives the preimage, i.e. $F^{-1}y=\{z\in\mathbb{R}^m: F(z)=y\}$, and $V_{F^{-1}y}$ is the intrinsic volume measure on $F^{-1}y$, which is the Riemannian restriction of the standard Euclidean volume form. The set $F^{-1}y$ is guaranteed to be a smooth manifold of dimesnion $m-n$ (and thus to have a well-defined volume form) by the classical Implicit Function Theorem. If we have a set of local coordinates $u: U\mapsto F^{-1}y$, which recall is a smooth invertible map from an open set $U$ in Euclidean space to an open set of $F^{-1}y$, then the volume form takes the local form
\begin{equation}
V_{F^{-1}y}(dz(u))=\sqrt{\det_{ij}\langle u_i',u_j'\rangle }du \,,
\end{equation}
where $du$ is now the Euclidean volume form on the open set $U$. 

The Coarea Formula also holds when $\mathbb{R}^m$ and $\mathbb{R}^n$ are replaced with general Riemannian manifolds, after making the necessary modifications. 

Next, we remark on the connection between the Coarea Formula and the general problem of characterizing the pushforward of a measure. 

In general, given a measure $\mu$ on some measure set $A$ and a measurable mapping $G: A\to B$, the pushforward $G_*\mu$ is the measure on $B$ defined as follows, for any measurable set $S$ on $B$: 
\begin{equation}
(G_*\mu)(S)=\mu(G^{-1}S) \,.
\end{equation}
If $\mu$ is the distribution of a random variable $X$, then $G_*\mu$ is the distribution of the random variable $G(X)$. 

In particular, we have the change of variables formula: 
\begin{equation}
\int_Bf(y)d(G_*\mu)(y)=\int_Af(F(x))d\mu(x)
\end{equation}
for any measurable $f: B\to\mathbb{R}$. Indeed, if $f$ is an indicator function of a measurable set, then this is just a restatement of the definition of the pushforward; by approximating an arbitrary measurable function with a sum of indicators in a standard way, the general formula follows. 

Now assume that $A=\mathbb{R}^m$ and $B=\mathbb{R}^n$, that $G$ is a smooth submersion, and furthermore that the measure $\mu$ has a density $d\mu/dx$. What is the density of the pushforward? 

Well, given any measurable $g: \mathbb{R}^n\to \mathbb{R}$, we apply the Coarea Formula to the function $\mathbb{R}^m\to\mathbb{R}$, $x\mapsto {\frac {d\mu}{dx}}(x)g(G(x))\mathrm{det}(J_{x}^TJ_x)^{-1/2}$, obtaining 
\begin{eqnarray}
\int_{x\in\mathbb{R}^m}g(G(x)){\frac {d\mu}{dx}}(x)dx&=&\int_{y\in\mathbb{R}^n}\left(\int_{z\in G^{-1}y}g(G(z)){\frac {d\mu}{dx}}(z)\mathrm{det}(J_{z}^TJ_{z})^{-1/2}V_{G^{-1}y}(dz)\right)dy\\
& = &\int_{y\in\mathbb{R}^n}g(y)\left(\int_{z\in G^{-1}y}{\frac {d\mu}{dx}}(z)\mathrm{det}(J_{z}^TJ_{z})^{-1/2}V_{G^{-1}y}(dz)\right)dy \,.
\end{eqnarray}
On the other hand, by the change of variables formula, 
\begin{equation}
\int_{x\in\mathbb{R}^m}g(G(x)){\frac {d\mu}{dx}}(x)dx=\int_{y\in\mathbb{R}^n}g(y)d(G_*\mu)(y) \,.
\end{equation}

Comparing the two, and recalling that $g$ was arbitrary we conclude that $G_*\mu$ has the following density:  
\begin{equation}
{\frac {d(G_*\mu)}{dy}}(y)=\int_{z\in G^{-1}y}{\frac {d\mu}{dx}}(z)\mathrm{det}(J_{z}^TJ_{z})^{-1/2}V_{G^{-1}y}(dz) \,.
\end{equation}

\section{Matrix Calculus Refresher}
We will make use below of matrix calculus techniques. Since this has a reputation for being opaque we provide a brief refresher.

Suppose we have a matrix $M(\theta)$ that depends smoothly on some vector of parameters $\theta\in\mathbb{R}^p$. Denote the tangent vectors with respect to each component by $\partial\theta_i$. For our purpose, it is sufficient to regard these as elements of a formal vector space. 

The differential $dM$ is defined as the following (linear) mapping of the tangent vectors: 
\begin{equation}
dM(\partial\theta_i):={\frac {\partial M}{\partial\theta_i}} \,,
\end{equation}
or, even more explicitly, 
\begin{equation}
\left(dM(\partial\theta_i)\right)_{ab}:={\frac {\partial M_{ab}}{\partial\theta_i}} \,.
\end{equation}
The main usefulness is that $d$ satisfies the Matrix Leibniz rule: \begin{equation}d(MN)=(dM)N+M(dN) \,.
\end{equation}
But what does this actually mean, given that $dM$ and $dN$ are not matrices? It really means that if we evaluate both sides at any tangent vector, then they coincide: 
\begin{equation}d(MN)(\partial\theta_i)=dM(\partial\theta_i)N+MdN(\partial\theta_i), \forall i \in \{1, \ldots, p\} \,.
\end{equation}
In this equation, $dM(\partial\theta_i)$ and $dN(\partial\theta_i)$ are bonafide matrices so there is no difficulty in interpreting the products.

In practice, we would have some function: $$F(M)=\text{some expression involving various matrix operations.}$$

Then by applying properties of $d$, we would get: 
$$dF(M)=\text{some other expression involving various matrix operations and $dM$.}$$

Now if we want to actually back out explicit expressions for the partial derivatives $\partial F(M(\theta))/\partial\theta_i$, we would evaluate both sides of the above on the tangent vector $\partial\theta_i$. Operationally, this amounts to replacing every occurrence of $dM$ on the right hand side with the matrix ${\frac {\partial M}{\partial\theta_i}}$, resulting in some matrix expression which is equal to $\partial F(M(\theta))/\partial\theta_i$.  

Finally, one common situation is when each entry of $M$ is an independent variable. In this case, it is easy to see that the derivative of $M$ with respect to the $i,j$ component is the one-hot matrix $e_{ij}$. This fact will be used later without comment. 
\section{Normalizing Constant Proof}
\label{sec:normconstproof}
We prove here the formula for the normalizing constant $C(\lambda)=\int_{\mathbb{R}^{n\times m}} e^{-\lambda \|M\|_*}dM$.
By homogeneity of the norm, it is no loss of generality to take $\lambda=1$. Further, it is no loss of generality to assume $n\leq m$. 

In this case, we can write the singular value decomposition of a given matrix $M$ as $M=U\textrm{diag}(\sigma)V^T$, where $U\in\mathbb{R}^{n\times n}$, $\sigma\in\mathbb{R}^n_{\geq 0}$ and $V\in\mathbb{R}^{m\times n}$. Moreover $U$ is orthogonal and $V$ has orthogonal columns; that is $V^TV=I_n$. 

It is easy to see that $U,V,\sigma$ can be chosen to vary smoothly as a function of sufficiently small perturbations of $M$\footnote{Strictly speaking, this is only true provided that $M$ has distinct singular values; since the set of $M$ that have repeated singular values has measure zero it is no harm to assume we are in the former case}. Thus we can differentiate both sides of the equation $M=U\mathrm{diag}(\sigma)V^T$ with respect to the entries of $M$. By also differentiating the orthogonality constraints we obtain the following system of constraints:
\begin{eqnarray}
dM&=&dU\textrm{diag}(\sigma)V^T+U\textrm{diag}(d\sigma)V^T+U\textrm{diag}(\sigma)dV^T \,,\\
0& = &dU^TU+U^TdU \,,\\
0 & = &dV^TV+V^TdV\,.\\
\end{eqnarray}
Multiply the top equation by $U^T$ on the left and $V$ on the right: 
\begin{eqnarray}
U^TdMV& = & U^TdU\textrm{diag}(\sigma)V^TV+U^TU\textrm{diag}(d\sigma)V^TV+U^TU\textrm{diag}(\sigma)dV^TV \\
& = & U^TdU\textrm{diag}(\sigma)+\textrm{diag}(d\sigma)+\textrm{diag}(\sigma)dV^TV \,.
\end{eqnarray}
By the constraint equations, we know that $U^TdU$ and $dV^TV$ are both anti-symmetric, so in particular will have all zero entries along the diagonal. Thus the product of one of these matrices with a diagonal matrix also has all zeros along the diagonal (in particular the trace vanishes). So taking the trace of both sides of the above, we get $\mathrm{tr}(U^TdMV)=\mathrm{tr}(\textrm{diag}(d\sigma))$. On the other hand, evidently $\mathrm{tr}(\textrm{diag}(d\sigma))=d\|M\|_*$, so evaluating both sides on the tangent vector $\partial M_{ij}$ we get the following formula: 
\begin{eqnarray}
{\frac {\partial \|M\|_*}{\partial M_{ij}}}&=&\mathrm{tr}(U^Te_{ij}V)\\
& =& \mathrm{tr}(VU^Te_{ij})\\
& = & (UV^T)_{ij} \,,
\end{eqnarray}
or, in matrix notation, 
\begin{equation}
{\frac {\partial \|M\|_*}{\partial M}}=UV^T \,.
\end{equation}
Returning to the normalizing constant, we will apply the Coarea formula to the mapping $M\mapsto \|M\|_*$. The Jacobian $J_x$ is in this case a $(mn)\times 1$ matrix so the determinant becomes 
\begin{eqnarray}
\mathrm{det}(J_M^TJ_M)& = & \sum_{ij}\left({\frac {\partial \|M\|_*}{\partial M_{ij}}}\right)^2\\
& = & Tr\left(\left({\frac {\partial \|M\|_*}{\partial M}}\right)^T{\frac {\partial \|M\|_*}{\partial M}}\right)\\
& = & \mathrm{tr}(VU^TUV^T)\\
& = & \mathrm{tr}(V^TV)\\
& = & n \,.
\end{eqnarray}
Applying the Coarea Formula yields 
\begin{eqnarray}
\sqrt{n}\int e^{-\|M\|_*}dM& = & \int_{\mathbb{R}_{\geq 0}}\left(\int_{M'\in S^*(t)}e^{-\|M'\|_*}V_{B^*(t)}(dM')\right)dt\\
& = & \int_{\mathbb{R}_{\geq 0}}e^{-t}\left(\int_{M'\in S^*(t)}V_{S^*(t)}(dM')\right)dt \,,
\end{eqnarray}
where $S^*(t)=\{M: \|M\|_*=t\}$ is the ``unit sphere" with respect to the nuclear norm. The inner integral is simply the $nm-1$-dimensional volume of this sphere (i.e. the ``surface area" when regarded as an $nm-1$-dimensional submanifold of $\mathbb{R}^{nm}$). By homogeneity of the norm, $S^*(t)=tS^*(1)$. Since the sphere has dimension $nm-1$ this implies $\mathrm{Vol}_{nm-1}S^*(t)=t^{nm-1}\mathrm{Vol}_{nm-1}(S^*(1))$. So the integral becomes 
\begin{eqnarray}
\mathrm{Vol}_{nm-1}(S^*(1))\int_{\mathbb{R}_{\geq 0}}e^{-t}t^{nm-1}dt=\mathrm{Vol}_{nm-1}(S^*(1))\Gamma(nm)=\mathrm{Vol}_{nm-1}(S^*(1))(nm-1)! \,,
\end{eqnarray}
as claimed. 
\section{Stochastic Factorization Proof} 
\label{sec:productproof}

We'll actually prove the more general result given in \cite{anderson}, namely that 
\begin{equation}
\label{eqn:phiintegral}
\int \phi(\sigma(M))dM=C_{nm}\int_{\mathbb{R}_+^n}\phi(x)|\Delta(x^2)|\prod_{i=1}x_i^{n-m}dx
\end{equation}

where $\Delta$ is the standard Vandermonde determinant, $\phi$ is any ``nice" function, $\sigma(M)$ is the vector of singular values (ordered from largest-to-smallest, say), and $C_{nm}$ is an explicit constant only depending on $n,m$. For our purposes, the value of the constant is not so important, so we won't explicitly keep track of it, but it can be backed out from the following argument with a bit more work. Accordingly, we'll use the symbol $\sim$ to mean two quantities are equal up to a multiplicative constant which only depends on $n,m$.

While this result is proved in \cite{anderson}, their proof is quite technical and involved, while the following uses only multivariate calculus. We think the following proof is probably well-known, but we don't know a reference so we record it here.

Equation \ref{eqn:phiintegral} will follow immediately from the standard multivariate change-of-variables formula, provided that we can show 
\begin{equation}
\label{eqn:jac_det}
\mathrm{det}({\frac {d\mathrm{Prod}}{d(U,s,V)}})\propto\pm \Delta(s^2)\prod_{i=1}s_i^{n-m}
\end{equation}
where $\mathrm{Prod}$ denotes the product mapping:

\begin{eqnarray}
\mathrm{Prod}:O(n)\times\mathbb{R}_{+,\leq}^n\times S(n,m)&\to&\mathbb{R}^{n\times m}\\
\mathrm{Prod}(U,s,V)&=& U\mathrm{diag}(s)V^T
\end{eqnarray}
where $R_{+,\leq}$ denotes the set of sequences which are non-negative and non-decreasing. 

So it suffices to establish Equation \ref{eqn:jac_det}. Our strategy for this will be to consider a particular random matrix $W$ for which the density of $W$, as well as the joint density of its singular values/vectors are both known in explicit form. The desired Jacobian determinant is then given as the ratio of the two densities, as per the standard change-of-variables formula.

To wit, define $W$ as a random $n\times m$ matrix with independent unit Gaussian entries. On the one hand, the density of this matrix is evidently given by 
\begin{equation}
p_W(X)\sim e^{-\|X\|_F^2/2}
\end{equation}
(here $X$ is a dummy variable). Consider now the random vector $\lambda_W$ defined as the eigenvalues of $WW^T$ (ordered from largest to smallest, say). 
Since $WW^T$ has a Wishart distribution with $m$ degrees-of-freedom, the distribution of $\lambda_W$ follows the well-known Wishart spectral density:
$$p_{\lambda_W}(\lambda)\sim \Delta(\lambda)\prod_ie^{-\lambda_i/2}\lambda_i^{(m-n-1)/2}$$
If we define the random vector $s_W$ as the singular values of $W$, ordered as above, then $s_W$ is given as the element-wise square root of $\lambda_W$.  Thus applying the standard change-of-variables formula, we obtain the following density:
$$p_{s_W}(s)\sim \Delta(s^2)\prod_i e^{-s_i^2/2}s_i^{m-n}$$

Letting $U_W$ and $V_W$ denote the singular vectors of $W$, it is clear by the rotational symmetry of $W$ that both of these matrices are uniformly distributed over their domains of definition (respectively $O(n)$ and $S(n,m)$). 

Conversely, it follows from the above discussion that if we consider random matrices $U\in O(n),V\in S(n,m)$ and a random vector $s\in\mathbb{R}_{+,\leq}^n$ with joint density $p(U,s,V)\sim p_{S_w}(s)$, then the random matrix $\mathrm{Prod}(U,s,V)$ has the same distribution as $W$

Yet again by the standard change-of-variables formula, we conclude that the Jacobian of the mapping is related to the two densities as
\begin{equation}
\mathrm{det}({\frac {d\mathrm{Prod}}{d(U,s,V)}})=\pm {\frac {p_{S_w}(s)}{p_W(\mathrm{Prod}(U,s,V))}}
\end{equation}
which gives the result after plugging in and simplifying. 

\begin{remark}
One might wonder whether this argument is circular, since it relies on the form of the Wishart spectral density, which (one might think) possibly requires knowing the form of the Jacobian to derive in the first place. However it is possible to derive the form of the Wishart spectrum in a way that avoids use of this determinant or something equivalent (cf. https://galton.uchicago.edu/~lalley/Courses/386/ClassicalEnsembles.pdf), so the argument is actualy not circular. 

\end{remark}

\section{Normal Product Density Asymptotic}
\label{sec:npdensproof}
Consider matrices $U,V$ with iid unit normal entries, of shape $n\times n$ and set $X=UV$. Let $P(X)$ be the density function. And let $\tau>0$ be a positive number. 

By the Coarea Formula, we have
$$P(X)={\frac 1 {(2\pi \tau)^{n^2}}} \int_{S_X}J(U,V)^{-1}e^{(-\|U\|_F^2/(2\tau)-\|V\|_F^2/(2\tau)}dS_X(U,V)\,,$$

where $J(U,V)$ is defined in terms of the Jacobian matrix 
$Jac$ 
of the mapping $(U,V)\to UV$: 
$$J(U,V):=\mathrm{det}((Jac)(Jac)^T)^{1/2},$$
$S_X:=\{(U,V):UV=X\}$, and $dS_X$ is the intrinsic volume form on $S_X$. 

The basic idea is we want to let $\tau\to 0$ and perform a Laplace expansion of the integral.  
The issue, however, is that the Hessian matrix of the mapping $U,V\mapsto \|U\|_F^2+\|V\|_F^2$ is not invertible, even when restricted to $S_X$. Already this can be seen in the $2\times 2$ case by a simple calculation. 

Intuitively, this makes sense, because we know the minimizers of the functional must take the form $U_{svd}D^{1/2},D^{1/2}V_{svd}$, where $U_{svd}$ and $V_{svd}$ are \textit{orthogonal} matrices (namely, the singular vectors of $X$) and $D$ is diagonal. 

So define the submanifold 
\begin{equation}S_X'=\{(U,V)\in S_X: U^TU, VV^T\text{ are diagonal}\} \,.
\end{equation}

By the above discussion, any minimizers of the functional must lie in $S_X'$. What is less obvious, but will be shown later, is that if $U,V$ are a minimizer, then the tangent space to $S_X'$ at $(U,V)$ is the largest subspace on which the Hessian at $U,V$ is invertible. 

Thus, we can still perform the Laplace expansion, but when computing the Hessian, we have to just remember to throw out the ``null" dimensions (i.e. the ones orthogonal to the tangent space of $S_X'$), and treat the ambient dimension as $\textrm{dim}(S_X')$ rather than $\textrm{dim}(S_X)$. 

When we perform the expansion, we will thus get an expression like the following: 
\begin{equation}\label{eqn:laplacev1}P(X) \sim n!2^{n}{\frac 1 {(2\pi \tau)^{n^2}}}(2\pi\tau)^{n(n+1)/4}|Hess(U_{min},V_{min})|^{-1/2}VF(U_{min},V_{min})J(U_{min},V_{min})^{-1}e^{-\|X\|_*/\tau} \,,
\end{equation}
where $U_{min}$ and $V_{min}$ are any choice of minimizers that attain the nuclear norm. The factor of $n!2^n$ arises from the other symmetric minimizers obtained by reordering the singular values and/or multiplying some pairs of singular vectors by $-1$.
And $VF$ denote the intrinsic Riemannian volume form on $S_X$ expressed in local coordinates. Note finally the crucial $\tau^{n(n+1)/4}$ factor, with the exponent reflecting the fact that $\textrm{dim}(S_X')/2=n(n+1)/4$. 

Finally, by rotational symmetry, it is no loss of generality to assume that $X$ is a diagonal matrix with positive entries, in which case we may take $U_{min}=V_{min}=X^{1/2}$. Let $\{d_i\}_i$ be the diagonal elements (i.e. singular values) of $X$. We'll also assume that all singular values are distinct and strictly positive, which is a full-measure condition and thus harmless for our purpose. We now analyze each of the terms above in turn. 

\subsection{Hessian Computation}
Let's first consider the Hessian term. The constraint set $S_X$ is defined by $UV=X$, so we can locally parameterize it $W\mapsto (W,W^{-1}X)$, $W\in\mathbb{R}^{n\times n}$ whence the function inside the exponential becomes $F(W)= \|W\|_F^2/2+\|W^{-1}X\|_F^2/2$, in local coordinates. We can easily compute the first derivatives: 
\begin{eqnarray}
dF(W)& = & d\mathrm{tr}(W^TW)/2+d\mathrm{tr}(X^T(W^{-1})^TW^{-1}X)/2\\
& = & d\mathrm{tr}(W^TW)/2+d\mathrm{tr}((WW^T)^{-1}XX^T)/2\\
& = & \mathrm{tr}(W^TdW)+\mathrm{tr}(d((WW^T)^{-1})XX^T)/2 \,.
\end{eqnarray}
Breaking out the internal term: 
\begin{eqnarray}
d((WW^T)^{-1})& = & -(WW^T)^{-1}d(WW^T)(WW^T)^{-1}\\
& = &-(WW^T)^{-1}(dWW^T+WdW^T)(WW^T)^{-1}
\end{eqnarray}
and plugging back in yields 
\begin{eqnarray}
dF(W)& = & \mathrm{tr}(W^TdW)-{\frac 1 2}\mathrm{tr}((WW^T)^{-1}dWW^T(WW^T)^{-1}XX^T)\\
&-&{\frac 1 2}\mathrm{tr}((WW^T)^{-1}WdW^T(WW^T)^{-1}XX^T) \,,\\
dF(W)& = & \mathrm{tr}(W^TdW)-{\frac 1 2}\mathrm{tr}(W^T(WW^T)^{-1}XX^T(WW^T)^{-1}dW)\\
&-&{\frac 1 2}\mathrm{tr}(((WW^T)^{-1}XX^T(WW^T)^{-1}W)^TdW) \,,\\
({\frac {dF(W)}{dW}})^T& = & W^T(I-(WW^T)^{-1}XX^T(WW^T)^{-1}) \,.
\end{eqnarray}
To get the Hessian, we need to take the differential again of the right hand side: 
\begin{eqnarray}
H& = & dW^T(I-(WW^T)^{-1}XX^T(WW^T)^{-1})\\
&+&W^T(-d((WW^T)^{-1})XX^T(WW^T)^{-1}-(WW^T)^{-1}XX^Td((WW^T)^{-1}))\\
& = & dW^T(I-(WW^T)^{-1}XX^T(WW^T)^{-1})\\
&+& W^T(WW^T)^{-1}(dWW^T+WdW^T)(WW^T)^{-1}XX^T(WW^T)^{-1}\\
&+&W^T(WW^T)^{-1}XX^T(WW^T)^{-1}(dWW^T+WdW^T)(WW^T)^{-1} \,,
\end{eqnarray}
where the indices are such that $H(\partial_{ij})_{i'j'}=\partial^2 W'/\partial_{ij}\partial_{j'i'}$. 

Now we can finally evaluate this at $W_{min}=X^{1/2}$ (recall we rotated coordinates so that $X$ is a positive, diagonal matrix). In this case, we evidently have $X=X^T=WW^T$, so the above formula simplifies: 
\begin{eqnarray}
H& = & X^{-1/2}(dW(X^{1/2})^T+X^{1/2}dW^T)+X^{1/2}(dW(X^{1/2})^T+X^{1/2}dW^T)X^{-1} \,.
\end{eqnarray}

Evaluating the tensor (and using symmetry of $X$), we obtain:
\begin{eqnarray}
H(\partial_{ij})_{i',j'}& = & (X^{-1/2}e_{ij}X^{1/2})_{i'j'}+(e_{ji})_{i'j'}+(X^{1/2}e_{ij}X^{-1/2})_{i'j'}+(Xe_{ji}X^{-1})_{i'j'}\\
& = & \delta_{ij=i'j'}(d_i^{-1/2}d_j^{1/2}+d_i^{1/2}d_j^{-1/2})+\delta_{ij=j'i'}(1+d_j/d_i) \,.\\
\label{eqn:Hformula}
& = & (\delta_{ij=i'j'}+d_j^{1/2}d_i^{-1/2}\delta_{ji=i'j'})(d_i^{-1/2}d_j^{1/2}+d_i^{1/2}d_j^{-1/2})\end{eqnarray}

where we used the fact that, for diagonal matrices,
$(D_1e_{ij}D_2)_{i'j'}=(D_1)_{i'i}(D_2)_{jj'}=\delta_{i'i}d^1_i\delta_{jj'}d^2_j$.

We have written the full $(n\times n)\times (n\times n)$ Hessian (i.e. the Hessian restricted to the tangent space of $S_X$). But by the above discussion, we know that this matrix is not invertible (exercise: verify this directly from the above formula!). As in the previous discussion, we need to restrict the matrix to the tangent space of the solution manifold $S_X'$, which can locally be parameterized by the indices $i\leq j$ (and correspondingly $i'\leq j'$). That is, if we have specified the values of $W_{ij}$ for $i\leq j$, then the rest of the values of the matrix are determined by the orthogonal rows constraint. Note that this is a total of $(n+1)n/2$ independent parameters. 

Now, consider the $\delta_{ji=i'j'}$ term. Suppose this is non-zero. By the constraints on $i,j$, this implies that 
$$i'=j\geq i=j'\geq i'\,,$$
which forces $j=i=i'=j'$. 

Therefore, we have 
\begin{equation}
H_{ij,i'j'}=(\delta_{ij=i'j'}+\delta_{iji'j'})(\sqrt{d_i/d_j}+\sqrt{d_j/d_i})
\end{equation}
with $i\leq j,i'\leq j'$. 

If we order the rows by $j$, and then by $i$ (and similarly for the columns), we see that the matrix is block-diagonal, where each block itself is diagonal of the form 
\begin{equation}
\sqrt{d_1/d_j}+\sqrt{d_j/d_1},\dots, 
\sqrt{d_{j-1}/d_j}+\sqrt{d_j/d_{j-1}}, 4 \,,
\end{equation}
the 4 corresponding to the $i=j$ term. So the determinant of the matrix is the product of determinant of the blocks: 
\begin{equation}
det_{i\leq j,i'\leq j'}(H_{ij,i'j'})=4^n\prod_{i<j\leq n}\sqrt{d_i/d_j}+\sqrt{d_j/d_i} \,.
\end{equation}

One claim that we still need to verify is that the tangent space $S_X'$ is the \textit{maximal} subspace on which the Hessian is invertible. But this is easy to see from the above discussion. Indeed, consider the ordering from above in which $H$ is diagonal. 

Assume to the contrary that there exist some vector $v$ orthogonal to the tangent vectors $\partial W_{ij}, i\leq j$, such that $Hv=cv$ for some $c\neq 0$. We can write $v$ as a linear combination of the remaining vectors $W_{ij},i>j$. Therefore, there must exist at least one pair $i_0,j_0$ such that $i_0>j_0$ and the augmented matrix $\tilde{H}$ given by adding the row and column corresponding to $i_0,j_0$ is invertible. 

Using Equation \ref{eqn:Hformula} for $H$, it follows that the column $\tilde{H}_{i_0j_0,:}$ is proportional to the column $\tilde{H}_{j_0i_0,:}$. This contradiction establishes the claim.

\subsection{Jacobian Computation}

To evaluate $J$, we first need to write out $Jac$, or equivalently the derivatives $\partial X_{ij}/\partial U_{ab}$ and $\partial X_{ij}/\partial V_{ab}$ where $X=UV$. 
A straightforward calculation gives:
\begin{eqnarray}
\partial X_{ij}/\partial U_{ab}& = & \delta_{ai}V_{bj}\\
\partial X_{ij}/\partial V_{ab}&=&\delta_{bj}U_{ia} \,.
\end{eqnarray}
Thus the entries of the matrix $G:=Jac(Jac)^T$ are 
$$G_{ij,i'j'}=\sum_{ab} \delta_{ai}V_{bj} \delta_{ai'}V_{bj'}+\delta_{bj}U_{ia}\delta_{bj'}U_{i'a}=\delta_{ii'}(V^TV)_{jj'}+\delta_{jj'}(U^TU)_{i'i} \,.$$
Now specialize to $U=U_{min}$ and $V=V_{min}$, which, recall, by our choice of coordinates, are simply $U_{min}=V_{min}=X^{1/2}$. Thus the matrix simplifies to
$$G_{ij,i'j'}=\delta_{ii'}\delta_{jj'}d_j+\delta_{ii'}\delta_{jj'}d_i=\delta_{ij=i'j'}(d_j+d_i) \,.$$
The matrix is diagonal, so 
$$J(U_{min},V_{min})=\mathrm{det}(G)^{1/2}=\prod_{ij}(d_i+d_j)^{1/2}\,.$$

\subsection{Volume Form Computation}
We also need to work out the volume form on $S_X$. We regard $S_X$ as a submanifold of $\mathbb{R}^{n\times n}\times \mathbb{R}^{n\times n}$. We can locally parametrize this by coordinates similarly to the Hessian computation, by $W\in\mathbb{R}^{n\times n}$ and define $U(W)=W, V(W)=W^{-1}X$, which is well defined for $W$ in a sufficiently small neighborhood of $I$. 

We need the partial derivatives: 
\begin{eqnarray}
\partial U_{ab}/\partial W_{ij}& = & \delta_{ab=ij}\\
\partial V_{ab}/\partial W_{ij}& = & -(W^{-1}e_{ij}W^{-1}X)_{ab}
\end{eqnarray}
where $e_{ij}$ is the matrix which is all zeros except for a 1 in the $ij$ entry. 
By general Riemannian geomtry (cf. \cite{chavel}), the volume form is given by the square root of the determinant of following matrix:
\begin{eqnarray}
G_{ij,i'j'}&=&\sum_{ab}\partial U_{ab}/\partial W_{ij}U_{ab}/\partial W_{i'j'}+\partial V_{ab}/\partial Z_{ij}W_{ab}/\partial W_{i'j'}\\
& = & \delta_{ij=i'j'}+\sum_{ab}(W^{-1}e_{ij}W^{-1}X)_{ab}(W^{-1}e_{i'j'}W^{-1}X)_{ab} \,.
\end{eqnarray}
(Note that this is a different matrix than the matrix $G$ from the preceding section, although both matrices are Gram matrices so we will keep the $G$ notation).
The volume form is, as normal, given by square root of determinant of this matrix. 

Recall that in our choice of coordinates, the minimum occurs when $W=X^{1/2}$ (and thus $W^{-1}X=X^{1/2}$), so  
\begin{eqnarray}
 G_{ij,i'j'}&=&\delta_{ij=i'j'}+\sum_{ab}d_i^{-1/2}\delta_{ia}d_j^{1/2}\delta_{jb}+d_{i'}^{-1/2}\delta_{i'a}d_{j'}^{1/2}\delta_{j'b}\\
 & =& \delta_{ij=i'j'}+d_i^{-1/2}d_j^{1/2}+d_{i'}^{-1/2}d_{j'}^{1/2} \,,
\end{eqnarray}
using the fact:
$(Be_{ij}A)_{ab}=B_{ai}A_{jb}$. 

Now, define the vector $v_{ij}:=\sqrt{d_j/d_i}$ and let $\textbf{1}$ be the vector of all ones (of length $n^2$). We can then write the matrix as follows: 
\begin{eqnarray}
G_{ij,i'j'}& = & \delta_{ij=i'j'}+v_{ij}\textbf{1}^T+\textbf{1}v_{i'j'}^T\\
& = & \delta_{ij=i'j'}+(v\textbf{1}^t+\textbf{1}v^t)_{ij,i'j'}\\
G& = & I+\left(\begin{array}{cc}v & \textbf{1} \\\end{array}\right)\left(\begin{array}{cc}\textbf{1} & v\\\end{array}\right)^T \,.
\end{eqnarray}

By Sylvester's determinant identity ($\mathrm{det}(I+AB)=\mathrm{det}(I+BA)$): 
\begin{eqnarray}
\mathrm{det}(G)&=&\mathrm{det}(I+\left(\begin{array}{cc}\textbf{1} & v\\\end{array}\right)^T\left(\begin{array}{cc}v & \textbf{1} \\\end{array}\right))\\
& = & det \left(\begin{array}{cc}1+\sum_{ij}v_{ij} &N^2 \\ \sum_{ij}v_{ij}^2 & 1+\sum_{ij}v_{ij}\end{array}\right)\\
& = & (1+\sum_{ij}v_{ij})^2-N^2\sum_{ij}v_{ij}^2\\
& = & (1+\sum_id_i^{-1/2}\sum_jd_i^{1/2})^2-n^2\sum_jd_j\sum_id_i^{-1} \,.
\end{eqnarray}

\subsection{Complete Result}
Plugging all the intermediate results from the preceding subsections back into Equation \ref{eqn:laplacev1} yields:
\begin{eqnarray}P(X/\tau)
\sim D(n) {\frac 1 {\sqrt{\prod_{i<j\leq n}\sqrt{d_i/d_j}+\sqrt{d_j/d_i}}}}\sqrt{{\frac {(1+\sum_id_i^{-1/2}\sum_id_i^{1/2})^2-n^2\sum_id_i\sum_id_i^{-1}}{\prod_{ij}d_i+d_j}}}e^{-\|X\|_*/\tau} \,,
\end{eqnarray}
with:
\begin{equation}
    D(N) = 
    {\frac {n!}{(2\pi\tau)^{3n^2/4-n/4}}} \,.
\end{equation}
If we just consider the dominant exponential part (i.e. the factor the depends on $\tau$), then we get the marginally less precise formula:
\begin{eqnarray}
P(X/\tau)&\sim &C \tau^{-3n^2/4+n/4}e^{-\|X\|_*/\tau} \,,
\end{eqnarray}
as given in the main text. 

We expect the non-square case to be amenable to a similar analysis but leave it for future work.

\section{Spectrum of Normal product distribution}
\label{sec:npspec}
Here we state some known theoretical results about the Normal Product distribution.

It is easily seen that, just like the Nuclear Norm distribution, the Normal product distribution is symmetric under left and right multipication by orthogonal matrices. Therefore, in a sense, all of the interesting information about this distribution is contained in its spectrum. 

In the limit where $n\to\infty$, there are exact analytical formulas for both the eigenvalue density and singular value density for the Normal product distribution \cite{burda,burda2}. For simplicity, we will state the results for the square case only, although the results for singular values can be extended to the non-square case as well (obviously the matrix must be square to even make sense of eigenvalues in the first place).

In both cases, we will consider a random matrix $Y_n:=n^{-1}X_1X_2$ where $X_1,X_2$ have iid unit normal entries, and take $n\to\infty$. 

We begin with the eigenvalue result. In this case, the limiting eigenvalue density $\rho$ in the complex plane is given by \cite{burda}: 
\begin{equation}
\label{eqn:npeigvaldens}
\rho(z)={\frac 1 {\pi}}\textbf{1}_{|z|\leq 1}{\frac 1 {|z|}} \,.
\end{equation}

In the singular value case, we consider the limiting density $\rho_{sv}$ of the \textit{squared} singular values. This density satisfies the following cubic equation: 
\begin{equation}
\lambda^2(\pi\rho_{sv}(\lambda))^3-\lambda(\pi\rho_{sv}(\lambda))-1=0 \,,
\end{equation}
where now $\lambda$ is real and positive\cite{burda2}. It is possible to back out an explicit formula for $\rho_{sv}$ using the standard cubic formula. 

\section{Unimodality of the Posterior for Gaussian Likelihood}
\label{sec:app_uni}

Here we show that if $P_{\gamma^2}(\frac{1}{a^2})$ is log-concave in $a$, $\lambda$ is a fixed constant greater than zero and $Y$ is given by $X$ perturbed additively by iid normal errors, then the joint posterior on $X,\gamma^2$ is unimodal (in the sense of having connected upper level sets) when using the conditional prior $P(X|\gamma^2,\lambda) = \mathrm{\mathrm{NND}}(\frac{\lambda}{\sqrt{\gamma^2}})$.
The proof strategy, based on that given by \citet{park2008bayesian} for the Bayesian lasso case, is to establish concavity of the log-posterior under a homeomorphism of the parameters.
Since concavity implies that the upper level sets of a function are connected, the homeomorphism preserves the connectedness and we have that the upper level sets are connected in the original parameter space as well.

The log likelihood with respect to $X,\gamma^2$ is:
\begin{equation}
    \log P(Y|X,\gamma^2) = C_1  -\frac{MN}{2} \log \gamma^2
    -\frac{1}{2\gamma^2} \Vert Y - X\Vert_F^2 \,,
\end{equation}
where $C_1$ is a constant independent of $X$ and $\gamma^2$ and the log prior density of $X|\lambda,\gamma^2$ is :
\begin{equation}
    \log P(X|\lambda,\gamma^2) = C_2 -\frac{MN}{2}\log{\gamma^2} - \frac{\lambda}{\sqrt{\gamma^2}} \Vert X\Vert_* \,,
\end{equation}
yielding a posterior density for $X,\gamma^2$ of
\begin{equation}
    \log P_{\gamma^2}(X,\gamma^2|Y,\lambda) = C_3 + \log P(\gamma^2) -
    \frac{\lambda}{\sqrt{\gamma^2}} \Vert X \Vert_* - 
    MN \log\gamma^2
    -\frac{1}{2\gamma^2} \Vert Y - X\Vert_F^2 \,.
\end{equation}
We now reparameterize as $\Phi = \frac{X}{\sqrt{\gamma^2}}$, $\rho = \frac{1}{\sqrt{\gamma^2}}$, yielding:
\begin{equation}
    \log P(X,\gamma^2|Y,\lambda) = 
    C_3 + \log P_{\gamma^2}(\frac{1}{\rho^2}) -
    \lambda \Vert \Phi \Vert_* + 
    2 MN \log\rho
    -\frac{1}{2} \Vert \rho Y - \Phi\Vert_F^2 \,.
\end{equation}
Note that we do not need any Jacobian terms here because we are not trying to find the posterior distribution of $\Phi,\rho$.
Rather, we are simply using this transformation to analyze the posterior of $X,\gamma^2$.

Since $\Vert\Phi\Vert_*$ is convex, its negation is concave, as is the logarithm function. 
By assumption, $\log P_{\gamma^2}(\frac{1}{\rho^2})$ is also concave.
Since concavity is preserved under summation then we need only show that the quadratic term is concave.
We establish this by considering the vectorized form of the problem, where $\mvec:\mathbb{R}^{m\times n}\to\mathbb{R}^{mn}$ stacks columns of a matrix into a vector:
\begin{equation*}
    \Vert \rho Y - \Phi \Vert_F^2 = 
    \Vert \rho \mvec(Y) - \mvec(\Phi) \Vert_2^2 \,.
\end{equation*}
This norm may be expressed as the following quadratic form:
\begin{align*}
    & \begin{bmatrix}
        \mvec(\Phi)^\top & \rho
    \end{bmatrix}
    \begin{bmatrix}
        \mathbf{I}_{mn\times mn} & -\mvec(Y) \\
        -\mvec(Y)^\top & \Vert Y \Vert_F^2
    \end{bmatrix}
    \begin{bmatrix}
        \mvec(\Phi) \\ \rho
    \end{bmatrix}
    \\&=
    \begin{bmatrix}
        \mvec(\Phi)^\top & \rho
    \end{bmatrix}
    \Bigg(
    \begin{bmatrix}
        \mathbf{I}_{mn\times mn} \\ -\mvec(Y) ^\top
    \end{bmatrix}
    \begin{bmatrix}
        \mathbf{I}_{mn\times mn} & -\mvec(Y) 
    \end{bmatrix}
    \Bigg)
    \begin{bmatrix}
        \mvec(\Phi) \\ \rho
    \end{bmatrix}
\end{align*}
which exhibits the quadratic coefficient matrix as an outer product and thus as a positive semidefinite matrix, and consequently makes clear that this term will be concave when multiplied by $-\frac{1}{2}$, establishing our result.

\subsection{Posterior Multimodality under the Unconditional Prior}

In the event that we use the unconditional prior $\log P(X|\lambda)=-\lambda\Vert X\Vert_*$, it is easy to encounter multimodality.
We created Figure \ref{fig:uni} of the main text via the following simple experiment.
A ``true" $\tilde{X}\in\mathbb{R}^{m\times n}$ was created from iid standard normal entries, and subsequently $Y$ was created by adding iid Gaussian noise with a true variance of $\tilde{\gamma}^2=0.01$.
We set $\lambda=15$ and evaluate the posterior density under both the conditional and unconditional prior varying only the first singular value of $\tilde{X}$, denoted $\sigma_1(\tilde{X})$.
For $\gamma^2$ we use the reference prior $P(\gamma^2)\propto\frac{1}{\gamma^2}$.
That is to say, the 2D function being visualized is:
\begin{equation*}
    f(a,\gamma^2) = P(X=a\mathbf{u}_1\mathbf{v}_1^\top + \sum_{j=2}^3 \sigma_i \mathbf{u}_i \mathbf{v}_i^\top, \gamma^2|Y,\lambda) \,,
\end{equation*}
where $\mathbf{u},\mathbf{v}, \sigma$ are given by the singular value decomposition of $\tilde{X}$.
In the left panel of Figure \ref{fig:uni}, the mode on the left occurs near $a\approx0$, $\gamma^2 \approx \frac{\Vert Y \Vert_F^2}{mn}$, which represents the null model, and the mode on the right is near $a\approx \sigma_1(\tilde{X}), \gamma^2\approx \tilde{\gamma^2}$, which is the ``correct" mode insofar as it is closer to the data generating parameters.
By contrast, the right panel of Figure \ref{fig:uni} shows us that the conditional prior gives us a single mode which we might think of as a compromise between the two modes of the unconditional case.

\section{Further Results on Normal Product vs. Nuclear Norm Distribution}
\label{sec:furthernpNNDcomparison}
In Figure \ref{fig:nucnormcomp} we show the empirical distribution of the Nuclear Norm of the samples, for various matrix sizes. The theoretical Gamma density, as per  Proposition \ref{prop:sum_gamma} is superimposed. We can see that the NND samples indeed closely track the Gamma density, although in higher dimensions there are some deviations due to the sampling stochasticity. The Normal Product samples also track the Gamma density reasonably well, but with notable deviations: in particular, their values appear to have a heavier leftward skew than the Gamma.

Figure \ref{fig:eigvalcomp} shows the empirical distribution of the norms of the complex eigenvalues. By results of \citet{burda2} this density should be approximately constant for the Normal Product samples (becoming exactly uniform in the limit of large matrices). We see a directionally similar pattern as in Figure \ref{fig:nucnormcomp}, with the Normal Product density placing more mass on smaller eigenvalues compared to the NND.

\begin{figure*}
    \centering
    
\includegraphics[width=.8\linewidth]{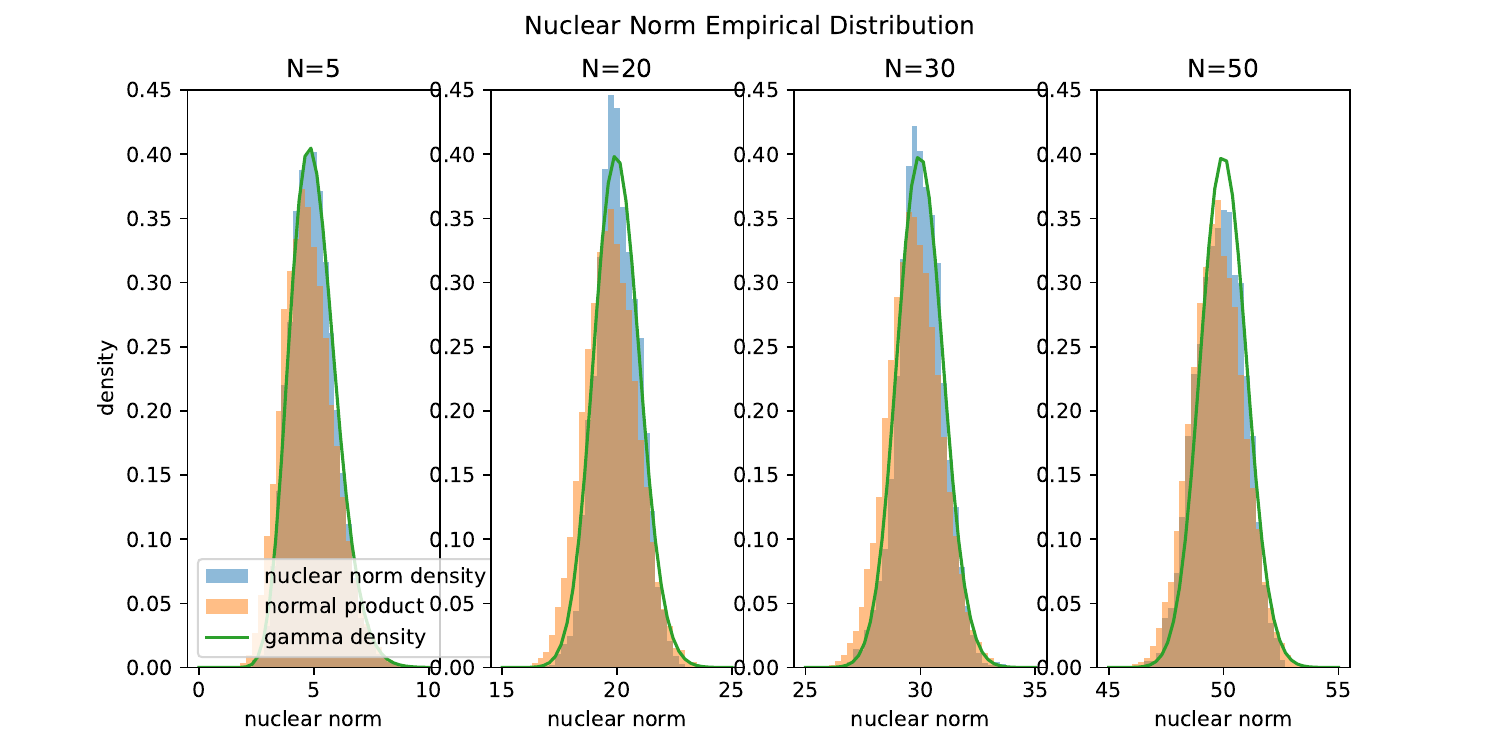}
    \caption{Empirical nuclear norm density, for $\mathrm{NND}(1)$ and $NP(\sigma^2=4/3)$ samples.}
    \label{fig:nucnormcomp}
\end{figure*}

\begin{figure*}
    \centering
    \includegraphics[width=.8\linewidth]{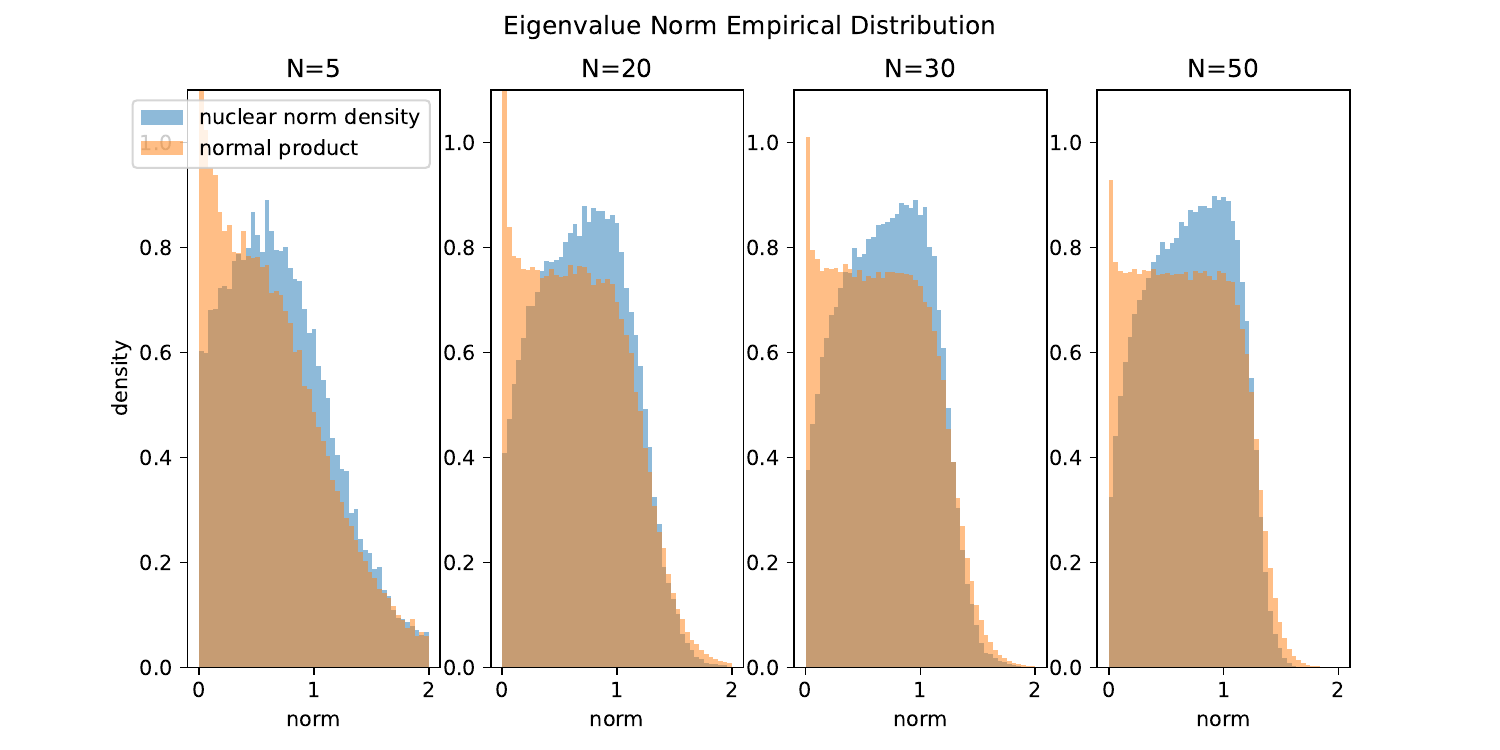}
    \caption{Empirical eigenvalue magnitude density, for $\mathrm{NND}(1)$ and $NP(\sigma^2=4/3)$ samples.}
    \label{fig:eigvalcomp}
\end{figure*}

\section{Numerical Experiment Details}

In this section we describe in further detail our numerical results.
These were executed in about 8 hours in parallel across nodes of a heterogeneous computing cluster.

\subsection{MNIST Dataset Details}
\label{sec:mnistdetails}
We took the MNIST training dataset and used the \verb|numpy.fft| library to compute the 2-dimensional Fourier transform of each image. We then took the submatrix of the output corresponding to only the first (i.e. lowest frequencies) five rows and columns. We also subtracted the mean of each channel, since the mean of the Nuclear Norm distribution is zero. 

Samples from the dataset so obtained are shown in Figure \ref{fig:mnistsamples}


\begin{figure*}
\label{fig:mnistsamples}
\centering
\includegraphics[width=0.99\linewidth]{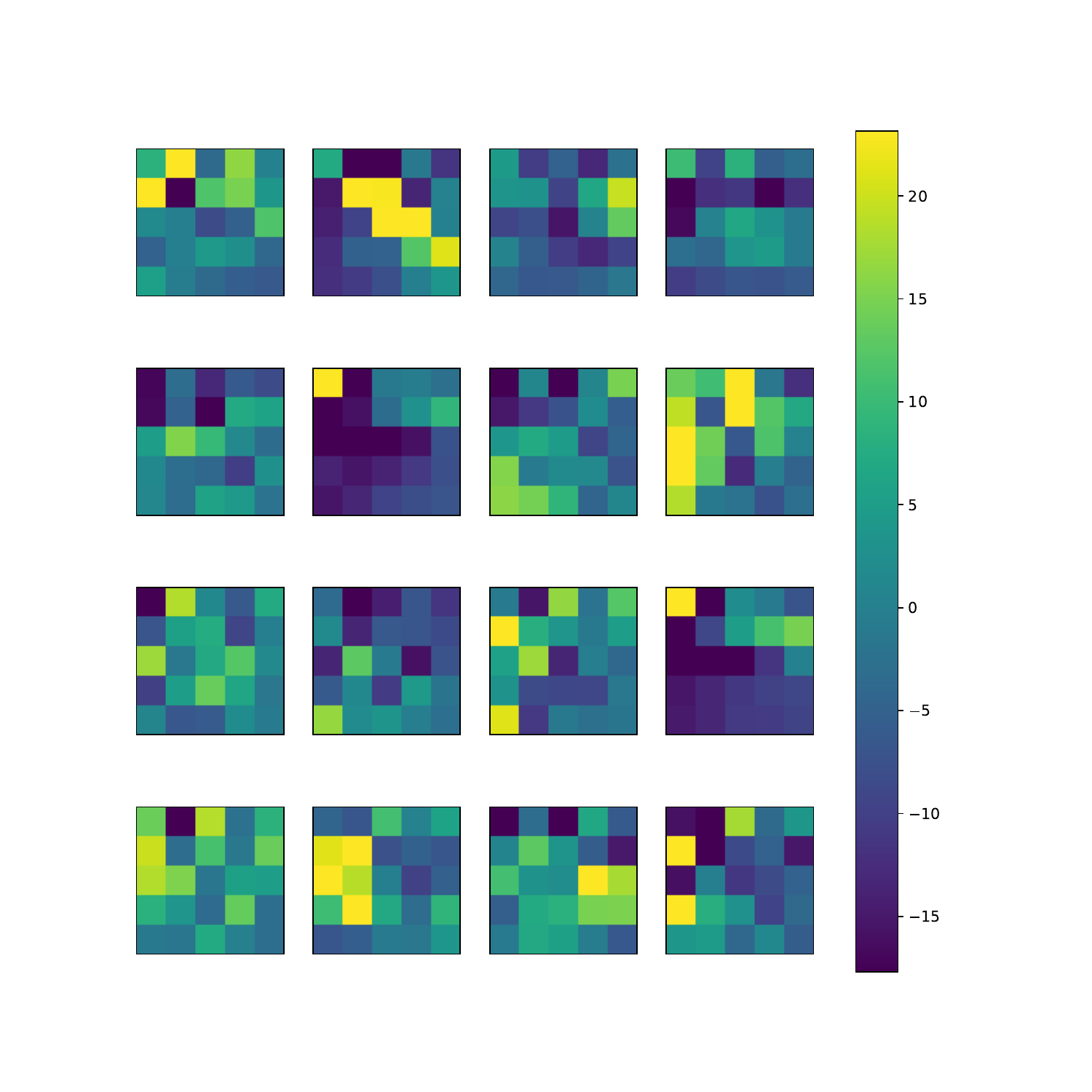}
\caption{Random samples from the MNIST Fourier transform dataset. }
\end{figure*}

\subsection{Natural Image Dataset Details}
\label{sec:app_images}

We collected images from ten diverse datasets of natural images (see Table \ref{tab:natural_images}). 
We randomly sampled 100 images from each dataset, and resized these images to size $60\times60$ using the Pillow library\footnote{\url{https://pillow.readthedocs.io/}}.
We extracted the red channel of each image, and then scaled the intensities linearly such that each image had a minimum intensity of 0 and a maximum of 1.
Figure \ref{fig:example_images} shows an example image from each dataset.

\begin{table}[]
    \centering
    \begin{tabular}{|c|c|c|c|}
         \hline
         \textbf{Display Name} & \textbf{Full Name} & \textbf{Source} & \textbf{Brief Description} \\
         \hline
         \texttt{nature} & Texture and Nature Images & \citet{rabia_gondur_2024} &  Personal photographs of nature. \\ 
         \hline
         \texttt{imagenette} & Imagenette & \citet{imagenette,imagenet_cvpr09} &  Diverse subset of imagenet. \\ 
         \hline
         \texttt{satellite} & \begin{tabular}{@{}c@{}} Remote Sensing Image \\ Classification Benchmark\end{tabular} & \citet{li2020RSI-CB} & Satellite imagery. \\
         \hline
         \texttt{butterfly} & Butterfly Image Classification & \footnote{\url{https://aiplanet.com/challenges/325/butterfly_identification/overview/about}} &  Images of 75 butterfly species. \\ 
         \hline
         \texttt{crops} & Crops Image Classification & \citet{crops} &  Crops from Google Images. \\ 
         \hline
         \texttt{cityscapes} & Cityscapes Image Pairs & \citet{pix2pix2017} & Segmented Images of Roads \\
         \hline
         \texttt{sports} & 100 Sports Images & \footnote{\url{https://www.kaggle.com/datasets/gpiosenka/sports-classification?select=test}} & Web images of various sports. \\
         \hline
         \texttt{cells} & Blood Cell Classification & \footnote{\url{https://github.com/Shenggan/BCCD_Dataset}} & Microscopy of blood cells. \\
         \hline
         \texttt{weather} & Weather Image Recognition & \cite{weather} & 11 weather phenomena.\\
         \hline
    \end{tabular}
    \caption{Natural image dataset details.}
    \label{tab:natural_images}
\end{table}

\begin{figure}
    \includegraphics[width=0.1175\linewidth,trim={3cm 0 3cm 0},clip]{images/ims/sports_orig_5.pdf}
    \includegraphics[width=0.1175\linewidth,trim={3cm 0 3cm 0},clip]{images/ims/sports_obs_5.pdf}
    \includegraphics[width=0.1175\linewidth,trim={3cm 0 3cm 0},clip]{images/ims/nada_tau_12.9sports_True_5.pdf}
    \includegraphics[width=0.1175\linewidth,trim={3cm 0 3cm 0},clip]{images/ims/ada_sports_True_5.pdf}
    \hspace{0.5em}
    \includegraphics[width=0.1175\linewidth,trim={3cm 0 3cm 0},clip]{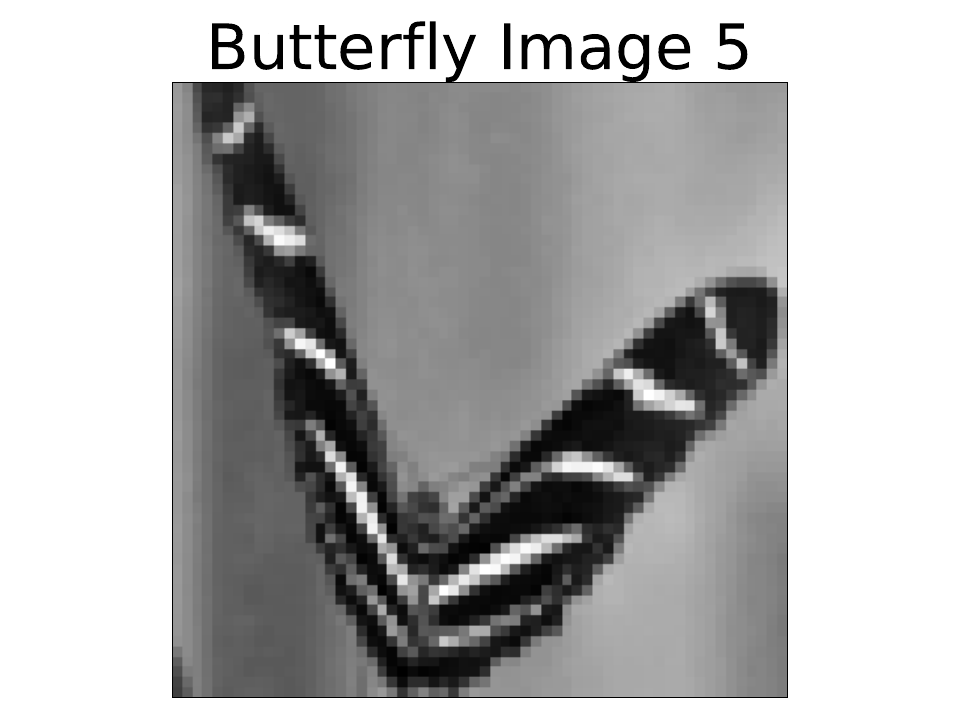}
    \includegraphics[width=0.1175\linewidth,trim={3cm 0 3cm 0},clip]{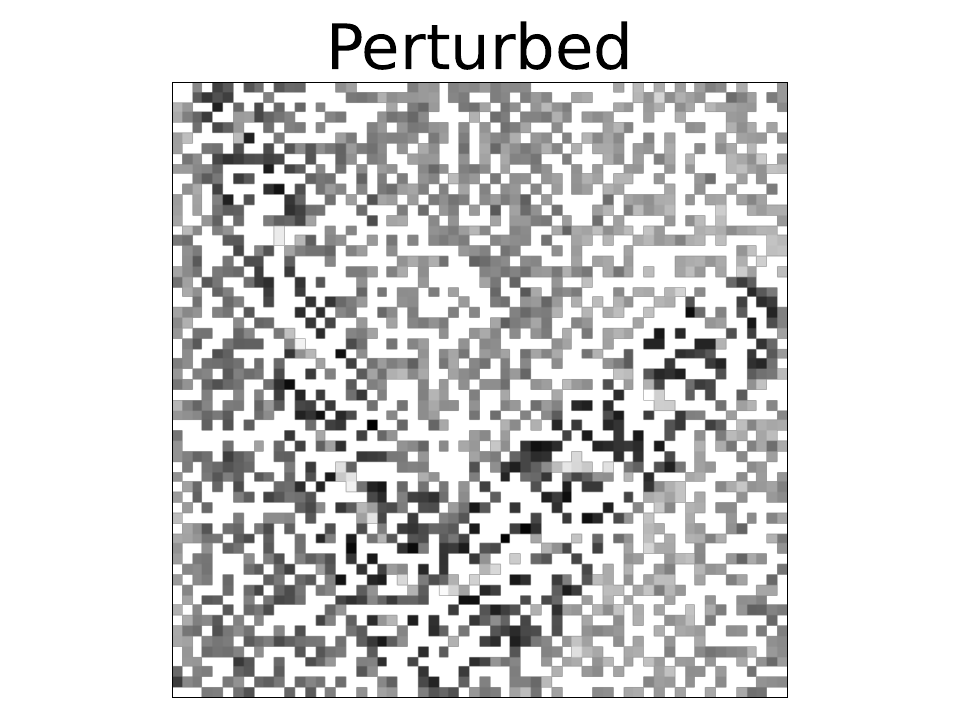}
    \includegraphics[width=0.1175\linewidth,trim={3cm 0 3cm 0},clip]{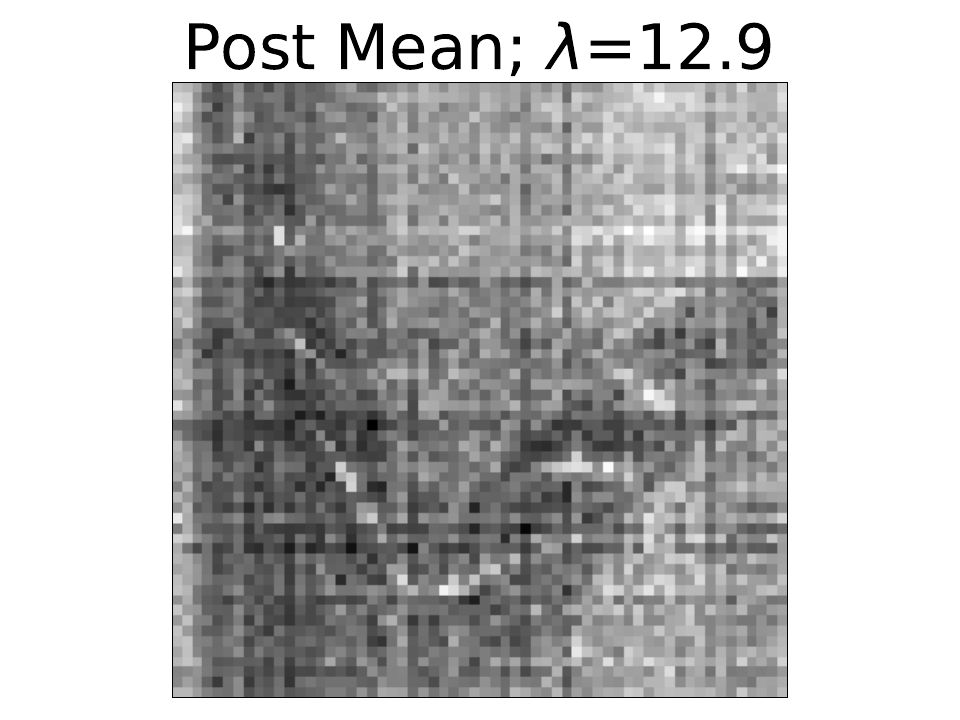}
    \includegraphics[width=0.1175\linewidth,trim={3cm 0 3cm 0},clip]{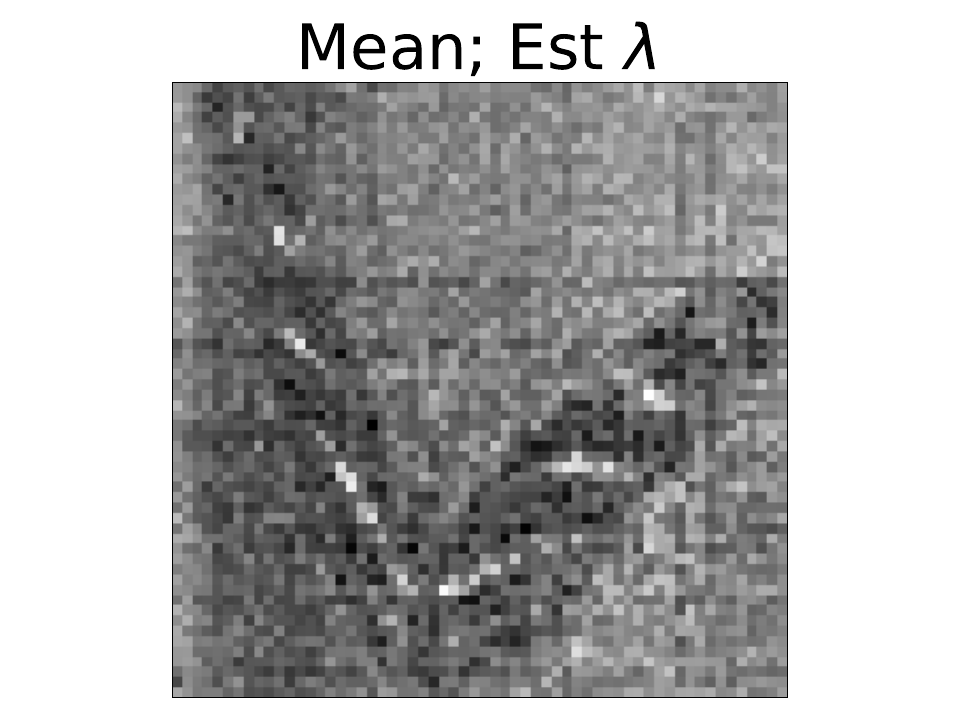}

    \includegraphics[width=0.1175\linewidth,trim={3cm 0 3cm 0},clip]{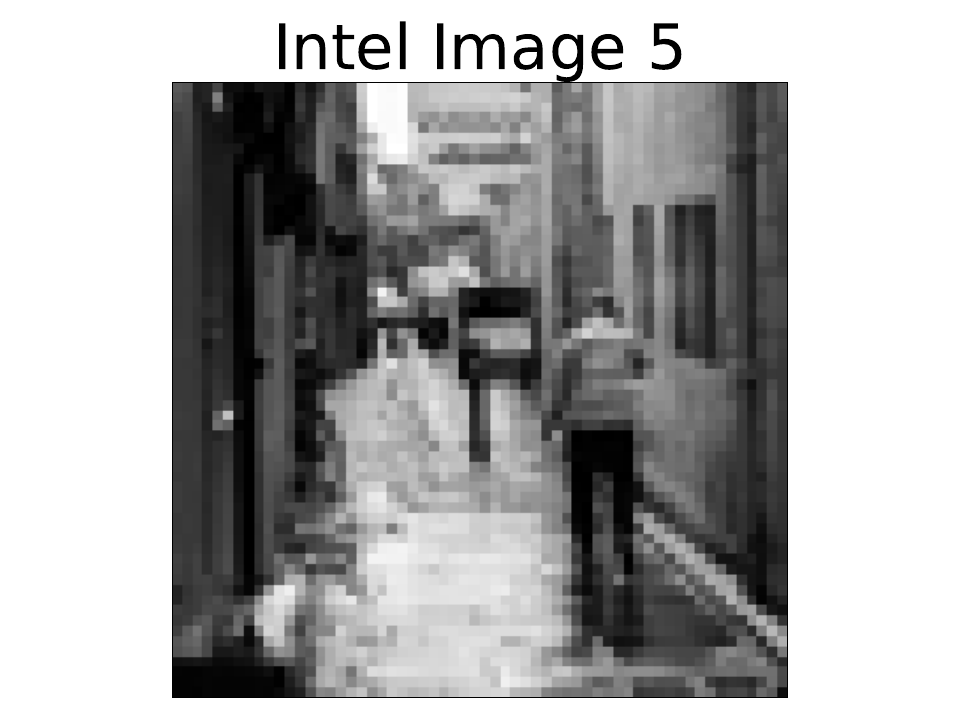}
    \includegraphics[width=0.1175\linewidth,trim={3cm 0 3cm 0},clip]{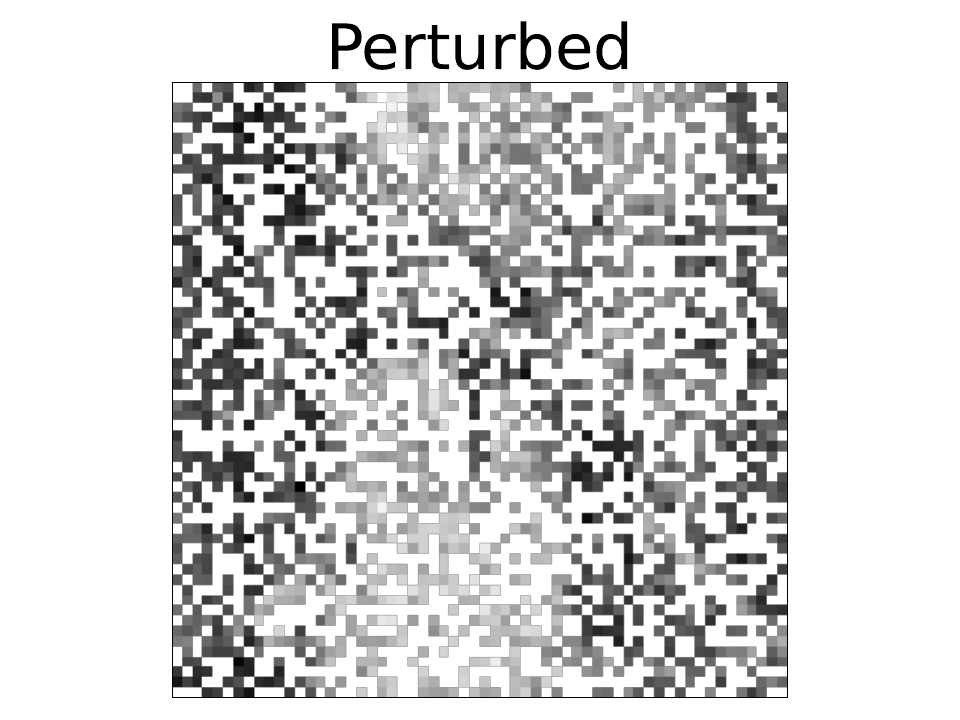}
    \includegraphics[width=0.1175\linewidth,trim={3cm 0 3cm 0},clip]{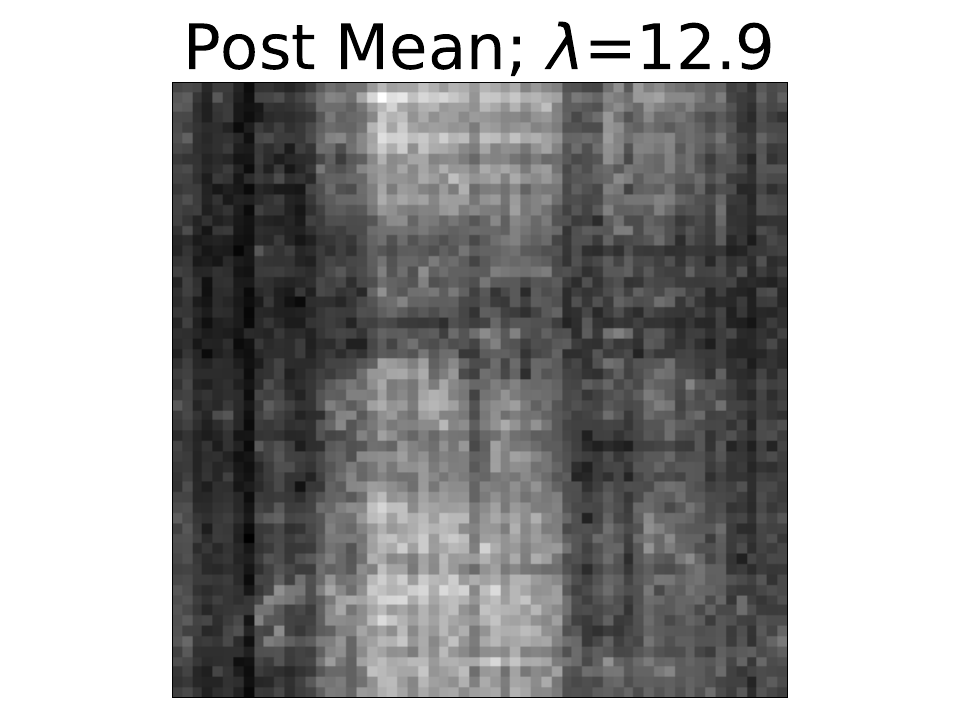}
    \includegraphics[width=0.1175\linewidth,trim={3cm 0 3cm 0},clip]{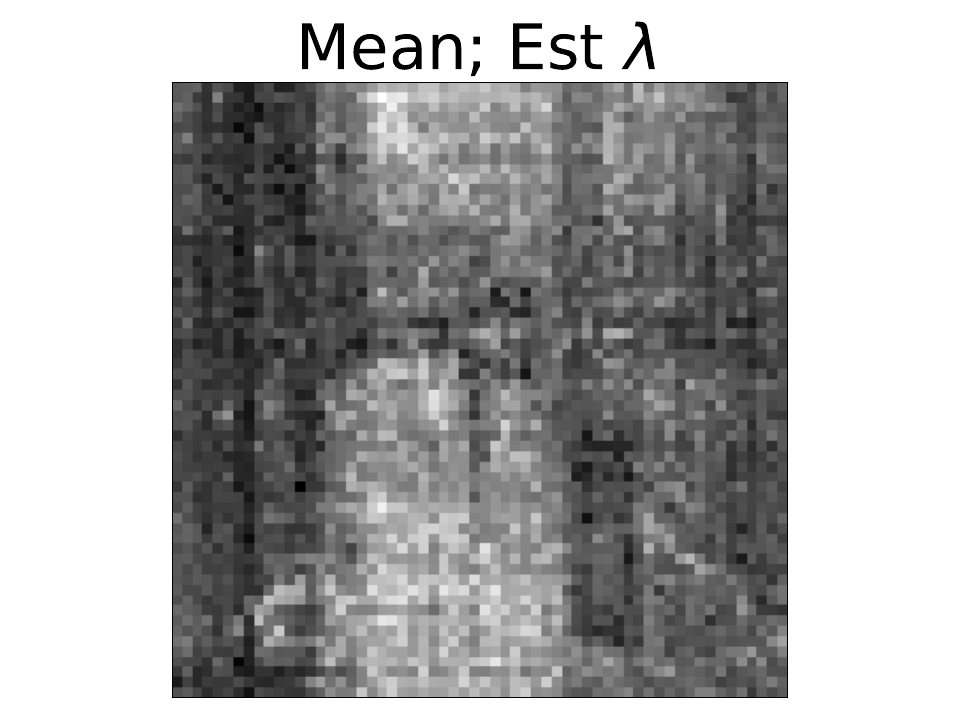}
    \hspace{0.5em}
    \includegraphics[width=0.1175\linewidth,trim={3cm 0 3cm 0},clip]{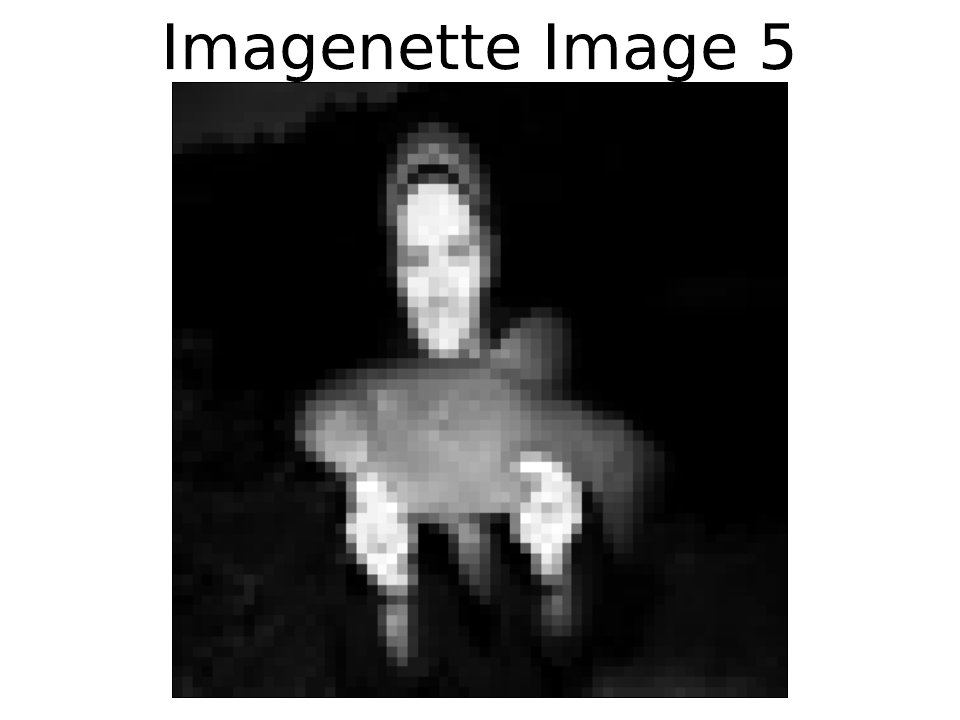}
    \includegraphics[width=0.1175\linewidth,trim={3cm 0 3cm 0},clip]{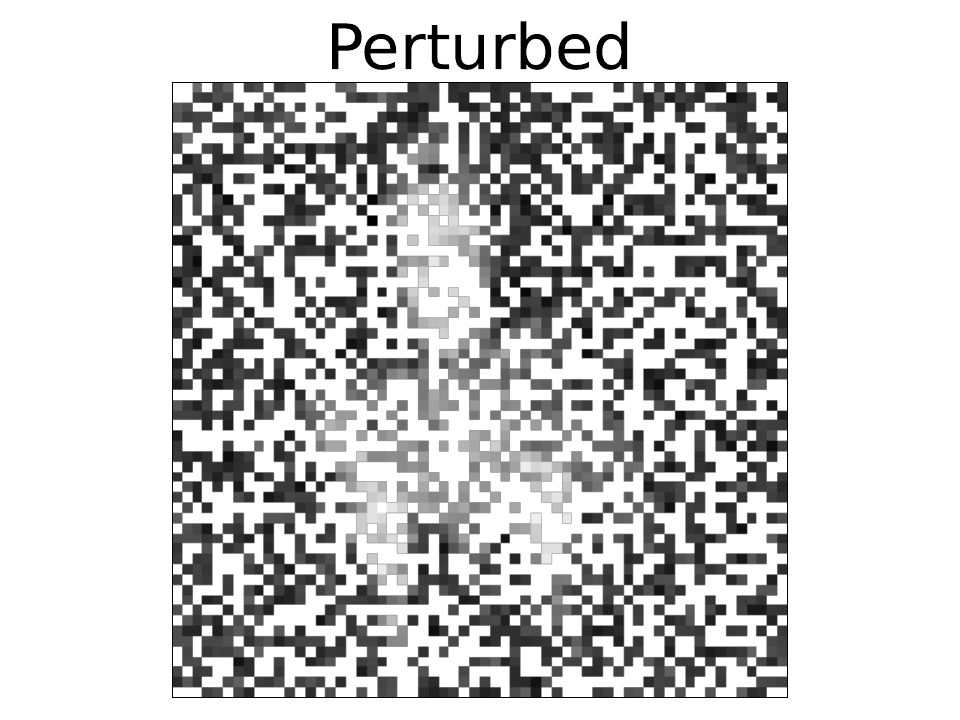}
    \includegraphics[width=0.1175\linewidth,trim={3cm 0 3cm 0},clip]{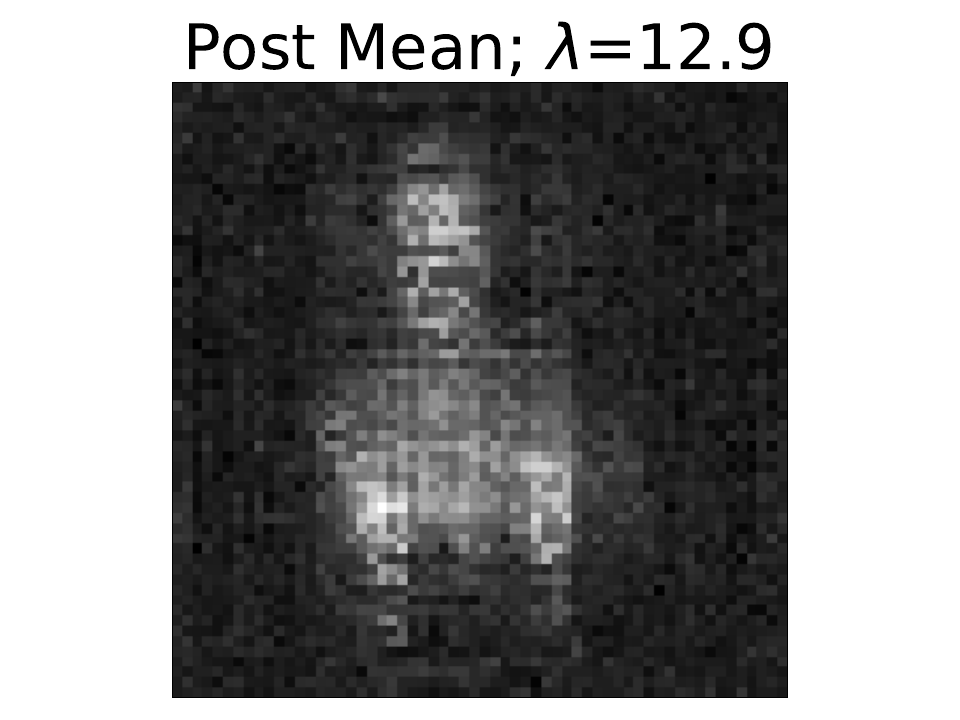}
    \includegraphics[width=0.1175\linewidth,trim={3cm 0 3cm 0},clip]{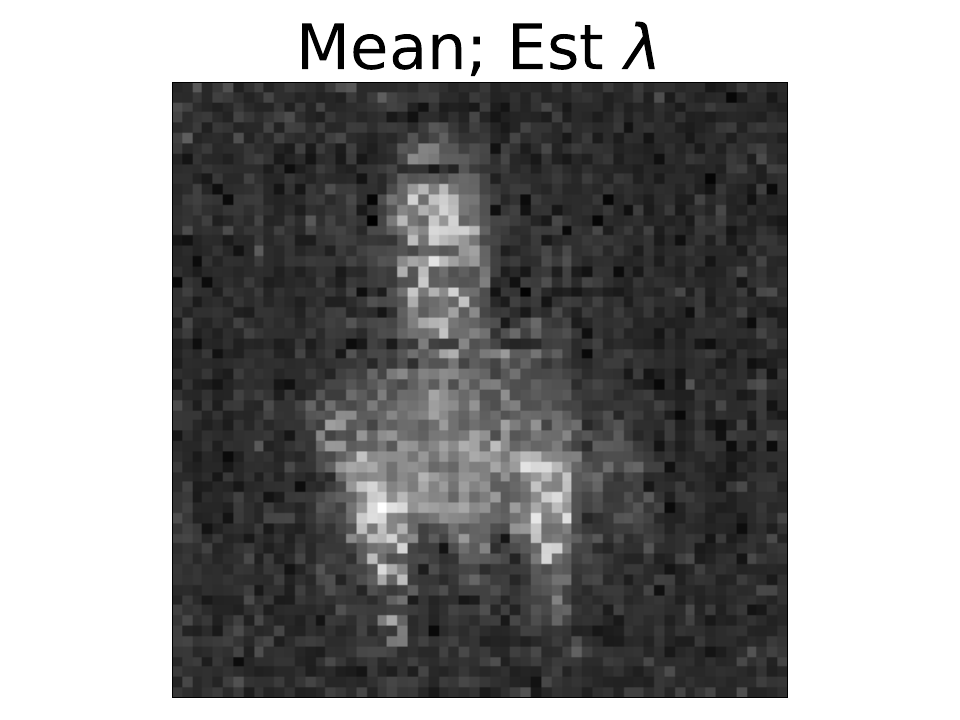}

    \includegraphics[width=0.1175\linewidth,trim={3cm 0 3cm 0},clip]{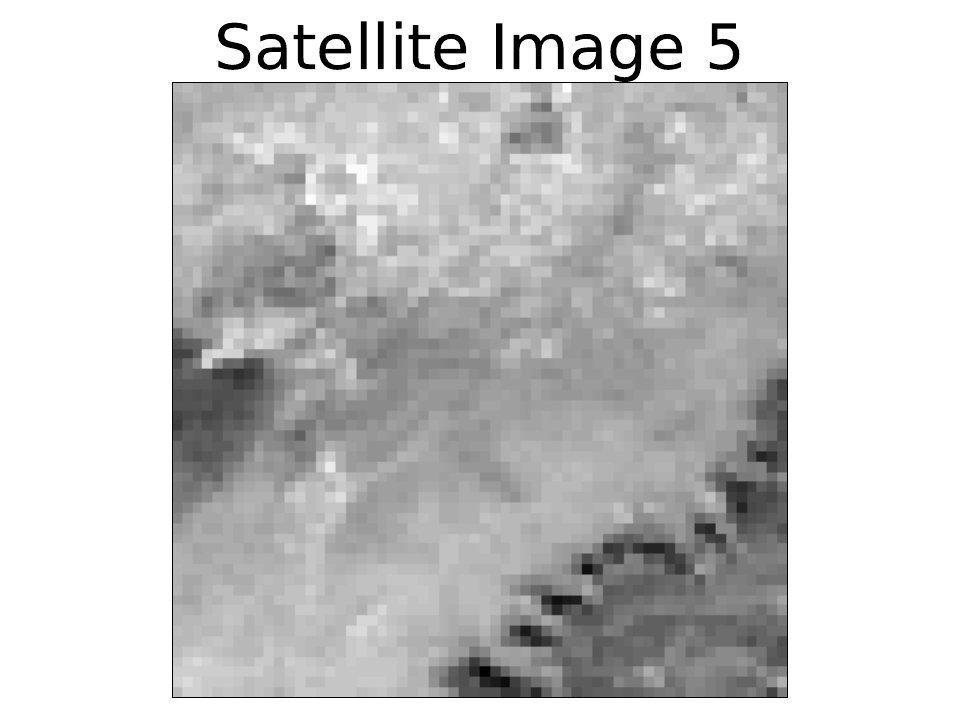}
    \includegraphics[width=0.1175\linewidth,trim={3cm 0 3cm 0},clip]{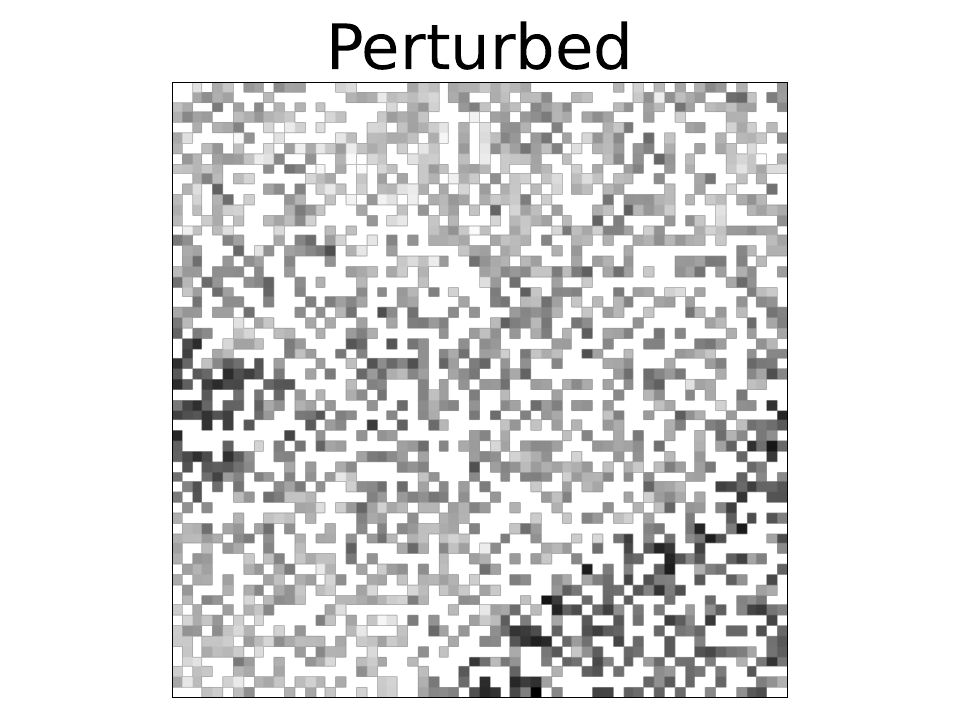}
    \includegraphics[width=0.1175\linewidth,trim={3cm 0 3cm 0},clip]{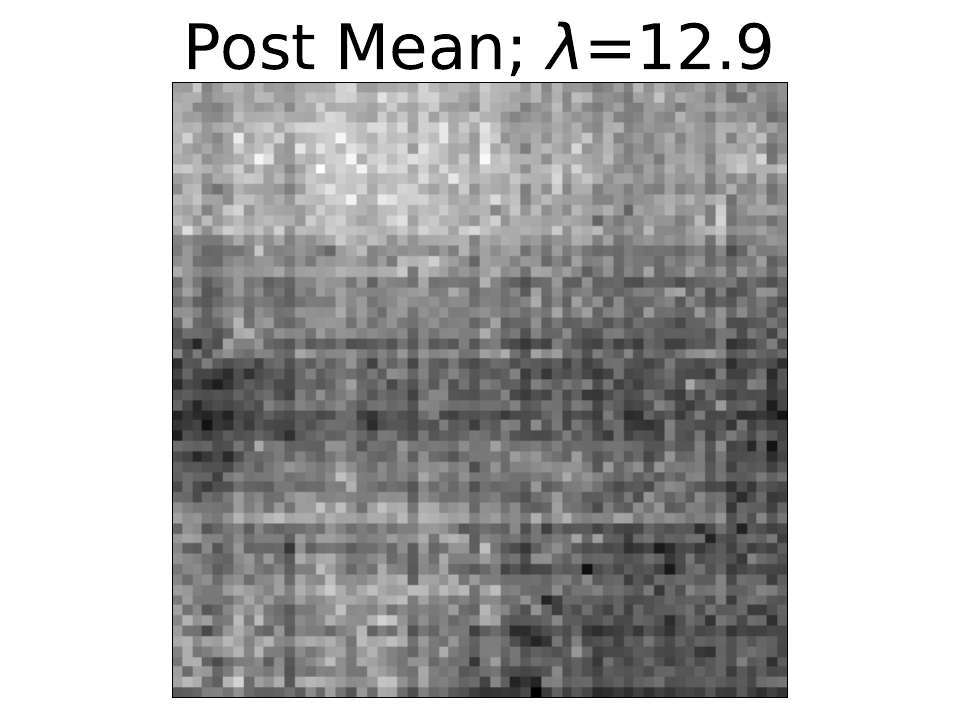}
    \includegraphics[width=0.1175\linewidth,trim={3cm 0 3cm 0},clip]{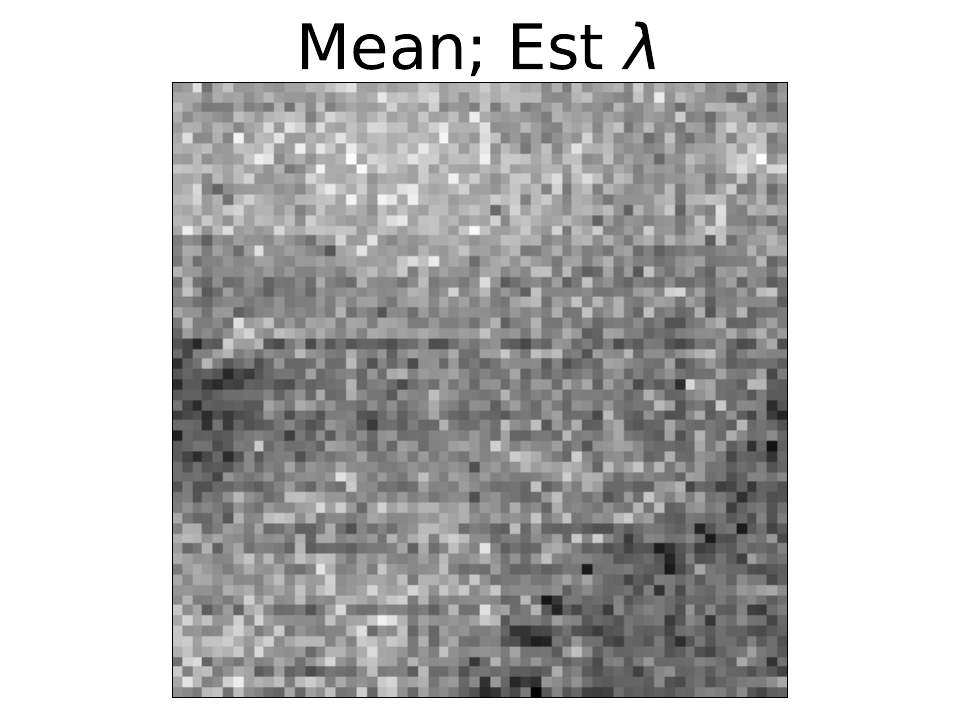}
    \hspace{0.5em}
    \includegraphics[width=0.1175\linewidth,trim={3cm 0 3cm 0},clip]{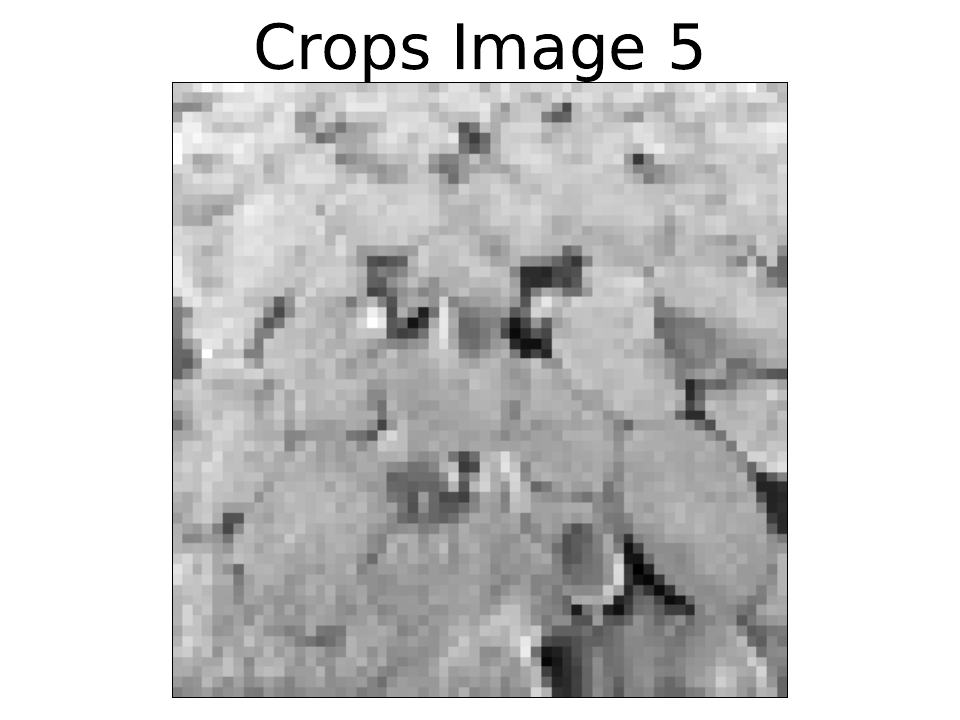}
    \includegraphics[width=0.1175\linewidth,trim={3cm 0 3cm 0},clip]{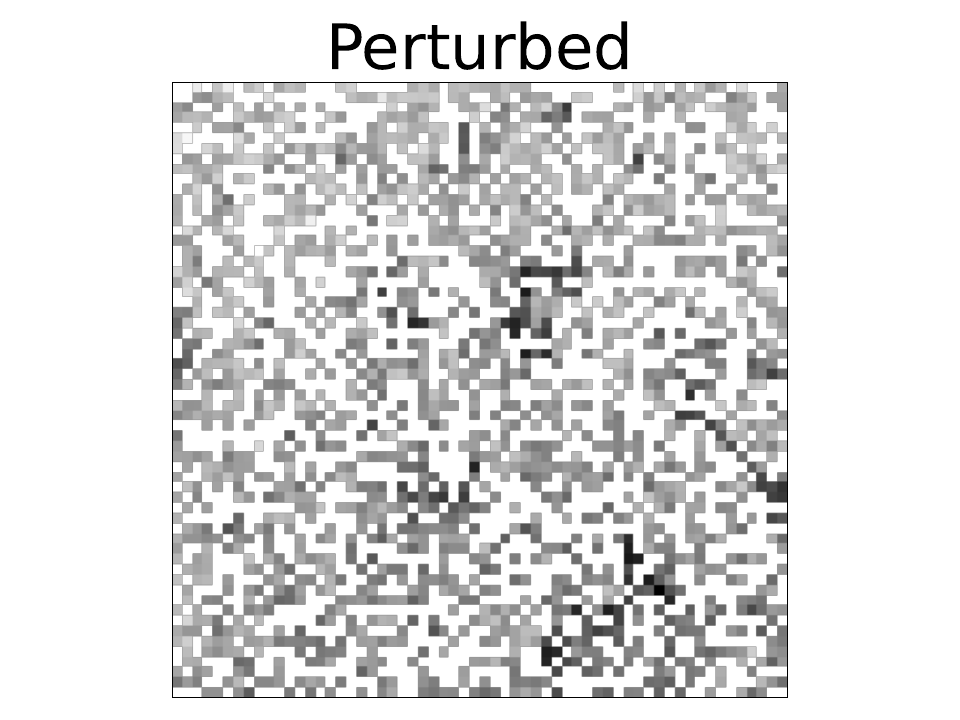}
    \includegraphics[width=0.1175\linewidth,trim={3cm 0 3cm 0},clip]{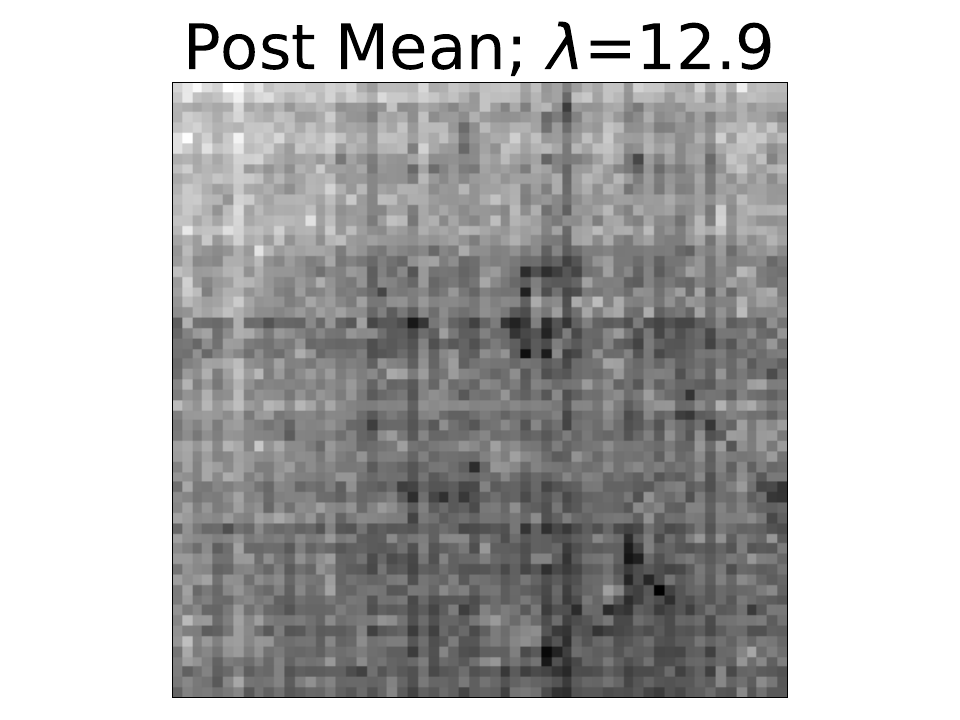}
    \includegraphics[width=0.1175\linewidth,trim={3cm 0 3cm 0},clip]{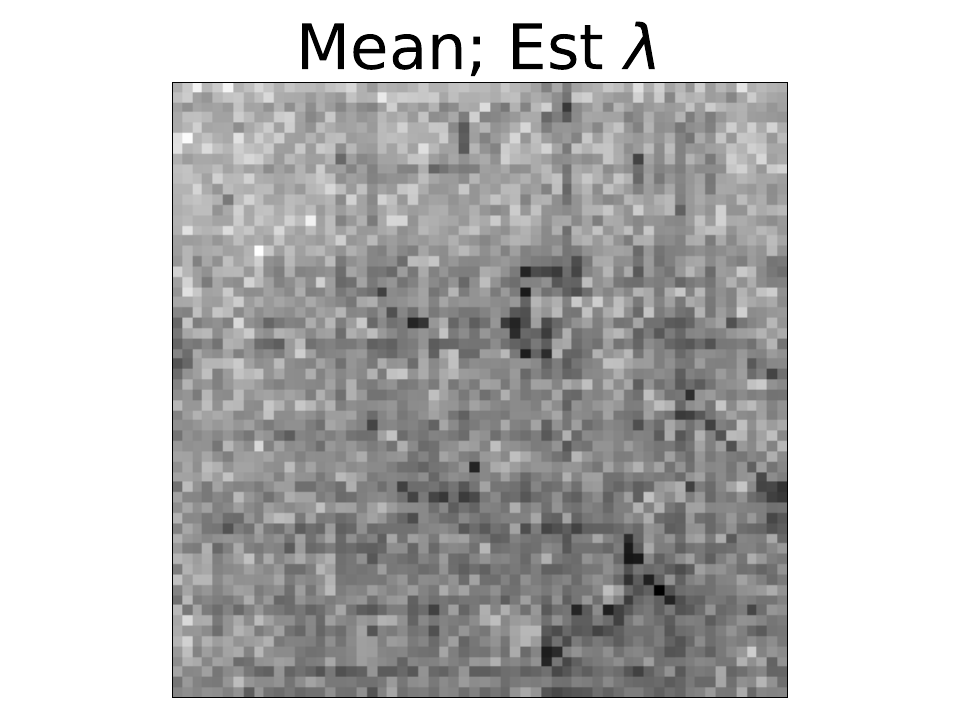}
    
    \includegraphics[width=0.1175\linewidth,trim={3cm 0 3cm 0},clip]{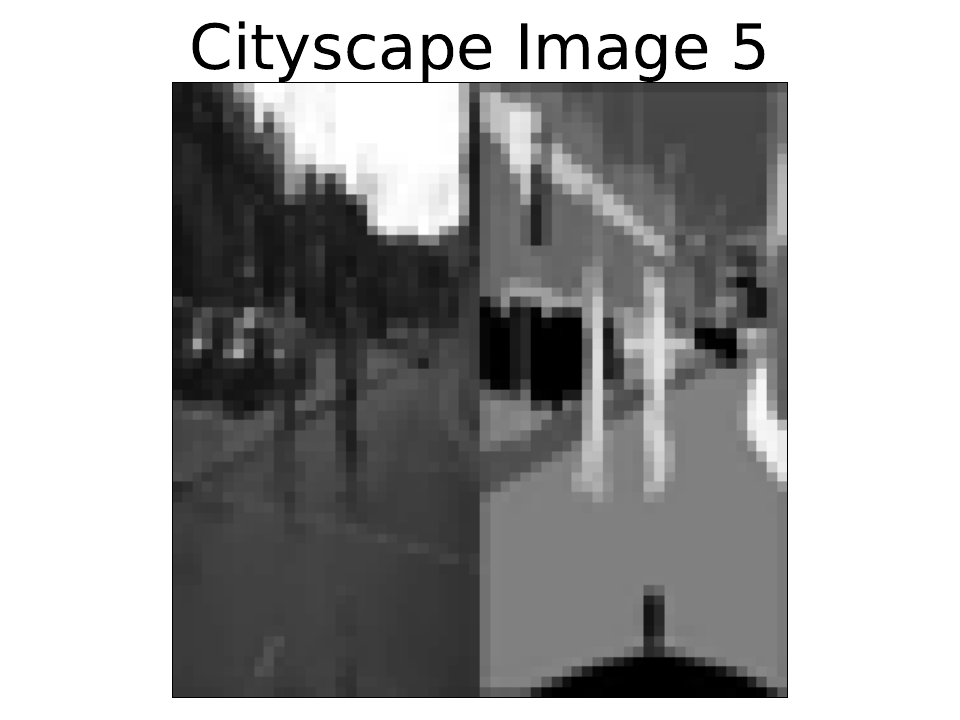}
    \includegraphics[width=0.1175\linewidth,trim={3cm 0 3cm 0},clip]{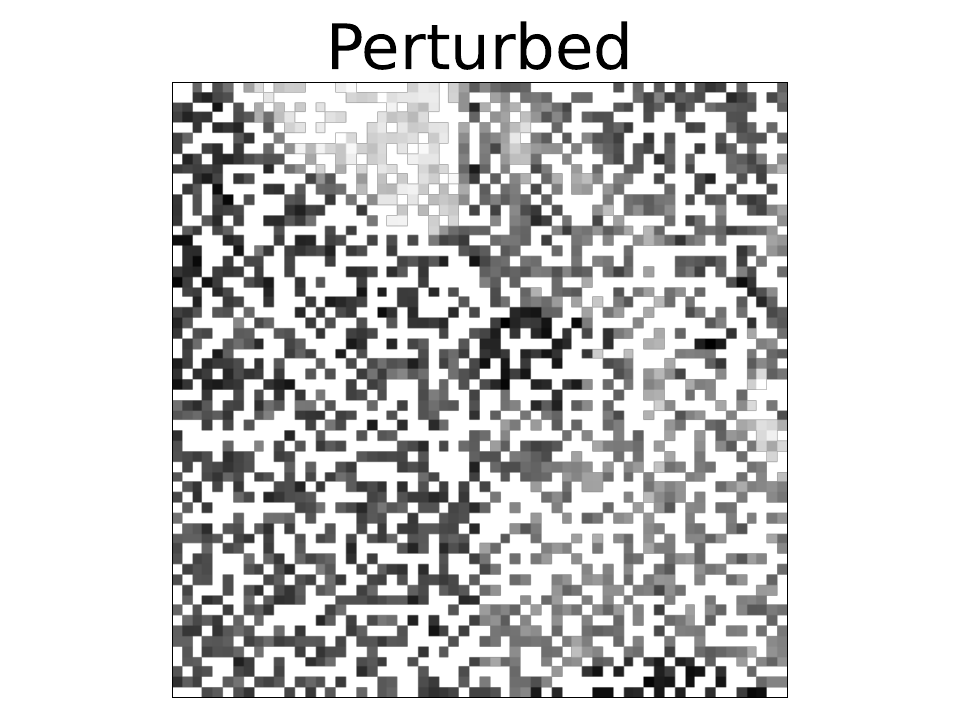}
    \includegraphics[width=0.1175\linewidth,trim={3cm 0 3cm 0},clip]{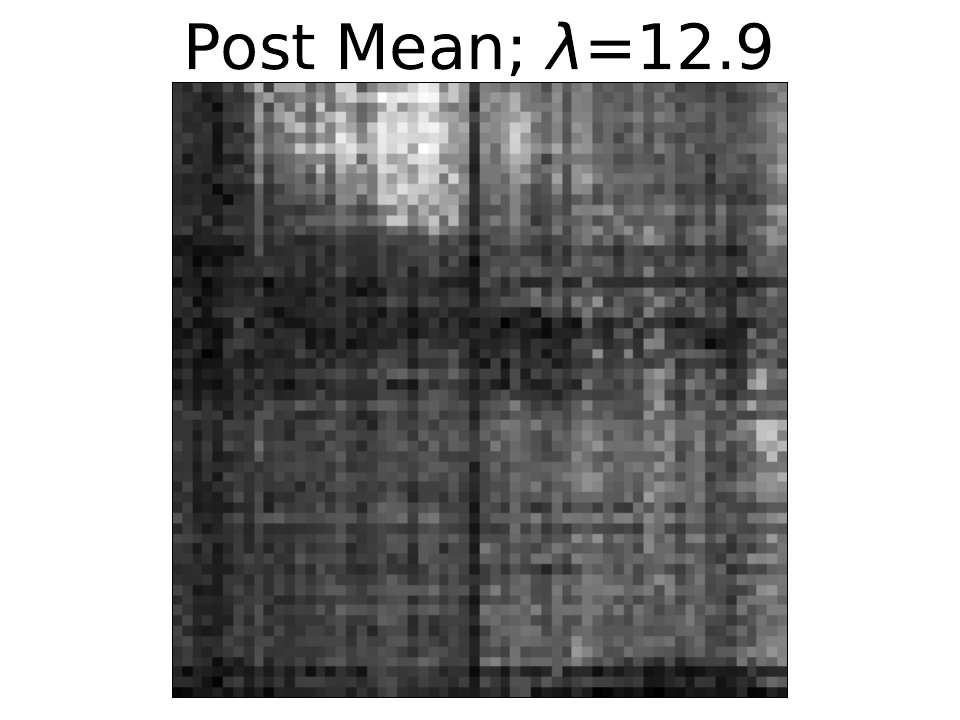}
    \includegraphics[width=0.1175\linewidth,trim={3cm 0 3cm 0},clip]{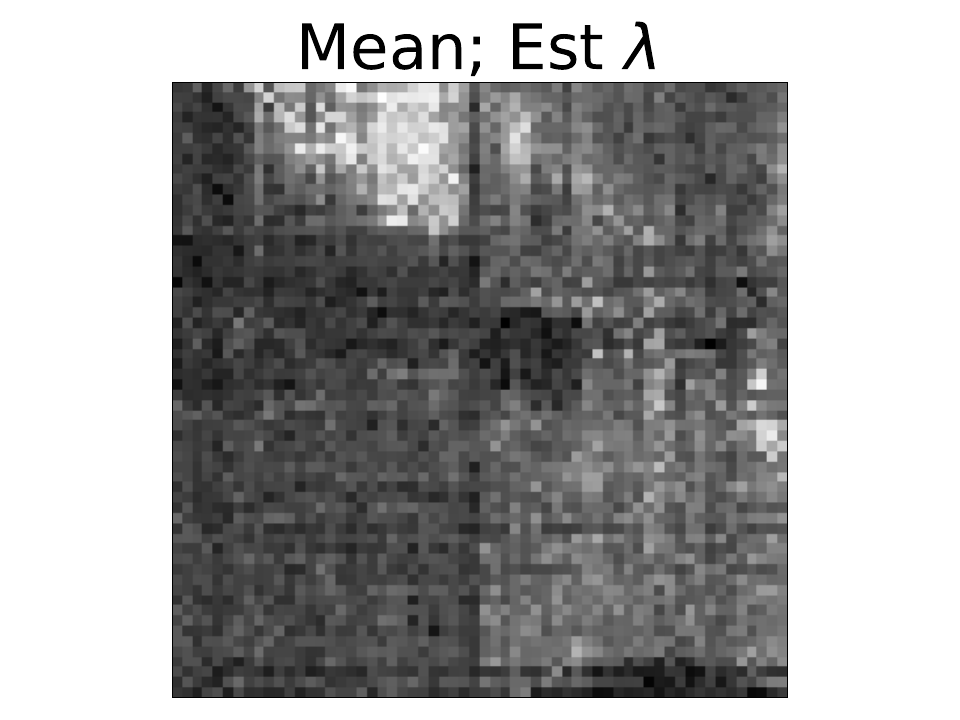}
    \hspace{0.5em}
    \includegraphics[width=0.1175\linewidth,trim={3cm 0 3cm 0},clip]{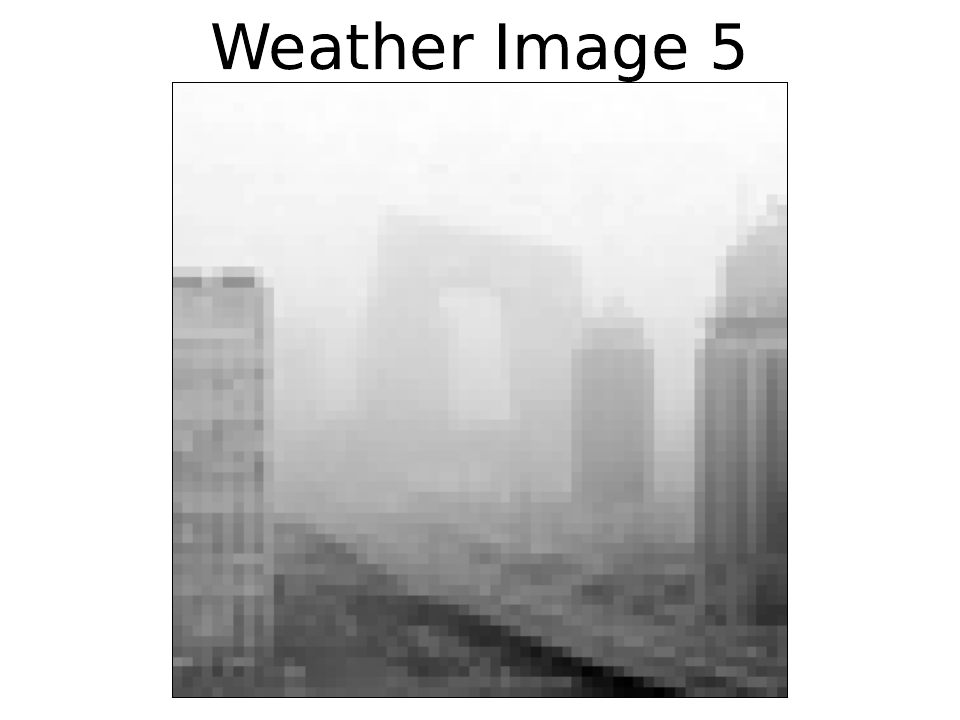}
    \includegraphics[width=0.1175\linewidth,trim={3cm 0 3cm 0},clip]{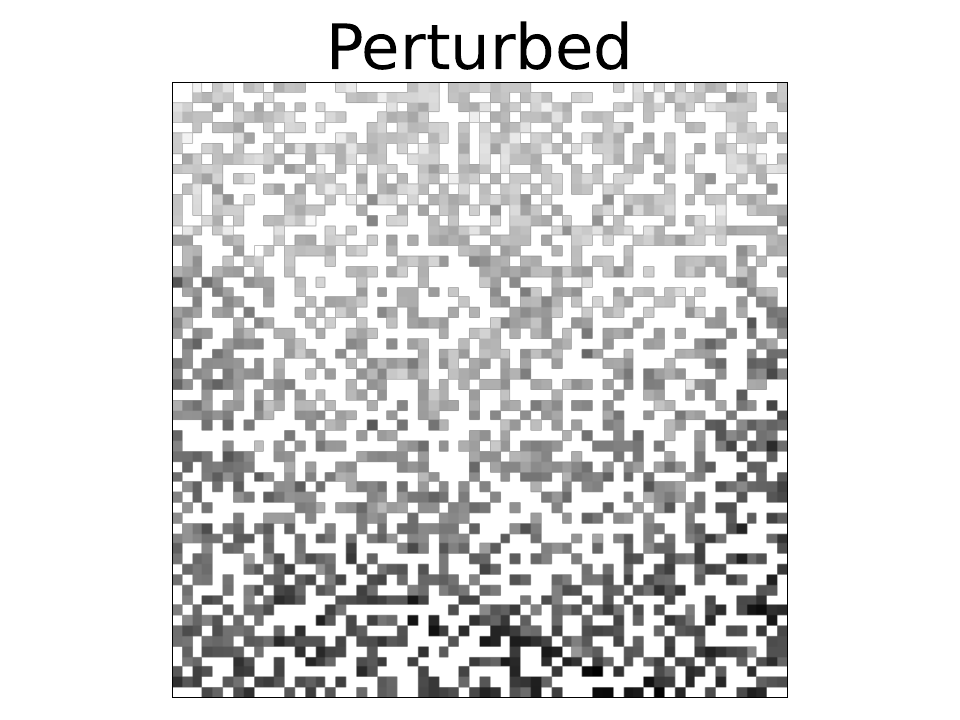}
    \includegraphics[width=0.1175\linewidth,trim={3cm 0 3cm 0},clip]{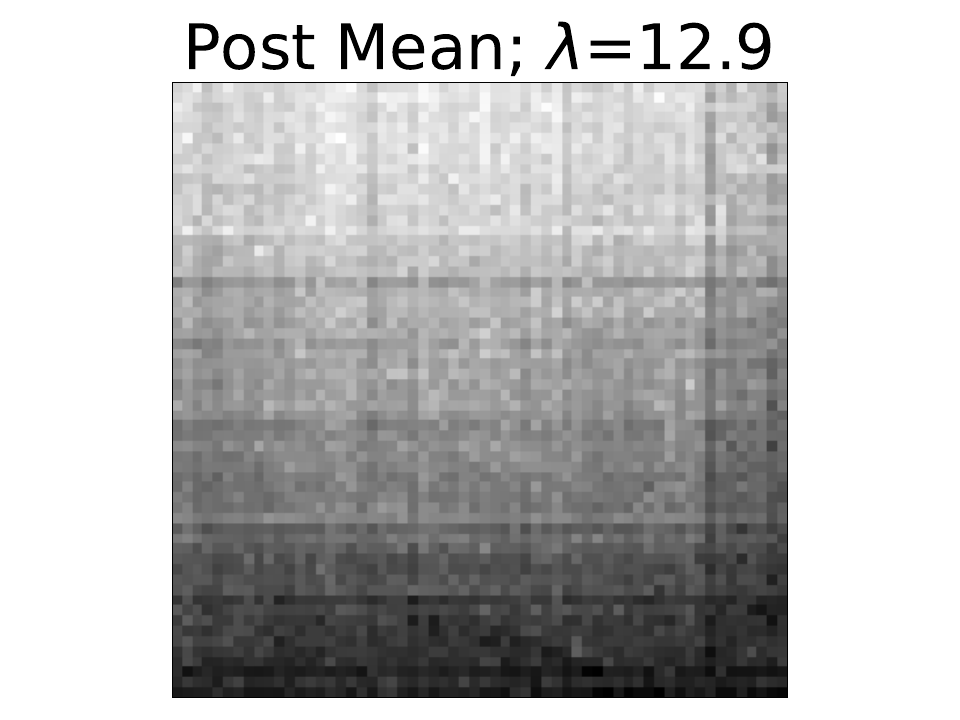}
    \includegraphics[width=0.1175\linewidth,trim={3cm 0 3cm 0},clip]{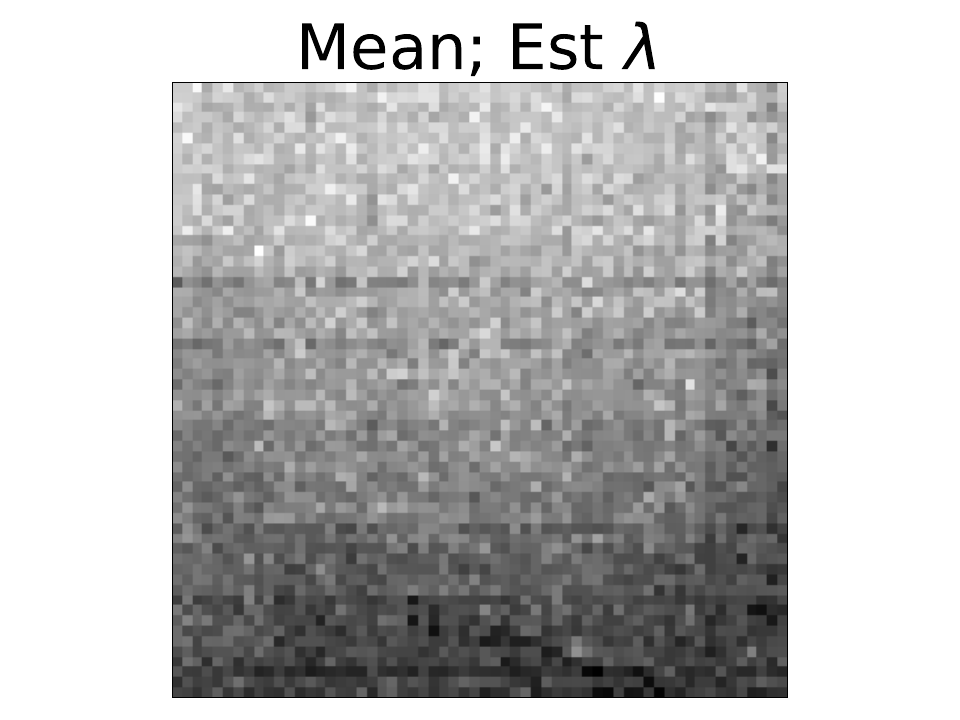}
    
    \includegraphics[width=0.1175\linewidth,trim={3cm 0 3cm 0},clip]{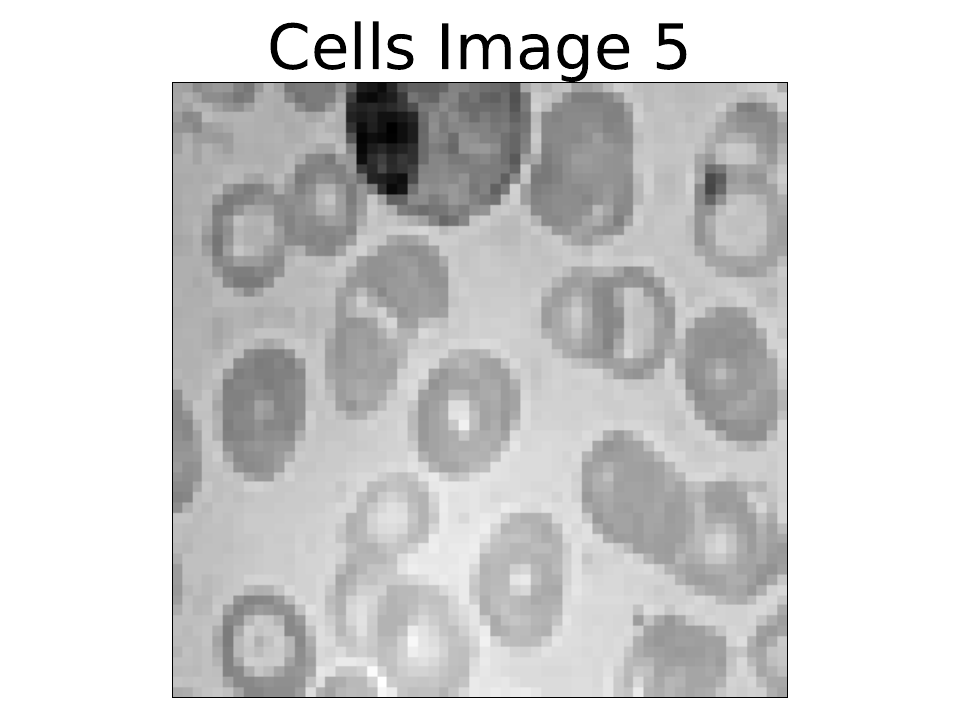}
    \includegraphics[width=0.1175\linewidth,trim={3cm 0 3cm 0},clip]{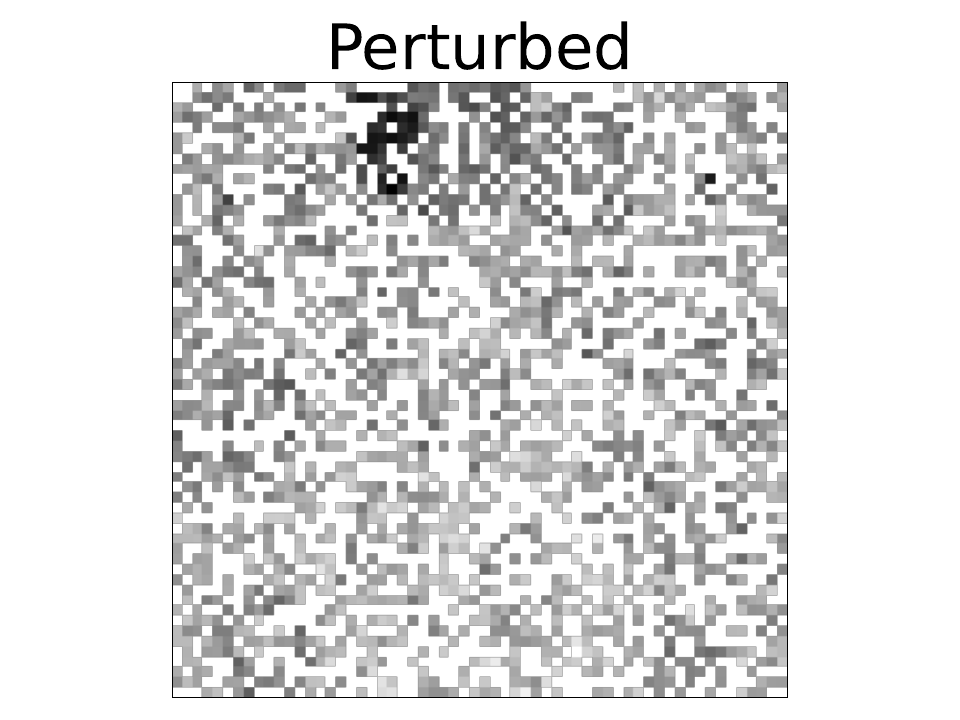}
    \includegraphics[width=0.1175\linewidth,trim={3cm 0 3cm 0},clip]{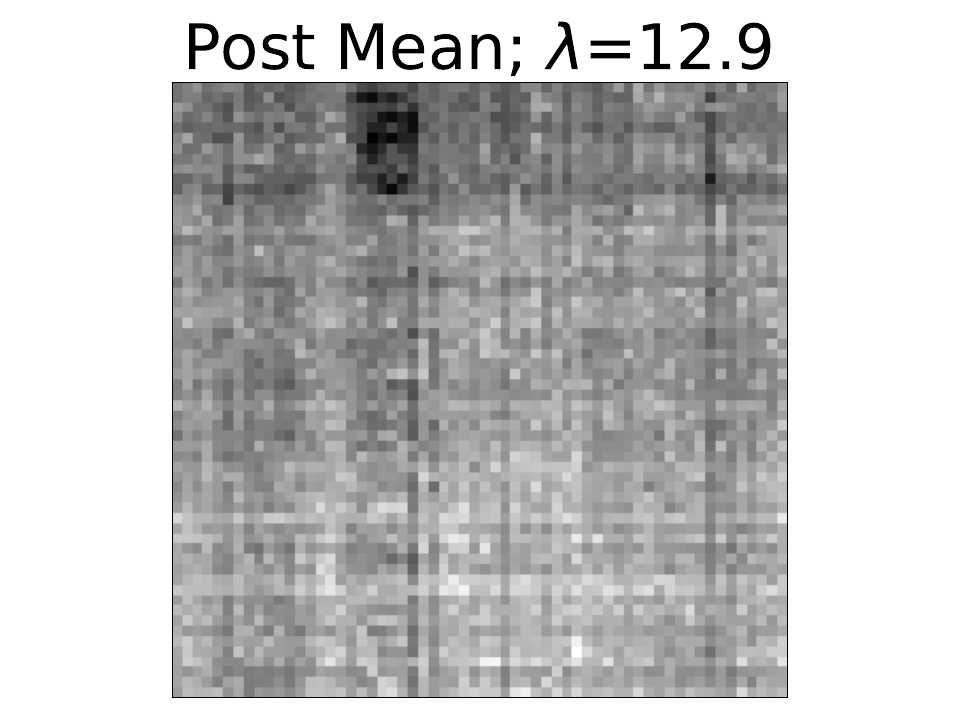}
    \includegraphics[width=0.1175\linewidth,trim={3cm 0 3cm 0},clip]{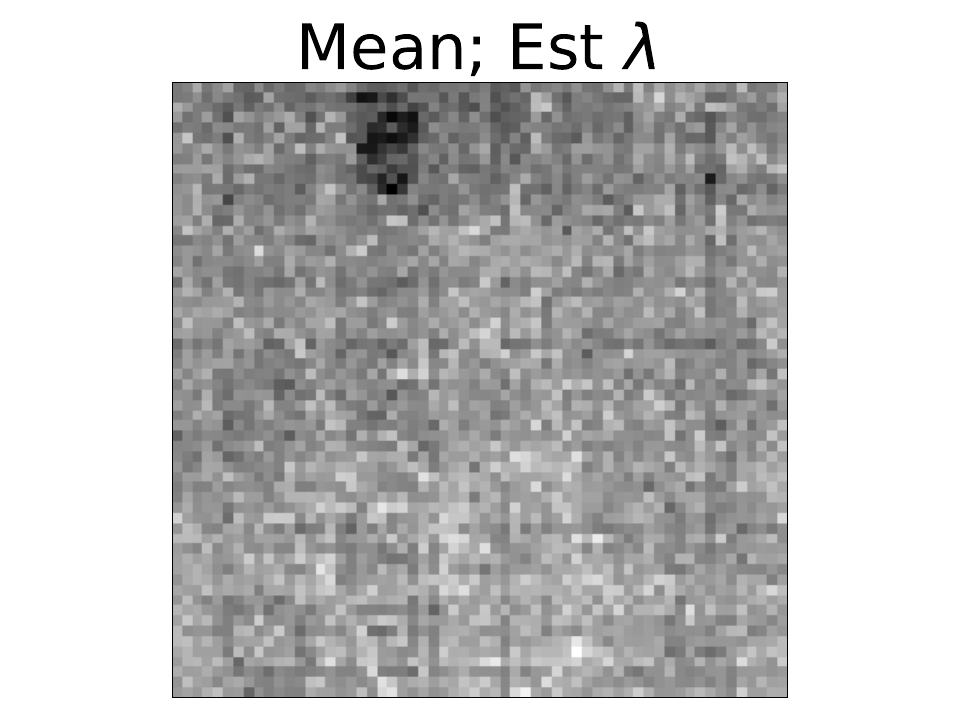}
    \hspace{0.5em}
    \includegraphics[width=0.1175\linewidth,trim={3cm 0 3cm 0},clip]{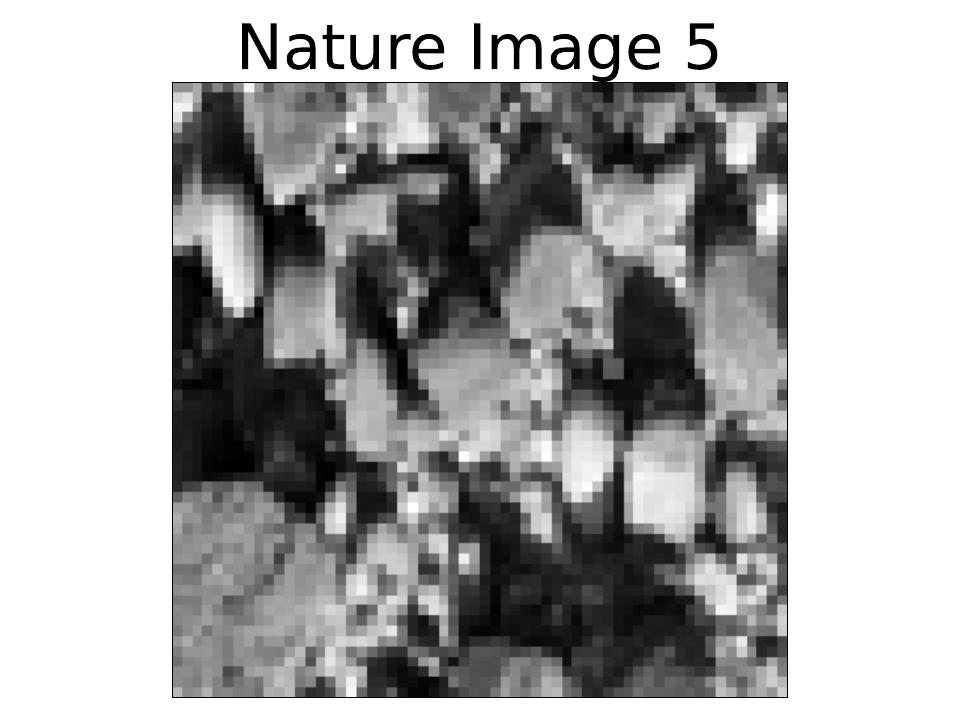}
    \includegraphics[width=0.1175\linewidth,trim={3cm 0 3cm 0},clip]{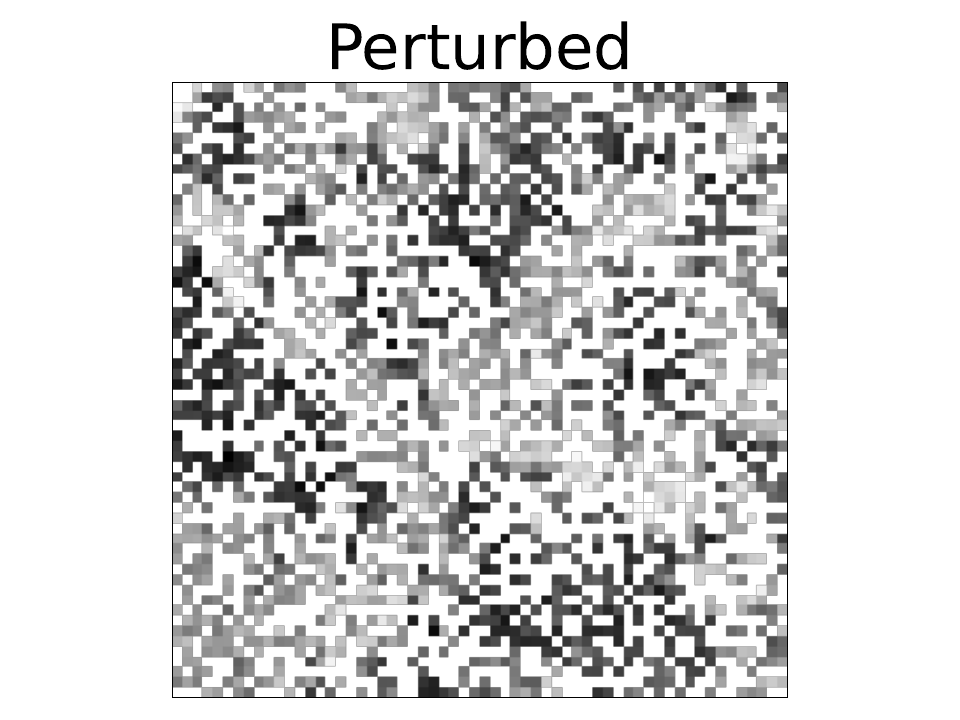}
    \includegraphics[width=0.1175\linewidth,trim={3cm 0 3cm 0},clip]{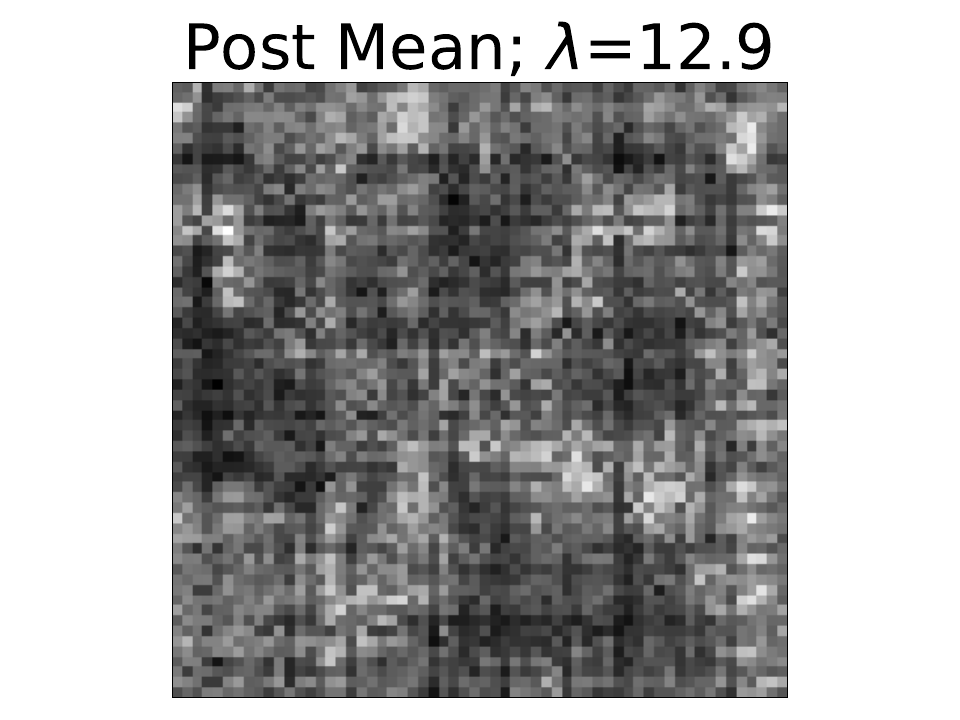}
    \includegraphics[width=0.1175\linewidth,trim={3cm 0 3cm 0},clip]{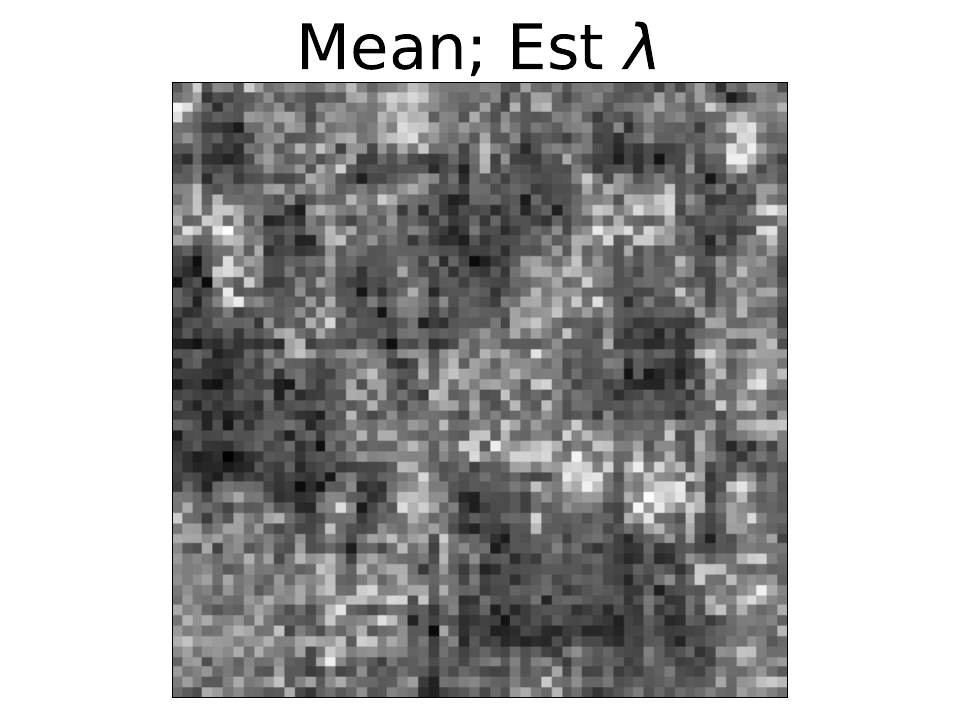}
    
    \caption{This figure contains an example image from each dataset. Each set of four images gives, from left to right, an original image from each dataset, then the masked and noised array for matrix completion, the posterior mean for nuclear norm method with  $\lambda$ fixed to 12.9, and finally the posterior mean of the adaptive method with $\lambda$ estimated.}
    \label{fig:example_images}
\end{figure}

\subsection{Matrix Denoising}
\label{sec:app_denoise}

In order to sample from the posterior distribution of a Gaussian likelihood and nuclear norm prior for the matrix denoising problem, we use the Proximal MCMC algorithm of \cite{pereyra2016proximal}; see Appendix \ref{sec:prox}.
When using a Gaussian error distribution and the conditional prior, we obtain the following proximal log-posterior:
\begin{equation}
    -\frac{\Vert Y - X\Vert_F^2}{2\gamma^2}
    - \frac{\lambda}{\sqrt{\gamma^2}} \Vert X \Vert_* \,.
\end{equation}
Therefore, the action of the posterior proximal operator is given by:
\begin{equation}
    \textrm{prox}^{-\log P_{X|Y}}_s(A) =
    \underset{X \in \mathbb{R}^{M\times N}}{\textrm{argmin}}
    \frac{\Vert A - X\Vert_F^2}{2s}
    + \frac{\Vert Y - X\Vert_F^2}{2\gamma^2}
    + \frac{\lambda}{\sqrt{\gamma^2}} \Vert X \Vert_* \,.
\end{equation}
By completing the square to combine the $F$ norm terms, we are able to apply the standard proximal operator associated the nuclear norm, and ultimately use the following proposal:
\begin{equation}
    P(X^*|X^t) = 
    N
    \Big(
    \textrm{prox}^{\Vert \cdot \Vert_*}_
    {\frac{\lambda\delta\gamma^2}{\delta+2\gamma^2}}
    \Bigg(
    \frac{\delta Y + 2\gamma^2 X^t}{\delta+2\gamma^2}
    \Bigg)
    ,
    \delta\mathbf{I}
    \Big) \, .
\end{equation}
We adapt $\delta$ as in Section \ref{sec:prior}.

\subsection{Matrix Completion}
\label{sec:app_complete}

In the matrix completion case, we have the log-likelihood
\begin{equation}
    \log(Y|X,\gamma^2) = - \frac{\Vert M \odot (X-Y)\Vert_F^2}{2\gamma^2}
\end{equation}
where $M\in{0,1}^{M\times N}$ is a masking matrix indicating which pixels are revealed, and $\odot$ denotes elementwise multiplication.

This leads to the following proximal problem:
\begin{equation}
    \textrm{prox}^{-\log P_{X|Y}}_s(A) =
    \underset{X \in \mathbb{R}^{M\times N}}{\textrm{argmin}}
    \frac{\Vert A - X\Vert_F^2}{2s}
    + \frac{\Vert M\odot(Y - X)\Vert_F^2}{2\gamma^2}
    + \frac{\lambda}{\sqrt{\gamma^2}} \Vert X \Vert_* \,.
\end{equation}
Despite its similarity with the Matrix Denoising case, this proximal operator is not available in a closed form: the $M$ matrix means that the likelihood term is not isotropic, which couples the optimization problems associated with the various singular values and destroys the simple structure allowing for a simple solution.
However, as famously demonstrated by the ISTA algorithm, the optimization problem can be solved by iterative application of the nuclear norm proximal operator, which may be viewed as proximal gradient descent.
While \citet{pereyra2016proximal} primarily studies algorithms which use the proximal operator of the entire posterior, they also mention algorithms which use a proximal gradient step as the mean of a Gaussian proposal.
This is the approach we take in simulating from the posterior in our matrix completion problems.
Here, $\delta$ functions as a gradient descent step size.
Again, we adapt it as in Section \ref{sec:prior}.

\subsection{Additional Comparators}

\begin{table}[h]
    \centering
    \begin{tabular}{lrrr}
    \toprule
     & MSE & CI \\
    \midrule
    PMF & 0.84 & (0.76, 0.91) & 0.69 \\
    SMG & 0.57 & (0.54, 0.60) & 0.94 \\
    NP & 0.48 & (0.46, 0.49) & 1.00 \\
    NND & 0.55 & (0.54, 0.56) & 1.00 \\
    \bottomrule
    \end{tabular}
    \label{tab:others}
\end{table}

Though the focus of this article is to elucidate the probability distribution associated with the nuclear norm, we here briefly provide a preliminary comparison to other existing low-rank inference procedures (see Table \ref{tab:others}) on our matrix denoising benchmark.
``PMF" refers to \citet{salakhutdinov2008bayesian}, using $R=10$ as is the default in the MATLAB code provided.
``SMG" refers to \cite{yuchi2023bayesian}, using the default $R=30$, also a default of the provided MATLAB code.
NP and NND are the normal product and nuclear norm distributions studied in this article.
The table gives the average performance of the methods across all datasets. 
The first column gives Mean Squared Error and the second a confidence interval thereon.
The final column gives the observed proportion of observations where a method outperformed the naive method of simply predicting the noisy method.
Astonishingly, the proposed methods were not once observed doing worse than this naive procedure.

\end{document}